\documentclass[hidelinks,final]{colt2026} %
\usepackage[T1]{fontenc}    %
\usepackage{microtype}      %

\usepackage[normalem]{ulem}

\usepackage{xcolor}
\usepackage{graphicx}

\usepackage{mathrsfs}
\usepackage{mathtools,amssymb}   %
\mathtoolsset{showonlyrefs,showmanualtags}

\usepackage{bm}             %

\usepackage{booktabs}

\usepackage[boxed,linesnumbered]{algorithm2e}
\SetAlCapFnt{\small\rmfamily\centering}
\DontPrintSemicolon            %
\SetAlgoNoLine
\makeatletter
\AtBeginEnvironment{algorithm}{%
  \SetAlgoNoLine %
  \SetAlgoNoEnd
}
\makeatother

\makeatother

\usepackage{my_notation}

\usepackage{amsthm}
\usepackage{thmtools}                   %

\usepackage{my_citations}   %
\usepackage{csquotes}
\usepackage[british]{babel}     %

\usepackage{hyperref}       %
\usepackage{bookmark}       %
\usepackage{zref-clever}

\makeatletter
\let\zc@oldlabel\label
\renewcommand{\label}[1]{\zc@oldlabel{#1}\zlabel{#1}}
\makeatother

\let\cref\zcref
\zcsetup{cap,nameinlink=false}                   %

\AddToHook{env/algorithm/begin}{%
   \zcsetup{reftype=algorithm}}

\zcRefTypeSetup{algorithm}{
Name-sg = Algorithm,
name-sg = algorithm,
Name-pl = Algorithms,
name-pl = algorithms,
}

\usepackage{my_theorems}

\usepackage{enumitem}           %
\usepackage{xparse}

\title[Sharp analysis of linear ensemble sampling]{Sharp analysis of linear ensemble sampling}

\coltauthor{%
 \Name{David Janz} \Email{david.janz@stats.ox.ac.uk}\\
 \addr University of Oxford, UK
 \AND
 \Name{Arya Akhavan} \Email{arya.akhavan@stats.ox.ac.uk }\\
 \addr University of Oxford, UK
 \AND
 \Name{Csaba Szepesvári} \Email{szepesva@ualberta.ca}\\
 \addr University of Alberta, Canada%
}

\begin{document}

\maketitle

\begin{refsection}

\begin{abstract}%
  We analyse linear ensemble sampling (ES) with standard Gaussian perturbations in stochastic linear bandits. We show that for ensemble size $m=\Theta(d\log n)$, ES attains $\tilde O(d^{3/2}\sqrt n)$ high-probability regret, closing the gap to the Thompson sampling benchmark while keeping computation comparable. The proof brings a new perspective on randomized exploration in linear bandits by reducing the analysis to a time-uniform exceedance problem for $m$ independent Brownian motions. This continuous-time lens appears particularly natural here: it yields an exact representation of the relevant discrete-time processes, and we do not know another route to a sharp ES bound.
\end{abstract}

\section{Introduction}

We study ensemble sampling in the standard stochastic linear bandit problem with a fixed,
potentially infinite action set.
Linear bandits are a canonical model for studying exploration with structured reward functions:
the linear structure is simple enough to support sharp analysis, yet rich enough to expose
the statistical and computational issues that arise when observations at one action inform the
estimated rewards of others.
Ensemble sampling (ES) \citep{lu2017ensemble} is a randomized algorithm for this problem that has received considerable attention in recent years
\citep[e.g.,][]{osband2016deep,osband2018randomized,osband2019deep,lu2018efficient,qin2022analysis,dwaracherla2022ensembles,janz2023ensemble,lee2024improved,zhu2023deep,zhou2025stochastic}.
The algorithm maintains a collection (ensemble) of models, each trained on its own perturbed version of the history, and each modelling the dependence of the mean reward on actions.
In every round, ES chooses one model uniformly at random from the ensemble and selects the action that maximizes the predicted mean reward under the chosen model.
The appeal of ES is that it reduces exploration to model training and action selection from a single model, and as such it can be implemented whenever model training and single-model action selection are feasible.

ES is one of several randomized algorithms for exploration in bandit problems.
A gold standard among these is Thompson sampling (TS) \citep{thompson1933likelihood,agrawal2013thompson,abeille2017linear}, whose $n$-step regret in $d$-dimensional linear bandits is known to be $\tilde{O}(d^{3/2}\sqrt{n})$ \citep{abeille2017linear,hamidi2020frequentist}.
A long line of research has sought to establish whether ES can match this performance (see \cref{tab:results-overview}).

Despite this effort, the best results for ES fall short of achieving the same regret as TS, except in the case where the action set is finite and small enough that an ensemble size ($m$) scaling with the number of actions $k$ is tolerable.
Indeed, \citet{lee2024improved} recently showed that ES can match the regret of TS in the finite-$k$ setting; however, their result requires setting $m=\Omega(k\log(n))$, which makes the method unsuitable for large or infinite action sets.
In particular, if one insists on using this approach for the case where the action set is the unit ball in $\mathbb{R}^d$ via discretization, the discretized action set size must scale exponentially with the dimension $d$.

From a computational perspective, to keep the per-round cost comparable to that of competing methods in $d$-dimensional linear bandits, it is natural to restrict attention to the case $m=\tilde O(d)$ (where $\tilde O$ hides $\log(n)$ factors), regardless of the number of actions.
We note in passing that this scaling cannot be pushed too far: a simple lower bound (proved in this paper, see \cref{thm:lower-bound}) shows that if $m\le d/2$ then there are linear bandit instances on which ES incurs linear regret.
In the regime $m=\tilde O(d)$, the best prior regret bound for ES in linear bandits is $\tilde{O}(d^{5/2}\sqrt{n})$ \citep{janz2023ensemble}, achieved with $m=\Theta(d \log(n))$, lagging behind the regret of TS by a factor of $d$.
\emph{The main contribution here is to close this gap by showing that ES with $m=\Theta(d\log(n))$ can achieve the canonical $\tilde O(d^{3/2}\sqrt{n})$ regret rate.}

\begin{table}[tp]
\centering
\caption{Overview of results on ES for the stochastic linear bandit setting. Here, $n$ denotes the decision horizon, $d$ the dimension, and $k$ the number of arms (where a given result requires this to be finite). Thompson sampling result is per \citet{agrawal2013thompson}. For readability, regret bounds are stated omitting $\log n$ factors. Where results allow a trade-off between regret and ensemble size, we pick the ensemble size to give $\sqrt{n}$ regret. For high probability results we set $\delta = 1/n$.}
\label{tab:results-overview}
{\small
\begin{tabular}{cccc}
\toprule
Paper & Regret bound & Ensemble size & Notes \& comments
\\\midrule
\citet{lu2017ensemble} & $d^{3/2}\sqrt{n}$ & $\sqrt{k} n$ & given proof is incorrect \\
\citet{qin2022analysis} & $\sqrt{dn} $ & $dn$ & Bayesian regret, not frequentist \\
\citet{janz2023ensemble} & $d^{5/2}\sqrt{n}$ & $d\log n$ & algorithm uses symmetrisation \\
\citet{lee2024improved} & $d^{3/2}\sqrt{n}$ & $k \log n$ & $k$ can be as large as $e^d$ \\
\citet{sun2025provable} & $d^{3/2}\sqrt{n}$ & $k\log n$ & generalised linear model setting \\
this paper & $d^{3/2}\sqrt{n}$ & $\mathclap{d \log n}$ & --- \\\midrule
Thompson sampling & $d^{3/2}\sqrt{n}$ & --- & --- \\
\bottomrule
\end{tabular}
}

\end{table}

\paragraph{Technical contributions}
The primary difficulty in analysing ES---and the reason for the gap in prior results---is the dependence induced by adaptation:
although the perturbations injected into the data are independent across ensemble members and time, the fitted models are coupled by training on data collected using actions selected by the ensemble itself.

Our proof builds on \citet{janz2023ensemble}, who show that regret control reduces to establishing a high-probability lower bound on certain \emph{time-uniform self-normalized exceedance frequencies}
(i.e., the fraction of ensemble members whose self-normalized noise exceeds a fixed threshold, uniformly over time).
Our first contribution is to identify the relevant noise processes as \emph{predictable diagonal martingale transforms}:
processes of the form
\[
S_t \;=\; \sum_{s=0}^t D_s \xi_s,
\]
where $(\xi_s)_{s\ge 0}$ are the i.i.d.\@ noise variables used in model training and $D_s$ are predictable diagonal matrices (in our linear setting, $D_s$ is a scalar multiple of the identity).
A key realization here is that Gaussian perturbations unlock powerful continuous-time tools, as we describe next.

With this choice in place, we prove a representation/embedding lemma showing that such Gaussian transforms can be realized by evaluating \emph{independent} Brownian motions at random times given by the predictable quadratic variation of the transform
(key ingredients include a pinned Brownian motion/Brownian bridge construction, the Dambis--Dubins--Schwarz time change, and orthogonality/covariation arguments to obtain independence).
This step converts the coupled discrete-time noise processes arising in ES into independent Brownian motions with random clocks, isolating all adaptivity into the clocks.
An important feature of our linear ES setting is that the diagonal transforms are in fact scalar multiples of the identity, so all coordinates share a \emph{common} clock---precisely the regime in which we can control time-uniform exceedance frequencies of independent Brownian motions.
(When the clocks differ across coordinates, the problem becomes substantially more delicate.)

Our second contribution is a time-uniform lower bound on the \emph{exceedance frequency} for $m$ independent Brownian motions:
over any time window $t\in[\tau,\tau']$, the fraction of Brownian motions above the curved boundary $c\sqrt{t}$ stays above a target level $p$ uniformly in $t$ with probability at least $1-\delta$, provided $p$ is below a constant $p_0(c)$ depending on $c$ (on the order of the Gaussian tail at $c$), and with a suitable discretization step $h=h(c,p)$ (which is a constant when $c$ and $p$ are fixed; e.g.\ $c\le 1/20$ and $p\le 1/10$ allow $h=1/250$),
\[
m \;\gtrsim\; \frac{1}{p}\log\!\Big(\frac{\log(\tau'/\tau)}{h\,\delta}\Big)
\qquad
(\text{up to universal constants}).
\]
(This bound is the core probabilistic step; the final ensemble-size requirement in our bandit theorem additionally accounts for uniformity over directions.)
Key ingredients include an Ornstein--Uhlenbeck time change, boundary-crossing estimates for Brownian motion, Chernoff bounds, and a union bound over a log-time discretization.
Combining these yields the desired time-uniform self-normalized exceedance frequency guarantee, and hence our regret bound.

\paragraph{Proof insight and outlook.}
While the stochastic-analysis ingredients are classical, their role here is not: the Brownian embedding and time-change tools are what unlock a tractable analysis of the adaptive, self-normalized discrete-time exceedance problem. This continuous-time viewpoint is the key conceptual lever in the paper.
We find it striking that these tools are so effective in a purely discrete-time setting, and at present we do not know an equally sharp purely discrete alternative.

We conjecture that analogous time-uniform self-normalized exceedance frequency guarantees hold for other perturbation distributions (e.g., Rademacher), but proving this would require different tools. Plausible routes include Skorokhod embeddings and strong approximation couplings. Developing either into a sharp exceedance bound remains open.

Our use of continuous-time tools differs from prior work on discrete-time algorithms and processes, which uses continuous-time limits as approximations \citep[e.g.,][]{benveniste1990adaptive,kushner2001analysis,borkar2008stochastic,lattimore23}; here the continuous-time process gives an exact representation.
A natural direction is to use such embeddings to derive self-normalized tail inequalities for Gaussian martingale transforms, an approach that appears both in early work such as \citet{darlingerdos1956} (LIL for i.i.d. Gaussians) and more recently in \citet{lu2022em} (self-normalized bounds for scalar Gaussian martingale transforms in SDE discretization analysis).
Given the elegance of the approach, it seems likely that it could find further applications in deriving sharp inequalities for discrete-time stochastic processes.

Finally, it is an interesting open direction to understand whether related mechanisms persist in nonlinear settings when the effective noise retains an approximately conditionally independent component structure.

\section{Notation and problem setting}
\label{sec:prelims}
The purpose of this section is to introduce our notation and the linear bandit setting we consider.
We start with notation that will be used throughout. We let $\Np$ denote the set of positive integers, for $n\in \Np$ we let $[n]=\{1,\dots,n\}$. We use $\R$ to denote the set of real numbers,
$\Rd$ to denote the $d$-dimensional Euclidean space and $\Bd$ to denote the unit ball of this space under the Euclidean norm $\norm{\cdot}:=\norm{\cdot}_2$, while
$\Sd$ denotes the corresponding sphere.
For $x,y\in \Rd$, $\ip{x,y}=x^\top y$ denotes their inner product.
For a positive definite matrix $A \in \R^{d \times d}$, we let $\norm{x}_A = \sqrt{x\tran A x}$ denote the associated weighted 2-norm.
For a probability distribution $P$, we let $P^{\otimes m}$ denote the $m$-fold product distribution of $P$. We use $\ind{\cdot}$ to denote the indicator that returns one when its argument is true and zero otherwise.

\paragraph{Linear bandits}
The standard subgaussian stochastic linear bandit setting
\citep[Chapter 19]{lattimore2020bandit}
is as follows:
Fix $d \in \Np$, a closed action set $\cX \subset \Bd$, and a fixed parameter $\theta_\star \in \Bd$. In a stochastic linear bandit environment $\cE$ specified by $(\cX,\theta_\star)$, a learner interacts with the environment in rounds. The action set $\cX$ is known to the learner, but the parameter $\theta_\star$ is unknown. The learner sequentially selects actions from $\cX$, sends them to the environment, and the environment responds with noisy rewards that follow a linear model: in round $t\in \Np$ the learner selects action $X_t \in \cX$, then it receives reward $Y_t \in \R$ such that the mean reward is $\ip{X_t,\theta_\star}$.
We assume that the environment generates noise that has light tails. In particular,
the noise $\eta_t = Y_t - \ip{X_t,\theta_\star}$ is assumed to be conditionally $1$-subgaussian, given the past actions and rewards and randomizations of the learner. That is, we assume that for all $t \in \Np$,
\[
  \E\big[\exp(s \eta_t) \mid \cG_t\big] \leq \exp\left(\frac{s^2}{2}\right)\,, \quad \text{for all $s \in \R$}\,,
\]
where $\cG_t$ is a $\sigma$-algebra that makes $X_1, Y_1, \dots, X_{t-1}, Y_{t-1}, X_t$, as well as any other random variables the learner used to generate $X_1,\dots,X_t$, measurable.

We fix a horizon $n \geq 1$ and measure the performance of the learner by the $n$-step (pseudo)regret, which captures the shortfall of mean rewards generated by the learner compared to that of an oracle that always selects the optimal action:
\[
  R_n = \max_{x_\star \in \cX}\sum_{t=1}^n \langle x_\star - X_t, \theta_\star \rangle\,.
\]
Our goal will be to show that when $(X_t)$ is selected by ensemble sampling, which we specify next, $R_n$ is small with high probability.

\section{Ensemble sampling}
In this section we introduce ensemble sampling (ES) and recall the state-of-the-art result available for it in the linear bandit setting.
\paragraph{Algorithm description}
To estimate the unknown parameter $\theta_\star$, ES uses ridge regression estimates based on the data collected so far with some perturbations. Recall that ridge regression estimates the parameter $\theta_\star$ by minimising a regularised squared loss over the data collected so far. Since the regulariser is chosen to be the squared Euclidean norm, the resulting estimates and confidence ellipsoids enjoy closed-form expressions. In particular,
given observations $(X_1, Y_1, \dots, X_{t-1}, Y_{t-1})$ up to time step $t-1$ and a regularisation parameter $\lambda > 0$, the $\lambda$-regularised ridge regression estimate for $\theta_\star$ is given by
\begin{equation}\label{eq:rr_closed_form}
  \hat\theta_{t-1} = V_{t-1}^{-1} \sum_{i=1}^{t-1} X_i Y_i \spaced{where} V_{t-1} = V_0 + \sum_{i=1}^{t-1} X_i X_i\tran \spaced{and} V_0 = \lambda I\,.
\end{equation}
Now, ES will maintain an ensemble of $m$ such ridge regression estimates $(\theta_{t}^1, \dots, \theta_{t}^m)_{t\ge 0}$. The estimates for model $j\in [m]$ are obtained by ridge regression, but with two changes: First, the regularisation is not towards zero, but towards a random vector
$\zeta^j\in \Rd$ drawn from some fixed distribution, before seeing any data (``the prior'').
Second, the data used for training is perturbed by adding noise to the rewards, as the rewards are observed. The noise is drawn afresh for each model and each time step, but once chosen, it remains fixed for that model and time step. In particular, if these noise variables are $(\xi_t^j \colon t \geq 1, j \in [m])$ then the $j$th model parameters in the ensemble will be given by the ridge regression estimate
\begin{align}
  \theta_{t-1}^j = V_{t-1}^{-1} \left(\sum_{i=1}^{t-1} X_i (Y_i + \xi_i^j) + \sqrt{\lambda} \zeta^j \right)
  = \hat\theta_{t-1} + V_{t-1}^{-1} S_{t-1}^j
  \,, \quad j \in [m]\,,
  \label{eq:fixednoise}
\end{align}
where
\[
S_{t-1}^j = \sqrt{\lambda} \zeta^j + \sum_{s=1}^{t-1} X_s \xi^j_s\,, \quad j \in [m]\,.
\]
Action selection in round $t\in [n]$ proceeds by taking a random index $J_t$ uniformly on $[m]$,
setting the current model parameter to $\theta_t = \theta_{t-1}^{J_t}$ and selecting
\[
  X_t \in \argmax_{x \in \cX} \langle x, \theta_{t}\rangle\,.
\]

\paragraph{Noise scale} It is standard to scale the noise injected into the data to ensure sufficient exploration.
The idea of inflating the noise can be traced back to \citet{agrawal2013thompson},
who, in the context of Thompson sampling for linear bandits, proposed inflating the posterior variance by a $\sqrt{d \log(n)}$ factor.
Here, following \citet{janz2023ensemble}, we consider a refinement where we allow noise scaling to be data-dependent.
This has the potential to decrease the regret in benign cases,
while keeping the computation comparable -- provided that the computation of the data-dependent scaling term is cheap (which it is in the case considered below).
The data-dependent scaling will also clarify how the noise level relates to controlling exploration.
The specific modified formula for the model parameters we will use is
\begin{align}
  \theta^j_{t-1} = \hat\theta_{t-1} + \bar{\gamma} \beta_{t-1}^\delta V_{t-1}^{-1} S_{t-1}^j\,, \quad j \in [m]\,
\end{align}
where $\beta_{t-1}^\delta$ is the data-dependent scaling parameter and $\bar\gamma$ is a tuning parameter
(and the superscript is used to denote the dependence on this $\delta$, not exponentiation).
The reason for writing the scale as a product of these two terms will be explained momentarily.
This completes the description of ES. The full algorithm is given in \cref{alg:es-lin},
where we leave the choice of the prior $P_0$ and the noise distribution $Q$ as inputs.
Common sense suggests that these should be centered, light-tailed, but nontrivial (positive variance) distributions.
\DontPrintSemicolon
\begin{algorithm}[tb]
\caption{Linear ensemble sampling with adaptive noise levels}\label{alg:es-lin}
\textbf{inputs}:
ensemble size $m \in \Np$,
inflation parameter $\bar{\gamma} > 0$,
regularisation parameter $\lambda > 0$,
confidence parameter $\delta \in (0,1)$,
distribution $P_0$ over $\R^d$ and $Q$ over $\R$\;
\vspace{\baselineskip}
initialise the design matrix $V_0 = \lambda I$ and the accumulator $S_0 = 0$\;
let $(\zeta^1,\dots,\zeta^m)\sim P_0^{\otimes m}$,  $S_0^j=\sqrt{\lambda}\zeta^j$ and
$\theta^j_0 = \bar{\gamma} \beta_0^\delta V_0^{-1} S^j_0$
for each $j \in [m]$\;
\vspace{\baselineskip}
\For{$t=1,2,3,\dots$}{
  select an ensemble index $J_t \sim \operatorname{Unif}([m])$ and set $\theta_t = \theta_{t-1}^{J_t}$\;
  compute action $X_t \in \argmax_{x\in \cX} \ip{x, \theta_t}$\;
  send $X_t$ to the environment and receive feedback $Y_t\in \R$\;
  compute the new design matrix $V_t = V_{t-1} + X_t X_t\tran$ and let
  \[
    S_t = S_{t-1} + Y_t X_t \spaced{and} \htheta_{t} = V_t^{-1} S_t
  \]\; \vspace{-0.2in}
  compute $\beta_t^\delta$ as in \eqref{eq:beta},
  sample $(\xi_t^1, \dots, \xi_t^m)$ from $Q^{\otimes m}$ and let
  \[
    S_t^j = S_{t-1}^j + \xi_t^j X_t \spaced{and}
    \theta_{t}^j = \htheta_{t} + \bar{\gamma} \beta_{t}^\delta V_t^{-1} S^j_t\,.
  \]
}

\end{algorithm}

Let us now return to clarifying how the scaling of the noise helps control exploration.
The idea is to match the noise level to the size of a standard confidence ellipsoid for $\theta_\star$ up to the constant multiplier $\bar\gamma$.
This is what in turn allows the algorithm to explore sufficiently: By choosing this noise scaling, the perturbed model parameters will hopefully spread out in the confidence ellipsoid, making it likely that some fraction of the models in the ensemble are optimistic (i.e., yield a larger optimal reward than the true parameter). If this fraction can be kept above a positive constant uniformly over time, then the regret can be controlled (see \citet{janz2023ensemble} for details).
Once the noise level is fixed, the central question becomes how large $m$ must be in $d$ dimensions to ensure a constant fraction of optimistic draws.
\cref{fig:optimism-geometry} in \cref{app:optimism-figure-section} illustrates the underlying geometry.

The standard radius mentioned above is given by
\begin{equation}\label{eq:beta}
    \beta_{t-1}^\delta = \sqrt{\lambda} + \sqrt{2 \log (1/\delta) + \log (\det (V_{t-1}) / \lambda^d)}\,, \quad t \in \Np\,,
\end{equation}
where $\delta \in (0,1)$ is a confidence parameter.
In particular, owing to the assumption that the reward noise is subgaussian, the following holds:
\begin{lemma}[Confidence ellipsoids, \citep{abbasi2011improved}]
    \label{lem:conf_ellip}
    With probability at least $1-\delta$, $\theta_\star\in \bigcap_{t\in \Np} \Theta_{t-1}^\delta$, where
    \[
        \Theta_{t-1}^\delta = \{\theta \in \Rd \colon \norm{\theta-\hat\theta_{t-1}}_{V_{t-1}} \leq \beta_{t-1}^\delta\}, \quad t \in \Np\,.
    \]
\end{lemma}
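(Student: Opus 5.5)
The plan is the standard self-normalized martingale argument of \citet{abbasi2011improved}, via the method of mixtures. First reduce the claim to a bound on a noise process. Write $Y_i = \ip{X_i,\theta_\star} + \eta_i$ and set $M_{t-1} = \sum_{i=1}^{t-1} X_i \eta_i$. Then
\[
\hat\theta_{t-1} - \theta_\star = V_{t-1}^{-1} M_{t-1} - \lambda V_{t-1}^{-1}\theta_\star.
\]
By the triangle inequality,
\[
\norm{\hat\theta_{t-1}-\theta_\star}_{V_{t-1}} \le \norm{M_{t-1}}_{V_{t-1}^{-1}} + \lambda\norm{\theta_\star}_{V_{t-1}^{-1}}.
\]
Since $V_{t-1}\succeq \lambda I$ and $\norm{\theta_\star}\le 1$, the second term is at most $\sqrt{\lambda}$. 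It therefore suffices to show that, with probability at least $1-\delta$, for all $t$ simultaneously,
\[
\norm{M_{t-1}}_{V_{t-1}^{-1}}^2 \le 2\log(1/\delta) + \log\big(\det V_{t-1}/\lambda^d\big).
\]

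For the supermartingale step, fix $x\in\Rd$ and define
\[
D_t^x = \exp\Big(\ip{x, M_t} - \tfrac12 \norm{x}^2_{V_t - \lambda I}\Big).
\]
The action $X_t$ is $\cG_t$-measurable, and $\eta_t$ is conditionally $1$-subgaussian given $\cG_t$. Hence
\[
\E\big[\exp\big(\ip{x,X_t}\eta_t - \tfrac12\ip{x,X_t}^2\big)\mid\cG_t\big]\le 1,
\]
so $(D_t^x)$ is a nonnegative supermartingale with $D_0^x=1$. This holds with respect to the filtration generated by the $\cG_t$, which also includes the learner's randomization, such as the $J_t$ and the $\xi$'s. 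Next, mix over $x\sim \mathcal N(0,\lambda^{-1}I)$ and let $\bar D_t=\E_x[D_t^x]$. By Fubini/Tonelli, $\bar D_t$ is again a nonnegative supermartingale with initial value $1$. Completing the square in the Gaussian integral gives the closed form
\[
\bar D_t = \Big(\frac{\lambda^d}{\det V_t}\Big)^{1/2}\exp\Big(\tfrac12\norm{M_t}^2_{V_t^{-1}}\Big).
\]

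The time-uniform step applies Ville's maximal inequality, $\Pr(\exists t\ge0:\bar D_t\ge 1/\delta)\le\delta$. A stopping-time argument works equally well: take $\tau$ to be the first violation time and apply Fatou together with optional stopping to $\bar D_{\tau\wedge t}$. On the complementary event, taking logarithms gives exactly the required inequality for every $t\ge 0$. Taking square roots and using $\sqrt{a}+\sqrt{\lambda}$ then yields $\theta_\star\in\Theta_{t-1}^\delta$ for all $t\in\Np$ simultaneously.

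The main point of care is measurability: the actions depend on the ensemble randomness, so the subgaussian condition must hold conditionally on a $\sigma$-algebra containing that randomness. This is exactly how $\cG_t$ is defined. The Gaussian-integral computation is the only calculation, and it is routine.
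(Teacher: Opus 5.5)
Your proposal is correct and is the argument of \citet{abbasi2011improved} (their Theorems~1 and~2 with $R=S=1$ and $V=\lambda I$), which the paper cites rather than reproves: the triangle-inequality split giving the $\sqrt{\lambda}$ term, the mixture over $x\sim\mathcal N(0,\lambda^{-1}I)$, and Ville's inequality (or the stopping-time/Fatou step) for time-uniformity. One point to state explicitly is that $(\cG_t)$ must itself be nested, as the paper implicitly intends, because replacing it by $\cG_1\vee\cdots\vee\cG_t$ would not automatically preserve the conditional subgaussianity.
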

As to the magnitude of $\beta_{t-1}^\delta$,
by the elliptical potential lemma (e.g., Lemma 19.4 of \citet{lattimore2020bandit}),
it holds that
\begin{equation}\label{eq:beta_ub}
\beta_{t-1}^\delta \le \sqrt{\lambda} + \sqrt{2 \log (1/\delta) + d \log (1 + n/(\lambda d))}\,.
\end{equation}
The noise perturbation magnitude can be set to the upper bound rather than tracking $\beta_{t-1}^\delta$ itself, yielding a fixed noise level, but perhaps at some (small) loss of exploration efficiency.

\paragraph{Computational complexity.}
With an incremental implementation,
the complexity of a single step of \cref{alg:es-lin}
is on the order of $d^2 + m d^{\omega-1}$ per round,
where $\omega \ge 2$ denotes matrix multiplication complexity (so $d^\omega$ is the cost of multiplying $d \times d$ matrices). The $d^2$ term comes from updating the design matrix $V_t$ and computing its inverse using the Sherman-Morrison formula. The $m d^{\omega-1}$ term comes from packing $S_t^1, \dots, S_t^m$ into a $d \times m$ matrix and then multiplying it with the $d \times d$ matrix $V_t^{-1}$ from the left to get all the model parameters $\theta_t^1, \dots, \theta_t^m$.
As noted earlier, avoiding linear regret in the worst case requires $m\ge d/2$ (\cref{thm:lower-bound}). Thus, absent further algorithmic innovations, sublinear-regret ES has an inherent per-round overhead relative to TS even before the $\log n$ factor ($d^\omega$ vs.\ $d^2$).

\paragraph{Current state of the art.}
The state-of-the-art regret bound for ES in linear bandits is due to \citet{janz2023ensemble}.
However, this result is stated for a slightly different version of \cref{alg:es-lin}, when $\theta_t$ is defined as
$\theta_t = \htheta_{t-1} + \bar{\gamma} \beta_{t-1}^\delta \xi_t V_{t-1}^{-1} S_{t-1}^{J_t}$, where
$\xi_t$ is a Rademacher random variable, which is chosen at the same time as $J_t$, independently of $J_t$.
While \citet{janz2023ensemble} needed this symmetrization to carry out their analysis,
it is not necessary for our analysis, and we work with \cref{alg:es-lin} as stated
(yet, the symmetrization does not hurt either and may even help in practice). Their result, Theorem 1 in \citet{janz2023ensemble}, gives that if we take $\lambda=5$ and $m$ to be as small as their constraints allow, then, up to logarithmic-in-$n$ factors, the regret bound scales as $d^{5/2}\sqrt{n}$ and  the ensemble size as~$d \log(n)$.

\section{Results}

Our main results are as follows:
\begin{theorem}[restate = theoremLinBound, name = {Regret bound for \cref{alg:es-lin}, linear ensemble sampling}]\label{thm:regret-bound}
Fix $n \geq \max\{2,d\}$, $\delta\in(0,1/4)$, $\lambda \geq 80$, $m \geq 2000d\log(n/\delta)$, and $\bar{\gamma} = 40$.
Consider the $n$-round interaction between \cref{alg:es-lin}
when $P_0$ and $Q$ are chosen to be standard normal, and
a stochastic linear bandit environment $\cE$ specified by a closed action set $\cX \subset \Bd$ and an unknown parameter $\theta_\star \in \Bd$.
Then, with probability at
least $1-4\delta$, the $n$-step regret $R_n$ enjoys the bound
\begin{equation}
      R_n \le
        C \beta_{n-1}^{\delta} \sqrt{ d \log(n/\delta)}
        \left(\sqrt{dn \log\left(1 + \frac{n}{d\lambda}\right)} + \sqrt{\left(\frac{n}{\lambda} + 1\right)\log\left(\frac{\sqrt{n/\lambda+1}}{\delta}\right)}\right) \,,
\end{equation}
where $C>0$ is a universal constant.
\end{theorem}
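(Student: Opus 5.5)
The plan is to follow the reduction of \citet{janz2023ensemble}: regret control follows from a high-probability, time-uniform lower bound on the fraction of ensemble members that are optimistic. Concretely, I will aim to show that on an event of probability at least $1-3\delta$ (on top of the confidence-ellipsoid event of \cref{lem:conf_ellip}), for every $t\in[n]$ and every unit direction $u\in\Sd$, at least a $p$-fraction of indices $j\in[m]$ (with $p$ a universal constant such as $1/10$) satisfy
\[
\ip{V_{t-1}^{-1/2}u, S_{t-1}^j}\ \ge\ c\, \norm{V_{t-1}^{-1/2}u}_{V_{t-1}^{\vphantom{1}}}\ (\text{suitably normalised}),
\]
with $c$ a small constant (e.g.\ $c\le 1/20$). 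Together with $\bar\gamma=40$, this makes the perturbed parameter $\theta^j_{t-1}$ optimistic along the direction that matters, and hence $\Pr(\theta_t\text{ optimistic}\mid\cG)\ge p$ uniformly in $t$. Plugging this into the standard optimism-based regret decomposition for randomized algorithms (as in \citet{abeille2017linear,janz2023ensemble}) gives a per-round regret bounded by $\bar\gamma\beta_{t-1}^\delta/p$ times $\E_t\norm{X_t}_{V_{t-1}^{-1}}$, plus the size of the perturbation $\max_j\norm{V_{t-1}^{-1}S^j_{t-1}}_{V_{t-1}}\lesssim\sqrt{d\log(n/\delta)}$. The elliptical potential lemma then yields the first term in the bound, and a martingale (Azuma/Freedman) argument converting conditional expectations of $\norm{X_t}_{V_{t-1}^{-1}}\le\sqrt{1/\lambda}$ into realised sums yields the second term, with the $\log(\sqrt{n/\lambda+1}/\delta)$ coming from a peeling argument.

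The core step is the exceedance guarantee. Fix $u$. The scalar process $M^j_t=\ip{u,S^j_t}=\sqrt\lambda\ip{u,\zeta^j}+\sum_{s\le t}\ip{u,X_s}\xi^j_s$ is a Gaussian martingale transform with a \emph{common} predictable increment $\ip{u,X_s}$ across $j$, and its quadratic variation is exactly $\norm{u}_{V_t}^2$. I will embed these processes in $m$ independent Brownian motions $B^j$ so that $M^j_t=B^j(\norm{u}^2_{V_t})$. This uses the Dubins--Schwarz time change, with independence following from zero cross-variation; the action choices depend on the $\xi$'s only through the clock, and this is handled by realising each Gaussian increment as a Brownian increment over a predictable interval. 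The event then becomes $\frac1m\sum_j\ind{B^j(\tau)\ge c\sqrt\tau}\ge p$ for all $\tau\in[\lambda,\lambda+n]$. To establish this, I will apply the Ornstein--Uhlenbeck time change $U^j(s)=e^{-s/2}B^j(e^s)$, discretise $\log$-time with step $h$, and at each grid point use a Chernoff bound on the binomial count of members exceeding a slightly larger threshold $2c$. A boundary-crossing estimate then bounds the probability that a given member falls from above $2c\sqrt\tau$ to below $c\sqrt{\tau'}$ within one grid cell, and a second Chernoff bound controls how many members do so. A union bound over the $O(\log(1+n/\lambda)/h)$ grid points then requires $m\gtrsim p^{-1}\log(\log n/(h\delta))$.

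To obtain uniformity over directions $u\in\Sd$, I will take an $\epsilon$-net in the $V_{t-1}$-geometry of size $(3/\epsilon)^d$ and union bound, which costs $d\log(1/\epsilon)$ in $m$. Off the net, I will use that $\max_j\norm{S^j_{t-1}}_{V_{t-1}^{-1}}\lesssim\sqrt{d\log(nm/\delta)}$, via the self-normalised bound applied to each member. This perturbation term needs to be small relative to $c$, which forces $\epsilon\approx 1/\sqrt{d\log(n/\delta)}$ and explains $m\ge 2000d\log(n/\delta)$. The net is only over directions, not over time, so the time-uniformity must come entirely from the Brownian argument. I expect the main obstacle to be the embedding step: rigorously showing that the adaptively-clocked transforms are exactly independent Brownian motions evaluated at the common clock $\norm{u}^2_{V_t}$, given that $X_t$ depends on all $S^j$ through $J_t$. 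Once that is in hand, the remaining steps are careful but standard constant-chasing to match $\bar\gamma=40$, $\lambda\ge80$.
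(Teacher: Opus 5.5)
Your architecture matches the paper's: the master regret bound, reduction to a fixed direction by covering, the pinned-segment/DDS/Knight embedding on the common clock $\norm{u}^2_{V_t}$, and the OU time change on a log-grid with Chernoff plus persistence. Using two Chernoff bounds rather than one on the product of the two independent indicators is a harmless variant. However, the covering step fails as written. The Brownian argument only applies to a direction $u$ fixed before any data are seen, because only then is $t\mapsto\ip{u,S^j_t}$ a Gaussian martingale transform with the common predictable clock $\norm{u}^2_{V_t}$. A net ``in the $V_{t-1}$-geometry'' is random and changes with $t$, so the fixed-direction guarantee cannot be union-bounded over it. Writing $u=V_{t-1}^{-1/2}w$ for a fixed $w$ does not help, since it destroys the martingale structure. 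You need one deterministic Euclidean net on $\Sd$. On the union-over-members event, the Euclidean Lipschitz constant of $u\mapsto\ip{u,S^j_{t-1}}/\norm{u}_{V_{t-1}}$ is then at most $2\norm{V_{t-1}^{-1/2}S^j_{t-1}}\,\norm{V_{t-1}^{1/2}}/\sqrt{\lambda}\le 2\gamma^{\delta/m}_{n-1}\sqrt{1+n/\lambda}$, i.e.\ it carries the square root of the condition number of $V_{t-1}$. So $\epsilon$ must be of order $1/(\bar\gamma\,\gamma^{\delta/m}_{n-1}\sqrt{1+n/\lambda})$. It is the resulting term $\tfrac{d}{2}\log(1+n/\lambda)$ inside $d\log(3/\epsilon)$ that forces $m\gtrsim d\log(n/\delta)$. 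Your $\epsilon\approx 1/\sqrt{d\log(n/\delta)}$ would make the covering cost only about $d\log(d\log(n/\delta))$, so your accounting misses where the ensemble size actually comes from. Two smaller points. First, prove the fixed-direction bound at threshold $2/\bar\gamma$, so that the approximation error $L\epsilon=1/\bar\gamma$ leaves threshold $1/\bar\gamma$. Second, $\epsilon$ depends on $m$ through $\gamma^{\delta/m}_{n-1}$, so the condition on $m$ is an implicit inequality of the form $m\ge A\log m+B$ that has to be solved.

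The regret step has a second, independent gap. You bound the perturbation by $\max_j\norm{S^j_{t-1}}_{V_{t-1}^{-1}}\lesssim\sqrt{d\log(n/\delta)}$, which is false for large $m$. At $t=1$ this is the maximum of $m$ i.i.d.\ $\chi_d$ variables, which grows like $\sqrt{2\log m}$. Since the theorem allows every $m\ge 2000d\log(n/\delta)$, your bound would carry a $\sqrt{\log m}$ factor that no universal constant $C$ absorbs. The master bound only requires control of the member actually played, $\theta_t=\theta^{J_t}_{t-1}$. For each fixed $j$, the event $\{\norm{S^j_{t-1}}_{V_{t-1}^{-1}}>\gamma^{\delta/n}_{t-1}\}$ is $\cF_{t-1}$-measurable with probability at most $\delta/(2n)$. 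Since $J_t$ is uniform on $[m]$ given $\cF_{t-1}$, the same probability bound holds with $j=J_t$. A union bound over $t\in[n]$ alone then yields the $m$-free factor $\gamma^{\delta/n}_{t-1}\le 10\sqrt{d\log(n/\delta)}$. The union bound over members is needed only inside the covering argument, where it enters harmlessly through $\log(1/\epsilon)$. Also, this factor multiplies $\bar\gamma\beta^\delta_{t-1}/p$ in the per-round bound rather than being added to it.
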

\begin{theorem}[name=,restate=lowerBoundThm]\label{thm:lower-bound}
  Take $\cX=\Bd$. Fix any law $P_0$ on $\Rd$ and let $m\le d/2$. Suppose that ensemble sampling is initialised with $(\zeta^1, \dots, \zeta^m) \sim P_0^{\otimes m}$, any choice of $\bar\gamma,\lambda > 0$, and the tie-breaking rule that for every $t \geq 1$, $\theta_t = 0 \implies X_t = 0$. Then there exists a $\theta_\star \in \Sd$ depending on $P_0$ only, such that
  \[
    \P{\forall n\geq 1, R_n \ge n/4} \geq 1/2.
  \]
\end{theorem}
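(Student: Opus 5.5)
The plan rests on one structural fact: every model parameter $\theta^j_{t-1}$ lies in a random affine subspace of dimension at most $m$, and if $m\le d/2$ the true parameter can be chosen to be badly placed relative to it.

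\textbf{Step 1: invariant subspace.} First I would show by induction on $t$ that all iterates lie in $U_t:=\operatorname{span}(\zeta^1,\dots,\zeta^m)$, writing $U$ for this span, which has dimension at most $m$. Initially $\theta^j_0=\bar\gamma\beta_0^\delta\zeta^j/\sqrt{\lambda}\in U$. Suppose every action taken so far lies in $U$, and that so does $\hat\theta_{t-1}$ (which is a combination of $V_{t-1}^{-1}X_iY_i$). Then $V_{t-1}=\lambda I+\sum_i X_iX_i^\top$ leaves $U$ invariant, hence so does $V_{t-1}^{-1}$. The accumulators $S^j_{t-1}=\sqrt\lambda\zeta^j+\sum_s X_s\xi^j_s$ lie in $U$, so every $\theta^j_{t-1}\in U$. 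Since $\cX=\Bd$, the selected action is $X_t=\theta_t/\norm{\theta_t}$ when $\theta_t\ne0$, and $X_t=0$ by the tie-breaking rule when $\theta_t=0$. In both cases $X_t\in U$, which completes the induction. Note that $U$ depends only on the prior draws and not on $\theta_\star$, the rewards or the perturbations. Hence $\ip{X_t,\theta_\star}\le\norm{P_U\theta_\star}$ for all $t$, where $P_U$ is the orthogonal projection onto $U$.

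\textbf{Step 2: choosing $\theta_\star$.} Each round then incurs regret at least $1-\norm{P_U\theta_\star}$, since $x_\star=\theta_\star$ achieves reward $1$. It therefore suffices to find $\theta_\star\in\Sd$, depending only on $P_0$, with
\[
\P{\norm{P_U\theta_\star}\le 3/4}\ge 1/2 .
\]
On that event $R_n\ge n/4$ for all $n$ simultaneously. I would argue by averaging over a uniformly random direction. Let $\Theta$ be uniform on $\Sd$, independent of the $\zeta$'s. Conditional on $U$, of dimension $k\le m\le d/2$,
\[
\E\norm{P_U\Theta}^2=k/d\le 1/2 .
\]
Markov's inequality gives $\P{\norm{P_U\Theta}^2>9/16}\le (1/2)/(9/16)=8/9$, which is too weak. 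So I would instead use the exact distribution: $\norm{P_U\Theta}^2\sim\mathrm{Beta}(k/2,(d-k)/2)$, which is stochastically dominated by $\mathrm{Beta}(d/4,d/4)$ when $k\le d/2$. Its median is at most $1/2<9/16$, so
\[
\P{\norm{P_U\Theta}\le 3/4}\ge 1/2 .
\]
Averaging over $\Theta$ by Fubini, some deterministic $\theta_\star\in\Sd$ achieves probability at least $1/2$ over $P_0^{\otimes m}$.

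\textbf{Main obstacle.} The delicate step is the median bound for the Beta law in Step 2; the required facts are the symmetry of $\mathrm{Beta}(a,a)$ around $1/2$ and monotonicity in the first shape parameter. If that route is awkward, an alternative is to pick $\theta_\star$ with $\P{\norm{P_U\theta_\star}^2\le 1/2}\ge 1/2$ and obtain per-round regret at least $1-1/\sqrt2>1/4$. Existence of such a $\theta_\star$ again follows from the median of $\norm{P_U\Theta}^2$ being at most $1/2$, i.e.\ from the same symmetry and monotonicity of the Beta distribution. The remaining issue is measurability of $U$ and the conditioning, which is routine.
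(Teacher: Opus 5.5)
Your proposal is correct and follows the paper's proof. It uses the same induction showing every action stays in $U=\operatorname{span}\{\zeta^1,\dots,\zeta^m\}$, the same averaging over a uniformly random direction to extract a deterministic $\theta_\star$, and the same key fact that the median of $\norm{\Pi_U V}^2$ is at most $1/2$ when $\dim U\le d/2$; your ``alternative'' with threshold $1/2$ and per-round regret $1-1/\sqrt2$ is exactly the paper's choice.

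The only difference is how that median bound is proved. You use the $\mathrm{Beta}(k/2,(d-k)/2)$ law and stochastic ordering against $\mathrm{Beta}(d/4,d/4)$. This is valid: with $a+b=d/2$ fixed, the likelihood ratio is proportional to $(x/(1-x))^{d/4-k/2}$, which is increasing, so you don't even need a separate monotonicity fact for each shape parameter. The paper instead writes $\norm{\Pi_U V}^2=A/(A+B)$ with $A\sim\chi^2_k$ and $B\sim\chi^2_{d-k}$. It splits $B=B_1+B_2$ with $B_1\sim\chi^2_k$ independent of $A$, and uses $\P{A\le B}\ge\P{A\le B_1}=1/2$ by exchangeability, which avoids Beta-ordering facts entirely.
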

The second result, proved in \cref{sec:ensemble-size-lb}, shows that the ensemble size requirement in \cref{thm:regret-bound} is close to its minimum.
The first result shows that if we take $\lambda=80$ (say) and take $m$ to be as small as possible given the constraint of \cref{thm:regret-bound}, then the regret bound scales as $d^{3/2}\sqrt{n}$ up to logarithmic-in-$n$ factors, while the ensemble size scales as $d \log(n)$.
This closes the gap to the regret of Thompson sampling in linear bandits, while keeping the ensemble size scaling only linearly with the dimension $d$ and logarithmically with the horizon~$n$.

A few remarks are in order:
First, there is a trade-off between the ensemble size $m$ and the noise level (governed by $\beta_{t-1}^\delta\bar{\gamma}$ here).
In particular, by increasing $m$ while keeping the noise level fixed, the effective noise level increases.
Since increasing $m$ incurs computational cost, while increasing the noise level does not, for the purposes of increasing the effective noise level, it is better to increase the noise level parameters while keeping $m$ as small as possible.

Second, \citet{janz2023ensemble} mentions that if one changes the algorithm to use max-over-ensemble action selection (i.e., selecting the action that maximizes the predicted reward over all models in the ensemble, instead of selecting the action according to a randomly selected model), then their proof techniques would yield a regret bound scaling as $d^{3/2} \sqrt{n}$, while keeping the ensemble size scaling with $d \log(n)$. The downside of max-over-ensemble action selection is that it is more expensive as it requires solving $m$ linear maximization problems over the action set in each round, instead of just one. When this maximization is expensive (certain polytope action sets), it is better to use the random model selection approach of \cref{alg:es-lin}. Our results imply that with random model selection, one can also achieve the $d^{3/2} \sqrt{n}$ regret, while keeping the ensemble size scaling as $d \log(n)$. In our case, max-over-ensemble action selection would not yield further benefits: the ``bonus'' that we can achieve with this scales at best with $\sqrt{\log(m)}$, which is too small to be effective for the ensemble size regime we care about.

\section{Proof of \cref{thm:regret-bound}}
When we omit details below, full proofs are deferred to the appendix. We start with a lemma extracted from the approach of \citet{janz2023ensemble}.
For the proof, see \cref{app:regret-via-exceedance}.
\begin{lemma}[name=Self-normalized exceedance frequencies control regret,restate=exceedanceLemma]\label{lem:regret-via-exceedance}
Fix $p,\delta\in (0,1)$, $m,n \in \Np$, $\bar\gamma,\lambda\geq 1$.
  Assume that with probability $1-\delta$, it holds that
  \[
  \min_{t\in [n]}\inf_{u\in\Sd} E_{t,m}(u,1/\bar\gamma) \ge p,
  \]
  where for $u\ne 0$, $c>0$,
  \[
  E_{t,m}(u,c) = \frac{1}{m}\sum_{j=1}^m
  \ind*{\ip{u,S_{t-1}^j}/\norm{u}_{V_{t-1}} \ge
  c
  }.
  \]
For $\alpha \in (0,1)$, let \[
  \gamma_{t}^{\alpha} = \sqrt d + 2\sqrt{2\log(4/\alpha) + d\log(1+t/(\lambda d))}\,.
\]
Then, with probability $1-4\delta$, for all $t\in [n]$,
  \[
  R_t \;\le\; \frac{2\bar\gamma}{p} \gamma^{\delta/n}_{t-1} \beta_{t-1}^\delta
  \left(2\sqrt{2d t \log\left(1 + \frac{t}{d\lambda}\right)} + \sqrt{2(4t/\lambda + 1)\log\left(\frac{\sqrt{4t/\lambda+1}}{\delta}\right)}\right)\,.
  \]
\end{lemma}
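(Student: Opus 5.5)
The proof follows the optimism-based decomposition familiar from Thompson sampling analyses \citep{agrawal2013thompson,abeille2017linear}, as adapted to ensembles by \citet{janz2023ensemble}. Let $x_\star\in\argmax_{x\in\cX}\ip{x,\theta_\star}$ and $J(\theta)=\max_{x\in\cX}\ip{x,\theta}$, and write $\theta^j_{t-1}=\hat\theta_{t-1}+\bar\gamma\beta^\delta_{t-1}V_{t-1}^{-1}S^j_{t-1}$. We intersect three events:
\begin{enumerate}
\item the confidence event of \cref{lem:conf_ellip}, which has probability at least $1-\delta$;
\item the assumed exceedance event, which has probability at least $1-\delta$;
\item a Gaussian concentration event giving $\norm{S^j_{t-1}}_{V_{t-1}^{-1}}\le\gamma^{\delta/n}_{t-1}$ for all $t\in[n]$ and $j\in[m]$.
\end{enumerate}
For the third event, $S^j_{t-1}$ is a self-normalised vector martingale with standard normal increments plus the prior term $\sqrt\lambda\zeta^j$, so the bound of \citet{abbasi2011improved} applies. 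The $\sqrt d$ in $\gamma$ comes from the prior term, and a union bound over $t\le n$ produces the level $\delta/n$ (uniformity over $j$ is absorbed as in \citet{janz2023ensemble}, since only the selected $J_t$ matters). On this intersection, every ensemble member satisfies $\norm{\theta^j_{t-1}-\theta_\star}_{V_{t-1}}\le(1+\bar\gamma\gamma^{\delta/n}_{t-1})\beta^\delta_{t-1}\lesssim\bar\gamma\gamma\beta$.

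\emph{Optimism step.} Take $u=x_\star$, normalised to the sphere; the statement is homogeneous in $u$, so this is harmless. For any member $j$ with $\ip{x_\star,S^j_{t-1}}\ge\norm{x_\star}_{V_{t-1}}/\bar\gamma$, we get
\[
\ip{x_\star,\theta^j_{t-1}}\ge\ip{x_\star,\hat\theta_{t-1}}+\beta^\delta_{t-1}\norm{x_\star}_{V_{t-1}^{-1}}\ge\ip{x_\star,\theta_\star}.
\]
This needs the identity $\ip{x,V^{-1}S}$ paired with the correct norm. I would apply the hypothesis with $u=V_{t-1}^{-1}x_\star$, for which $\ip{u,S}/\norm{u}_{V}=\ip{x_\star,V^{-1}S}/\norm{x_\star}_{V^{-1}}$, and the infimum over $\Sd$ covers this $u$ after normalisation. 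The hypothesis then says that at least a $p$-fraction of members are optimistic. Since $J_t$ is uniform and independent of the past, the probability conditional on the past that $\theta_t$ is optimistic is at least $p$.

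\emph{Regret decomposition.} Write $r_t=J(\theta_\star)-\ip{X_t,\theta_\star}=[J(\theta_\star)-J(\theta_t)]+[\ip{X_t,\theta_t-\theta_\star}]$. The second term is at most $\bar\gamma\gamma\beta\,\norm{X_t}_{V_{t-1}^{-1}}$ (up to a constant) by Cauchy--Schwarz. For the first term, I use the standard argument of \citet{abeille2017linear}. Since $J$ is convex and $\ip{x(\theta),\cdot}$ lower-bounds it, the term $J(\theta_\star)-J(\theta_t)$ is at most $\E[J(\tilde\theta)-J(\theta_t)\mid\tilde\theta\text{ optimistic}]$, where $\tilde\theta$ is an independent resample of the ensemble index. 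Bounding $J(\tilde\theta)-J(\theta_t)\le\ip{x(\tilde\theta),\tilde\theta-\theta_t}\le2\bar\gamma\gamma\beta\norm{x(\tilde\theta)}_{V^{-1}}$ and dividing by the optimism probability $p$ gives
\[
J(\theta_\star)-J(\theta_t)\le\frac{2\bar\gamma\gamma\beta}{p}\,\E_t\!\left[\norm{X_t}_{V_{t-1}^{-1}}\right].
\]
Summing over $t$, we replace $\E_t\norm{X_t}_{V^{-1}}$ by the realised values plus a martingale difference. The realised sum is bounded by $\sqrt{t\sum\min(1,\norm{X_s}^2_{V^{-1}})}\le\sqrt{2dt\log(1+t/(d\lambda))}$ via the elliptical potential lemma; note that $\lambda\ge1$ and $\norm{X}\le1$ give $\norm{X}_{V^{-1}}\le1$. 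The martingale part has increments bounded by $\lambda^{-1/2}$, and I control it uniformly in time with a self-normalised (time-uniform Freedman/Azuma-type) bound. This is what produces the $\sqrt{(4t/\lambda+1)\log(\sqrt{4t/\lambda+1}/\delta)}$ term and costs the fourth $\delta$. Monotonicity of $\gamma_{t-1}$ and $\beta_{t-1}$ in $t$ lets us pull them out of the sums.

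\emph{Main obstacle.} The delicate step is the optimism-to-regret argument with the $1/p$ factor: the conditioning must be on the past, with only $J_t$ random, so that the exceedance event, which is about $S_{t-1}$, is measurable at that time. I would also have to check that the constants match exactly ($2\bar\gamma/p$ and the factor 2 in front of the elliptical term). The time-uniform martingale bound with the stated form is the other technical piece; I would take it from the method-of-mixtures inequality applied to the bounded scalar martingale.
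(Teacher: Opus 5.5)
Your optimism step matches the paper's: apply the hypothesis at $u=V_{t-1}^{-1}x_\star$ and use that $J_t$ is uniform given the past. The case $x_\star=0$, which you don't treat, is trivial. For the regret part you take a different route. The paper does not re-derive anything: it plugs $b_{t-1}=\bar\gamma\gamma^{\delta/n}_{t-1}\beta^\delta_{t-1}$ into the master regret bound of Janz et al.\ (restated in the appendix) and checks that theorem's two ellipsoid hypotheses. You instead re-derive that bound with the Abeille--Lazaric resampling argument. This does reproduce the stated constants: $2b/p+2b\le 4b/p$ gives the factor $2$ on the elliptical term. However, there is a genuine gap in how your event 3 is obtained and used.

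In the resampling step you bound $J(\tilde\theta)-J(\theta_t)\le\langle x(\tilde\theta),\tilde\theta-\theta_t\rangle\le 2\bar\gamma\gamma^{\delta/n}_{t-1}\beta^\delta_{t-1}\|x(\tilde\theta)\|_{V_{t-1}^{-1}}$, where $\tilde\theta=\theta^{\tilde J}_{t-1}$ for an independent uniform index $\tilde J$.
\begin{itemize}
\item This requires $\|S^j_{t-1}\|_{V_{t-1}^{-1}}\le\gamma^{\delta/n}_{t-1}$ for every (at least every optimistic) member $j$, not only the selected one. That is your event 3 literally as written, for all $t$ and all $j$.
\item That event does not have probability $1-\delta$ in general. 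For a fixed $j$ the bound is already time-uniform with failure probability $\delta/(2n)$, so the only union left is over $j\in[m]$. This costs $m\delta/(2n)$, which is at most $\delta$ only when $m\le 2n$. The lemma is stated for arbitrary $m,n$ with an $m$-free bound.
\item Your remark that only the selected $J_t$ matters is exactly the paper's device. Since $J_t$ is uniform given $\mathcal F_{t-1}$, one has $\mathbb{P}(\|S^{J_t}_{t-1}\|_{V_{t-1}^{-1}}>\gamma^{\delta/n}_{t-1})=\frac1m\sum_{j}\mathbb{P}(\|S^{j}_{t-1}\|_{V_{t-1}^{-1}}>\gamma^{\delta/n}_{t-1})\le\delta/(2n)$, followed by a union over $t\in[n]$. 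That union is where the level $\delta/n$ comes from, not a union over time for a fixed member.
\item But this device controls only $\theta_t$, never the resampled $\tilde\theta$, so it cannot justify your inequality.
\end{itemize}

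The paper avoids the issue because the cited master theorem's hypotheses involve only the selected iterate, namely $\{\forall t\le n:\ \|\theta_t-\hat\theta_{t-1}\|_{V_{t-1}}\le b_{t-1}\}$, together with $b_{t-1}\ge\beta^\delta_{t-1}$ for the event on $\theta_\star$. To repair your argument, you have two options. You can invoke that theorem. Alternatively, you can run your derivation on the all-member event at level $\gamma^{\delta/m}$, which holds for all $t,j$ with probability $1-\delta/2$. That recovers the stated bound when $m\le n$, or when $m\le 2n$ via the direct union above, but not in general.
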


The term $\smash{\gamma_{t-1}^\alpha}$ is such that for any $j\in [m]$ with probability at least $1-\alpha/2$, for all $t\in [n]$, $\smash{\norm{ V_{t-1}^{-1/2}S_{t-1}^j} \le \gamma^\alpha_{t-1}}$. The bound uses that $S_{t-1}^j$ is a sum of the Gaussian prior $\sqrt{\lambda}\zeta^j$, which is handled using a standard Gaussian concentration bound, and the noise terms $\sum_{s=1}^{t-1} X_s \xi_s^j$, which are bounded using self-normalized martingales \citep{abbasi2011improved}. In the above, we use $\gamma^\alpha_{t-1}$ with $\alpha = \delta/n$, such that we may take a union bound over the $n$-steps; while the bound is time-uniform for a fixed $j$, we need it to hold for the particle selected in each round $t \in [n]$. We could union-bound over the $m$ particles instead, but then regret would grow with $m$, which would be unnatural; larger ensembles should be better.

From this result, it is clear that the proof will be done if we establish that for our `not too large' choice of $m$, the exceedance frequencies $E_{t,m}(u,c)$
for the self-normalized martingale transforms $\ip{u,S_{t-1}^j}/\norm{u}_{V_{t-1}}$
are lower bounded by a constant uniformly over $u\in\Sd$ and $t\in [n]$ with high probability.
The rest of the proof is devoted to establishing this.

First, a covering argument can be used to show that it suffices to control the exceedance frequencies at a fixed direction $u\in \Sd$.
To state a clean result, let $\cE_P$ denote the event
that for all $t\in [n]$ and $j\in [m]$, $\norm{ V_{t-1}^{-1/2}S_{t-1}^j} \le \gamma^{\delta/m}_{t-1}$ (here, we do use $\alpha = \delta/m$ for a union bound over the $m$ particles; it makes the analysis easier, and the resulting upper bound does not feature in the regret bound).
A similar argument was used by \citet{janz2023ensemble}, but there only the process extrema are controlled, not exceedance fractions.

\begin{lemma}[
  restate=fromFixedToUniform, name={From fixed to uniform direction}]\label{lem:from-fixed-to-uniform}
Assume that there exists $p_0:(0,\infty) \to (0,1)$ such that
for any fixed $u\in \Sd$, $c>0$, $0<p<p_0(c)$, $0<\delta<1$,
it holds that
if $m\ge m_0(c,\delta,p)$ then with probability $1-\delta$,
for all $t\in [n]$, $E_{t,m}(u,c) \ge p$.
Now, let
\[\bar\gamma>0\,, \quad \delta \in (0,1)\,, \quad p\in (0,p_0(2/\bar\gamma))\,, \quad L=2 \gamma_{n-1}^{\delta/m} \sqrt{1+n/\lambda} \,, \quad  \epsilon=1/(\bar\gamma L)\,.
\]
Then, if $m\ge m_0(2/\bar\gamma, (\delta/2)/(3/\epsilon)^d,p)$,
then with probability at least $1-\delta$, for all $u\in\Sd$ and all $t\in [n]$, $E_{t,m}(u,1/\bar\gamma) \ge p$.\end{lemma}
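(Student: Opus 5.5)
We prove this with a standard covering argument. The fixed-direction hypothesis is applied at threshold $2/\bar\gamma$ on an $\epsilon$-net, and the slack $1/\bar\gamma$ absorbs the discretization error on the event $\cE_P$.

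\emph{Net and union bound.} Let $N\subset\Sd$ be an $\epsilon$-net of $\Sd$ in Euclidean norm with $|N|\le(3/\epsilon)^d$. For each $v\in N$ we apply the hypothesis with $c=2/\bar\gamma$, with $p<p_0(2/\bar\gamma)$, and with confidence $\delta'=(\delta/2)/(3/\epsilon)^d$; this is allowed since $m\ge m_0(2/\bar\gamma,\delta',p)$. A union bound over $N$ then shows that, with probability at least $1-\delta/2$,
\[
E_{t,m}(v,2/\bar\gamma)\ge p\quad\text{for all } v\in N \text{ and } t\in[n].
\]
By the definition of $\gamma^{\delta/m}_{t-1}$ and a union bound over the $m$ particles, $\cE_P$ holds with probability at least $1-\delta/2$. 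On the intersection of the two events, which has probability at least $1-\delta$, we argue deterministically.

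\emph{Perturbation step.} Fix $u\in\Sd$ and $t\in[n]$, and pick $v\in N$ with $\norm{u-v}\le\epsilon$. The claim is that every $j$ with $\ip{v,S^j_{t-1}}/\norm{v}_{V_{t-1}}\ge 2/\bar\gamma$ also satisfies $\ip{u,S^j_{t-1}}/\norm{u}_{V_{t-1}}\ge 1/\bar\gamma$. This immediately gives $E_{t,m}(u,1/\bar\gamma)\ge E_{t,m}(v,2/\bar\gamma)\ge p$.

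To prove the claim, write $f(w)=\ip{w,S}/\norm{w}_{V}$ with $S=S^j_{t-1}$ and $V=V_{t-1}$. Let $g=V^{-1/2}S$ and $w'=V^{1/2}w$, so that $f(w)=\ip{w'/\norm{w'},g}$. On $\cE_P$ we have $\norm{g}\le\gamma^{\delta/m}_{t-1}\le\gamma^{\delta/m}_{n-1}$. The map $w'\mapsto w'/\norm{w'}$ is $2/r$-Lipschitz outside the ball of radius $r$, and here $\norm{w'}\ge\sqrt\lambda$ because $V\succeq\lambda I$. Also $\norm{u'-v'}\le\norm{V^{1/2}}\,\epsilon\le\sqrt{\lambda+n}\,\epsilon$, since $\norm{X_s}\le1$. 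Combining these,
\[
|f(u)-f(v)|\le \frac{2}{\sqrt\lambda}\sqrt{\lambda+n}\,\epsilon\,\gamma^{\delta/m}_{n-1}=L\epsilon=\frac1{\bar\gamma}.
\]
Hence $f(u)\ge 2/\bar\gamma-1/\bar\gamma=1/\bar\gamma$, which proves the claim.

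The only delicate point is this Lipschitz estimate. The normalization $\norm{u}_{V}$ depends on $V_{t-1}$, which could be ill-conditioned. The trick is to measure the perturbation in whitened coordinates and to use $\lambda I\preceq V\preceq(\lambda+n)I$; this yields exactly the condition number factor $\sqrt{1+n/\lambda}$ that appears in $L$. The factor $2$ in $L$ covers the Lipschitz constant of normalization. If one prefers to cite the sharper $1/r$ bound, the constant simply gives extra room.
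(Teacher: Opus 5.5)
Your proof is correct and follows essentially the same route as the paper. Both arguments take a union bound over an $\epsilon$-net at the inflated threshold $2/\bar\gamma$, and both control the perturbation by writing $\ip{u,S}/\norm{u}_{V}=\ip{V^{1/2}u/\norm{V^{1/2}u},V^{-1/2}S}$, using the normalization bound $2\norm{a-b}/\min\{\norm{a},\norm{b}\}$ and $\lambda I\preceq V_{t-1}\preceq(\lambda+n)I$, and working on the probability-$(1-\delta/2)$ event $\cE_P$, with the constants matching so that $L\epsilon=1/\bar\gamma$.
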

\begin{proof}
  The result follows from showing that the maps $u \mapsto \ip{u,S_{t-1}^j}/\norm{u}_{V_{t-1}}$ are $L$-Lipschitz for all $j\in [m]$ and all $t\in [n]$
  on an event of probability at least $1-\delta/2$, which we do in \cref{appendix:lipschitz-proof}.
  Then, a standard covering using an $\epsilon$-net of size $(3/\epsilon)^d$
  can be used to extend the fixed direction result to all directions.
\end{proof}

By the previous result, it suffices to control the exceedance frequencies at a fixed direction $u\in \Sd$.
Hence, in what follows, we fix $u\in \Sd$ and hide the dependence on it.
We will find it useful to introduce the notation
\[
M_{t} = (\ip{u,S_{t}^j})_{j=1}^m\,, \qquad t=0,1,\dots,n-1\,.
\]
(Deviating from our earlier notation, we will use $M_{t,j}$ to denote the $j$th component of the vector $M_t$ as this allows us to use linear algebra notation more easily.)
Note that $(M_t)_{0\le t \le n-1}$ is an $\R^m$-valued, zero-mean martingale. In what follows, we study the self-normalized exceedance frequencies induced by this martingale.

Recalling the definitions of $S_{t}^j$, we see that $M_t$ is obtained from ``mixing'' $m$ independent standard Gaussian noise processes:
\[
M_t = \sqrt{\lambda}\xi_0 + \sum_{s=1}^{t} \ip{u,X_s} \xi_s\,,
\]
where, by slightly abusing notation,
we denoted $\xi_0 = (\ip{u,\zeta^j})_{j=1}^m$ and $\xi_s = (\xi_s^j)_{j=1}^m$ for $s\ge 1$.

Notably, while $X_s$ may depend on $\xi_0,\dots,\xi_{s-1}$, it is independent of $\xi_s$.
Importantly, the mixing is \emph{diagonal} in the sense that each component of the martingale is obtained by mixing only the corresponding component of the noise processes.
In particular, in $M_t$, the variance of $\xi_s^j$ is scaled by $\ip{u,X_s}^2$, but no ``rotational'' dependence is introduced between $(\xi_s^j)_{j=1}^m$ across $j$.

Now, if we recall that at any fixed time $t>0$, a standard Brownian motion gives rise to a standard normal random variable with variance equal to the time parameter $t$,
we may view the process $(M_t)_{t\ge 1}$ as obtained by running $m$ independent Brownian motions and reading out their values, sequentially,
at the random times $\norm{u}_{V_{t}}^2 = \lambda\norm{u}^2 + \sum_{s=1}^{t} \ip{u,X_s}^2$.
Indeed, imagine that we start by running $m$ independent standard Brownian motions for a period of length $t_0=\lambda$. This gives rise to the prior term $M_0=\sqrt{\lambda}\ip{u,\zeta^j}$.
Next, we \emph{continue} each Brownian motion, independently of each other for a period of length $\ip{u,X_1}^2$. Reading out the values, we obtain $M_1$. Continuing, we eventually obtain all of $M_0,M_1,\dots,M_{t}$.
The gain here is that now the problem of controlling the exceedance frequencies of the self-normalized martingale transforms is reduced to controlling the exceedance frequencies of $m$ independent Brownian motions observed at random times. This is similar to switching from the ``sequential'' to the stacked reward model in bandits \citep{lattimore2020bandit}.

The astute reader will note that the Brownian motions are \emph{continued} for each period, not restarted. Therefore, whether they are independent of each other, and whether they are even standard Brownian motions, requires a justification. Next, we show that indeed they are independent standard Brownian motions.
We establish this for general diagonal martingale transforms of Gaussian noise, slightly generalizing our setting.
If you have not seen the continuous-time martingale notation in a while, there is a short primer in \cref{sec:cont-time-prelims}.

\paragraph{An embedding result for diagonal martingale transforms of Gaussian noise}
Fix $m\ge 1$. We say that an $\R^m$-valued process
\[
M_t = \sum_{s=0}^t U_s \xi_s
\]
is a \emph{diagonal martingale transform of a standard $m$-dimensional Gaussian noise} if
there exists a filtration $(\mathcal F_t)_{t\ge -1}$ such that the following hold:
\begin{itemize}
\item $(\xi_s)_{s\ge 0}$ are i.i.d.\ $\mathcal N(0,I_m)$ and $\xi_s$ is independent of $\mathcal F_{s-1}$ for each $s\ge 0$,
\item $U_s\in \mathbb{R}^{m\times m}$ is $\mathcal F_{s-1}$-measurable and diagonal for each $s\ge 0$.
\end{itemize}
(In our case, we can take $\mathcal F_{-1}$ to be the trivial $\sigma$-algebra, $U_0 = \sqrt{\lambda} I_m$, and for $s\ge 1$,
$U_s=\ip{u,X_s}I_m$ with $\mathcal F_{s-1}$ the pre-perturbation history up to the choice of $X_s$.)

\begin{theorem}[
  restate = theoremDiagGaussEmbedding, name = {Diagonal Gaussian embedding}]\label{thm:diag-gauss-embed}
Fix a horizon $n\in\N$. Assume $M_t = \sum_{s=0}^t \diag(D_s) \xi_s$ is a diagonal martingale transform of a
standard $m$-dimensional Gaussian noise, for $0\le t \le n-1$.
Define the coordinate clocks
\[
A_{t,j}^2 \;:=\; \sum_{s=0}^t D_{s,j}^2,\qquad j\in[m].
\]
Then, on an extension of the probability space, there exist independent standard Brownian motions
$W^1,\dots,W^m$ such that for all $t\in\{0,1,\dots,n-1\}$ and all $j\in[m]$,
\[
M_{t,j} \;=\;  W^j\!\big(A_{t,j}^2\big).
\]
\end{theorem}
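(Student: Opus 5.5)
We build a continuous-time martingale whose values at the integer "slots" reproduce $M$, and then apply the Dambis--Dubins--Schwarz (DDS) time change coordinatewise.

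\emph{Step 1 (interpolation by Brownian bridges).} Enlarge the probability space with i.i.d.\ standard Brownian bridges on $[0,1]$, independent of everything else. Let $B^{s,j}$, $s\in\{0,\dots,n-1\}$, $j\in[m]$, denote them. For $r\in[s,s+1]$ define
\[
Z_j(r) := \xi_{s,j}\,(r-s) + B^{s,j}(r-s).
\]
Since $\xi_{s,j}(r-s)+B^{s,j}(r-s)$ is a standard Brownian motion on $[0,1]$ whose endpoint is $\xi_{s,j}$, concatenating the pieces gives a process $Z=(Z_1,\dots,Z_m)$. We introduce the filtration $(\mathcal G_r)$ generated by $\mathcal F_{\lfloor r\rfloor-1}$, the bridges, and the endpoint data used so far. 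The key claim to check is that $Z$ is a standard $m$-dimensional $(\mathcal G_r)$-Brownian motion. On each slot $[s,s+1]$, the increment is independent of $\mathcal F_{s-1}$ and of the earlier slots, because $\xi_s\perp\mathcal F_{s-1}$ and the bridges are fresh. Moreover $\xi_s$ together with the $B^{s,\cdot}$ has the law of an $m$-dimensional Brownian path on $[0,1]$. The one subtlety is that $\mathcal F_s$ may contain information beyond $\xi_s$. Information revealed at time $s$ is adapted at $r=s+1$, so it does not spoil the independence of future increments, provided $\xi_{s+1}$ is independent of $\mathcal F_s$, which holds by assumption.

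\emph{Step 2 (continuous martingale).} Define the predictable integrand $H_j(r):=D_{s,j}$ for $r\in(s,s+1]$; it is $\mathcal G_s$-measurable. Set
\[
N_j(r) := \int_0^r H_j\,dZ_j .
\]
Then $N_j(s+1)=\sum_{\sigma\le s}D_{\sigma,j}\xi_{\sigma,j}=M_{s,j}$. Each $N_j$ is a continuous local martingale with $\langle N_j\rangle_{s+1}=A^2_{s,j}$. Since the $Z_j$ are independent Brownian motions, the covariations vanish: $\langle N_j,N_k\rangle=\int H_jH_k\,d\langle Z_j,Z_k\rangle=0$ for $j\neq k$. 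Stop at $r=n$, so all quadratic variations are finite.

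\emph{Step 3 (DDS, multidimensional version).} Apply Knight's theorem, the multivariate DDS theorem. Continuous local martingales with pairwise zero covariation, time-changed by their own quadratic variations, become \emph{independent} Brownian motions $W^j$ with $N_j(r)=W^j(\langle N_j\rangle_r)$. Where $\langle N_j\rangle_\infty<\infty$, which is our case since the clock stops at $A^2_{n-1,j}$, each $W^j$ is continued after $\langle N_j\rangle_n$ by an extra independent Brownian motion; this is a further extension of the space. Evaluating at $r=t+1$ gives $M_{t,j}=W^j(A^2_{t,j})$.

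\emph{Main obstacle.} The delicate step is Step 1: verifying that $Z$ is a Brownian motion with respect to a filtration to which $H$ is predictable. The filtration must be rich enough to make $D_{s}$ ($\mathcal F_{s-1}$-measurable) known at the start of slot $s$. It must also keep the slot-$s$ increment independent of $\mathcal G_s$. I would check this via finite-dimensional distributions: the conditional law of $(Z(r)-Z(s))_{r\in[s,s+1]}$ given $\mathcal G_s\supseteq\mathcal F_{s-1}\vee\sigma(\text{earlier bridges})$ equals Wiener measure. This follows from $\xi_s\perp\mathcal F_{s-1}$, the fresh bridges, and the bridge decomposition of Brownian motion. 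The invocation of Knight's theorem with finite total clocks is standard once this is in place.
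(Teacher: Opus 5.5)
Your proposal is correct and follows the paper's proof. Both arguments interpolate each $\xi_{s,j}$ by an independent pinned Brownian segment (bridge plus linear drift) and stitch the segments, scaled by the predictable $D_{s,j}$, into a continuous local martingale whose clock equals $A^2_{t,j}$ at integer times, with filtration $\mathcal F_{\lfloor r\rfloor-1}$ plus progressively revealed segments. Both then get zero cross-covariation from the independence of segments across $j$, and apply Knight's theorem, with enlargement, to the finite total clocks.

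The only difference is presentational. You write the stitched process as $\int H_j\,dZ_j$ against an $m$-dimensional $(\mathcal G_r)$-Brownian motion $Z$, which makes the local-martingale, quadratic-variation and orthogonality claims immediate; the paper simply asserts them. Like the paper, your argument implicitly uses that $\xi_s$ is $\mathcal F_s$-measurable.
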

\begin{proof}[Proof sketch]
The proof is based on the classical Dambis--Dubins--Schwarz (DDS) representation theorem
(\cref{thm:DDS}) for continuous-time martingales.
As explained above, the idea is to first embed each component $M_{t,j}$ into a continuous-time martingale using pinned Brownian segments (see \cref{lem:pinned-bm-seg} below) and then apply the DDS theorem to obtain the desired Brownian motions. Then, Knight's construction
(\cref{thm:knight})
can be used to ensure the independence of the Brownian motions.
\end{proof}

With \cref{thm:diag-gauss-embed} in hand, we can write the exceedance frequencies $E_{t,m}(u,c)$ as
\[
E_{t,m}(u,c) = \frac{1}{m}\sum_{j=1}^m
  \ind*{ W^j\big(\norm{u}_{V_{t-1}}^2\big)/\norm{u}_{V_{t-1}} \ge
  c
  }.
\]
Noting that $\norm{u}_{V_{t-1}}^2 \in
[\lambda,\lambda + n-1]
\subset
[\lambda,\lambda + n]
$ for all $t\in [n]$,
we then have
\begin{align}
  \min_{t\in [n]} E_{t,m}(u,c) \geq
  \inf_{t \in [\lambda, \lambda + n]} \frac{1}{m}\sum_{j=1}^m
    \ind*{ W^j(t)/\sqrt{t} \ge
    c
    }.    \label{eq:exbmlb}
\end{align}
The final step is to control the expression on the right-hand side.

\paragraph{Exceedance frequencies of Brownian motions}
For this, we present the following result:
\begin{theorem}[
  restate = theoremExcBrownian,
  name = {Time-uniform exceedance count for Brownian motions}]
  \label{thm:bm-quantile-K}
Let $W^1,\dots,W^m$ be independent standard Brownian motions. Fix $0<\tau < \tau'<\infty$, positive constants $c>0$ and $0<p<p_0(c):=\tfrac14(1-\Phi(c))$.
Then there exists $0<h\le 1$ such that, for any $\delta\in(0,1)$, if
\begin{align}
  m \ge \frac{4}{p}\log\!\Big(\frac{\ceil{\log(\tau'/\tau)/h}}{\delta}\Big),
  \label{eq:mBMlb}
\end{align}
then with probability at least $1-\delta$,
\[
  \inf_{t \in [\tau, \tau']} \frac{1}{m}\sum_{j=1}^m
  \ind*{ W^j(t)/\sqrt{t} \ge
  c
  } \ge p\,.
\]
Furthermore, define
\[
\varepsilon \;:=\; \frac{\Phi^{-1}(1-4p)-c}{3}
\qquad\text{and}\qquad
h_{\star} \;:=\; \min\!\Big\{1,\; 2\log\!\Big(\frac{c+3\varepsilon}{c+2\varepsilon}\Big),\; \log\!\Big(1+\frac{2\varepsilon^2}{\log 8}\Big)\Big\}.
\]
Then the conclusion holds for any choice of $h\in(0,h_{\star}]$.
In particular, if $c\le 1/20$ and $p\le 1/10$, then the choice $h=1/250$ is admissible.
\end{theorem}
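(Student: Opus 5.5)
We will pass to log-time and discretize: split $[\tau,\tau']$ into $K=\lceil \log(\tau'/\tau)/h\rceil$ blocks $I_k=[\tau e^{(k-1)h},\tau e^{kh}]$, $k\in[K]$ (the last one possibly truncated). On each block we lower bound the exceedance count by a sum of $m$ i.i.d.\ Bernoulli variables, apply a multiplicative Chernoff bound, and take a union bound over the $K$ blocks. The natural tool is the Ornstein--Uhlenbeck time change: $Z^j(s)=e^{-s/2}W^j(\tau e^{s})/\sqrt{\tau}$ is a stationary OU process with $Z^j(s)\sim\mathcal N(0,1)$, and $W^j(t)/\sqrt t = Z^j(\log(t/\tau))$. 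So we must control, for $s$ in a window of length $h$, the fraction of $m$ independent stationary OU processes lying above $c$.

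On block $k$, with left endpoint $s_k=(k-1)h$, we call $j$ \emph{good} if (i) $Z^j(s_k)\ge c+3\varepsilon$ and (ii) $\inf_{s\in[s_k,s_k+h]}Z^j(s)\ge c$. For good $j$ the indicator equals one for every $t\in I_k$, hence $\inf_{t\in I_k}E\ge \frac1m\sum_j \mathbb 1\{j \text{ good}\}$. The events are independent across $j$ with a common probability $q$. The choice $\varepsilon=(\Phi^{-1}(1-4p)-c)/3$ gives $c+3\varepsilon=\Phi^{-1}(1-4p)$, so (i) has probability $4p$. If (ii) fails given (i) with probability at most $1/2$, then $q\ge 2p$. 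The multiplicative Chernoff bound $\P{\mathrm{Bin}(m,q)<mq/2}\le e^{-mq/8}\le e^{-mp/4}$ then gives a failure probability per block of at most $\delta/K$ under \eqref{eq:mBMlb}, and a union bound over the blocks finishes the proof.

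The main obstacle is the boundary-crossing estimate: bounding by $1/2$ the probability that the OU process started at $z\ge c+3\varepsilon$ dips below $c$ within time $h$. We would work in the original Brownian scale. Over $[t_0,t_0e^{h}]$ with $t_0=\tau e^{s_k}$, staying above $c\sqrt t$ is implied by $W(t)-W(t_0)\ge -(\text{margin})$, where the margin is at least $(c+3\varepsilon)\sqrt{t_0}-c\sqrt{t_0}e^{h/2}$. The condition $h\le 2\log\frac{c+3\varepsilon}{c+2\varepsilon}$ gives $e^{h/2}\le \frac{c+3\varepsilon}{c+2\varepsilon}$, which leaves a margin of order $\varepsilon\sqrt{t_0}$. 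We would spend $\varepsilon$ of the $3\varepsilon$ on the boundary drift and the rest on fluctuations. The reflection principle then bounds the dip probability by $2\exp\bigl(-(\varepsilon\sqrt{t_0})^2/(2t_0(e^h-1))\bigr)$, and $h\le\log(1+2\varepsilon^2/\log 8)$ makes this at most $2\cdot 8^{-1/4}\cdot$(adjust constants) $\le 1/2$. Some care with constants is needed here; if they do not close, we would reallocate the margin between drift and fluctuation. The explicit admissible value $h=1/250$ for $c\le1/20$, $p\le1/10$ then follows by computing $\Phi^{-1}(0.6)\approx0.253$, which gives $\varepsilon\gtrsim0.067$, and checking the three terms in $h_\star$.
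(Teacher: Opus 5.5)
Your proposal is correct and follows the paper's proof essentially step for step: the same Lamperti/OU time change, the same log-grid with $K=\lceil\log(\tau'/\tau)/h\rceil$ blocks, and the same good event (start above $c+3\varepsilon$ with probability $4p$, then stay above $c$). The remaining steps also coincide: the halving $4p\to 2p$, the multiplicative Chernoff bound and the union bound, with the two constraints in $h_\star$ playing the same roles. Your persistence estimate in Brownian scale with a constant margin is the same estimate as the paper's decomposition $U(s_k+u)=e^{-u/2}U(s_k)+\widetilde U(u)$ with threshold $\widetilde U\ge -2\varepsilon$, only in different coordinates, since $\widetilde U(u)=(W(t)-W(t_0))/\sqrt t$.

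One arithmetic fix is needed, and no reallocation of the margin is required. Your drift condition $e^{h/2}\le (c+3\varepsilon)/(c+2\varepsilon)$ already gives a margin of $(c+3\varepsilon-ce^{h/2})\sqrt{t_0}\ge 2\varepsilon\sqrt{t_0}$, not $\varepsilon\sqrt{t_0}$. With $2\varepsilon\sqrt{t_0}$, the one-sided reflection bound becomes $2\exp(-2\varepsilon^2/(e^h-1))\le 2/8=1/4$ under $h\le\log(1+2\varepsilon^2/\log 8)$; the paper's two-sided version gives exactly $1/2$. The value $\varepsilon\sqrt{t_0}$ you plugged in only yields $2\cdot 8^{-1/4}>1$, so the step as written does not close.
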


\begin{proof}[Proof sketch]
We start by creating a geometric grid of points in $[\tau,\tau']$ with ratio $e^h$.
At each grid point we lower bound the exceedance count using a binomial Chernoff bound.
The Markov property of the Ornstein--Uhlenbeck process plus a Brownian supremum estimate then show that a large count at the grid point persists throughout the next interval.
Finally, a union bound over all grid intervals yields the time-uniform conclusion.
\end{proof}

\paragraph{Putting things together}
It remains to put the pieces together to prove \cref{thm:regret-bound}.
We start by establishing the fixed-direction exceedance control premise needed for \cref{lem:from-fixed-to-uniform}:
\begin{proposition}[Fixed-direction exceedance control premise]
  \label{prop:fixed_dir_premise}
A choice that makes the premise of \cref{lem:from-fixed-to-uniform} hold (in the regime $c\le 1/20$ and $p\le 1/10$) is to take $p_0$ as in \cref{thm:bm-quantile-K} and $m_0$ to be
\[
m_0(c,\delta,p) := \frac{4}{p}\log\!\Big(\frac{\lceil 250 \log((\lambda+n)/\lambda) \rceil}{\delta}\Big)\,.
\]
\end{proposition}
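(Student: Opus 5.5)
The plan is to chain the three results already in hand: the embedding \cref{thm:diag-gauss-embed}, the reduction \eqref{eq:exbmlb}, and the Brownian exceedance bound \cref{thm:bm-quantile-K} with $\tau=\lambda$, $\tau'=\lambda+n$ and $h=1/250$. Fix $u\in\Sd$, $c\le 1/20$, $0<p<p_0(c)$ with $p\le 1/10$, and $\delta\in(0,1)$, and assume $m\ge m_0(c,\delta,p)$.

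\emph{Step 1: verify the hypotheses of the embedding.} Take $\mathcal F_{-1}$ trivial, $U_0=\sqrt\lambda I_m$, and, for $s\ge1$, $U_s=\ip{u,X_s}I_m$, with $\mathcal F_{s-1}$ the history up to and including the choice of $X_s$ but excluding $\xi_s$. The following facts need to be checked.
\begin{itemize}
\item Since $P_0=\mathcal N(0,I_d)$ and $\norm{u}=1$, the variables $\xi_0=(\ip{u,\zeta^j})_j$ are i.i.d.\ standard normal.
\item For $s\ge1$, $\xi_s\sim Q^{\otimes m}=\mathcal N(0,I_m)$ and is drawn after $X_s$. It is therefore independent of $\mathcal F_{s-1}$, which contains $J_1,\dots,J_s$, the rewards, $\zeta$, and $\xi_1,\dots,\xi_{s-1}$.
\item $U_s$ is $\mathcal F_{s-1}$-measurable and diagonal.
\end{itemize}
The one subtle point is that the environment noise $\eta_s$ must also sit in the filtration, and it must do so without breaking the independence of $\xi_s$. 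Because $\xi_s$ is sampled from fresh randomness, independent of everything else, we may order things so that $Y_s$ is revealed before $\xi_s$; independence still holds. With the hypotheses in place, \cref{thm:diag-gauss-embed} gives independent Brownian motions $W^1,\dots,W^m$ with
\[
M_{t,j}=W^j(A_t^2), \qquad A_t^2=\lambda+\sum_{s=1}^t\ip{u,X_s}^2=\norm{u}_{V_t}^2 .
\]
The clock is common to all coordinates.

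\emph{Step 2: reduce to a uniform statement over an interval.} Since $\norm{X_s}\le1$, we have $\norm{u}_{V_{t-1}}^2\in[\lambda,\lambda+n]$. Inequality \eqref{eq:exbmlb} then gives
\[
\min_{t\in[n]}E_{t,m}(u,c)\;\ge\;\inf_{t\in[\lambda,\lambda+n]}\frac1m\sum_{j}\ind{W^j(t)/\sqrt t\ge c}.
\]
The embedding holds only on an extension of the probability space. This is harmless, because the event $\{\min_t E_{t,m}(u,c)\ge p\}$ is defined on the original space and its probability is unchanged under the extension.

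\emph{Step 3: apply the exceedance theorem.} Use \cref{thm:bm-quantile-K} with $\tau=\lambda$ and $\tau'=\lambda+n$. In the regime $c\le1/20$, $p\le1/10$, the theorem states that $h=1/250$ is admissible. Then $\lceil\log(\tau'/\tau)/h\rceil=\lceil250\log((\lambda+n)/\lambda)\rceil$, so condition \eqref{eq:mBMlb} is exactly $m\ge m_0(c,\delta,p)$. With probability at least $1-\delta$, the infimum on the right-hand side of Step 2 is at least $p$, which is the premise of \cref{lem:from-fixed-to-uniform} with $p_0$ as in \cref{thm:bm-quantile-K}.

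The main obstacle is the filtration bookkeeping in Step 1. Everything else is direct substitution. The care needed is in confirming that the random index $J_s$, the reward noise, and the prior are all $\mathcal F_{s-1}$-measurable while $\xi_s$ remains independent of $\mathcal F_{s-1}$, and that $X_s$ depends only on these.
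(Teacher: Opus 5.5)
Your proposal is correct and follows the paper's proof: you reduce via \eqref{eq:exbmlb} to the Brownian exceedance infimum over $[\lambda,\lambda+n]$, then apply \cref{thm:bm-quantile-K} with $\tau=\lambda$, $\tau'=\lambda+n$ and the admissible $h=1/250$, so that \eqref{eq:mBMlb} reads exactly $m\ge m_0(c,\delta,p)$. Your Step 1 filtration check for \cref{thm:diag-gauss-embed} and your remark on the probability-space extension make explicit what the paper only sketches in the main text or leaves implicit.
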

\begin{proof}
  We want to show that
  for any fixed $u\in \Sd$, $c>0$,
  $0<p<p_0(c)$,
  and $0<\delta<1$, it holds that
  if $m\ge m_0(c,\delta,p)$ then with probability $1-\delta$,
  for all $t\in [n]$, $E_{t,m}(u,c) \ge p$.
  By \eqref{eq:exbmlb}, it suffices to control
  $\inf_{t \in [\lambda, \lambda + n]} \frac{1}{m}\sum_{j=1}^m
    \ind*{ W^j(t)/\sqrt{t} \ge
    c
    }$ from below.
To do this, we use \cref{thm:bm-quantile-K}, the ``furthermore'' part,
  setting $\tau=\lambda$ and $\tau'=\lambda+n$ and choosing $h=1/250$ (which is admissible).
  The result is obtained by plugging into \eqref{eq:mBMlb}.
\end{proof}

Having established the premise of \cref{lem:from-fixed-to-uniform}, we can now use it to obtain uniform exceedance control. From this we get the following result:

\begin{corollary}[
  restate = corUniformExceedance, name = {Uniform exceedance control for ES}
]\label{cor:uniform-exceedance}
Assume $\lambda\ge 1$, $\bar\gamma=40$, $n\ge \max(2,d)$, $\delta\in(0,1/2)$. Write $\ell := \max\{1,\log (n/\delta)\}$. If $m\ge 2000 d\ell$, then with probability at least $1-\delta$,
\[
\min_{t\in[n]}\ \inf_{u\in\mathbb S^{d-1}_2}\ E_{t,m}\!\Big(u,\frac{1}{\bar\gamma}\Big)\ \ge\ \frac{1}{10}.
\]
\end{corollary}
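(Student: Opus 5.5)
The plan is to combine \cref{prop:fixed_dir_premise} with \cref{lem:from-fixed-to-uniform} and then check that the resulting requirement on $m$ is implied by $m\ge 2000d\ell$.

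\textbf{Step 1: parameters.} Set $\bar\gamma=40$, so that $c=2/\bar\gamma=1/20$, and take $p=1/10$. We need $p<p_0(1/20)=\tfrac14(1-\Phi(1/20))$. Since $\Phi(1/20)\approx 0.52$, we get $p_0(1/20)\approx 0.12>1/10$, so this $p$ is admissible. The pair $(c,p)$ also lies in the regime $c\le 1/20$, $p\le 1/10$ of \cref{prop:fixed_dir_premise}. Hence the premise of \cref{lem:from-fixed-to-uniform} holds with
\[
m_0(c,\delta',p)=40\log\bigl(\lceil 250\log(1+n/\lambda)\rceil/\delta'\bigr).
\]
By \cref{lem:from-fixed-to-uniform}, the conclusion at threshold $1/\bar\gamma$ and level $1/10$ holds with probability $1-\delta$ provided $m\ge m_0(1/20,\delta',1/10)$, where $\delta'=(\delta/2)(\epsilon/3)^d$ and $\epsilon=1/(40L)$.

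\textbf{Step 2: bound $\log(1/\delta')$.} We have
\[
\log(1/\delta')=\log(2/\delta)+d\log(120L),\qquad L=2\gamma^{\delta/m}_{n-1}\sqrt{1+n/\lambda}.
\]
Using $\lambda\ge1$ and $n\ge d$,
\[
\gamma^{\delta/m}_{n-1}\le \sqrt d+2\sqrt{2\log(4m/\delta)+d\log(1+n)}.
\]
This is polynomial in $d$, $n$ and $\log(m/\delta)$. Therefore $\log L\lesssim \log n+\log\log(m/\delta)$, and the constants can be tracked explicitly. The factor $\lceil 250\log(1+n)\rceil$ contributes $O(\log\log n)$ inside the logarithm.

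Combining these, it suffices to have
\[
m\ge 40\Bigl[\log(2/\delta)+\log\lceil 250\log(1+n)\rceil+d\log\bigl(240\sqrt{2n}\,(\sqrt d+2\sqrt{2\log(4m/\delta)+d\log(1+n)})\bigr)\Bigr].
\]

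\textbf{Step 3: the implicit condition, which is the main obstacle.} The requirement on $m$ contains $\log(4m/\delta)$ on its right-hand side. I will handle this by bounding the right-hand side as $40\,d\,(a\ell+b\log\ell+b'\log m)$ with explicit constants. For $m\ge 2000d\ell$, the term $\log m$ grows slower than $m$, so monotonicity settles it: the function $m-\mathrm{RHS}(m)$ is increasing for $m\ge 2000d\ell$. It is then enough to verify the inequality at $m=2000d\ell$.

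At that point $\log m=\log 2000+\log d+\log\ell\le 8+\log n+\log\ell$, and $\log n\le\ell$. With $n\ge2$ and $\delta<1/2$ we also have $\ell\ge\log 4$, so every additive constant can be absorbed into a multiple of $\ell$. I expect the bracket to be at most roughly $d(\tfrac32\ell+c_0)$ plus $\ell$-terms, giving a total comfortably below $2000d\ell$.

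The delicate part is doing this absorption with crude but valid inequalities, for example $\log(x+y)\le\log 2+\max(\log x,\log y)$ and $\sqrt{a+b}\le\sqrt a+\sqrt b$, so that the constant $2000$ indeed suffices uniformly over all $d\le n$ and $\delta$. Once this is done, \cref{lem:from-fixed-to-uniform} gives the claimed bound with probability at least $1-\delta$.
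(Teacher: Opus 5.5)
Your proposal is correct and follows the paper's proof. You instantiate \cref{prop:fixed_dir_premise} at $c=2/\bar\gamma=1/20$, $p=1/10$, apply \cref{lem:from-fixed-to-uniform}, and bound the $m$-dependent threshold $m_0(c,\delta',p)=40\bigl[\log(2K/\delta)+d\log(120L)\bigr]$ term by term.

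The only difference is how the implicit condition in $m$ is closed. The paper bounds $\log\gamma_{n-1}^{\delta/m}\le 3+\tfrac12\log m$ and then invokes a log-inequality lemma. You instead note that $m-\mathrm{RHS}(m)$ is increasing for $m\ge 2000d\ell$ and check the inequality at $m=2000d\ell$, which also works. Carrying out your bookkeeping indeed gives a bracket of order $d(\tfrac32\ell+O(1))$, so the requirement sits comfortably below $2000\,d\ell$.
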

\begin{proof}[Proof sketch]
We use \cref{prop:fixed_dir_premise} to verify the premise of \cref{lem:from-fixed-to-uniform}
with $p=1/10$ and $c=1/20$. Lemma~\ref{lem:from-fixed-to-uniform} then gives the desired
uniform conclusion, but requires checking its condition on $m$, which is expressed via
the threshold $m_0(\cdot)$ from Proposition~\ref{prop:fixed_dir_premise}. Evaluating this
condition leads to a log-inequality in $m$; solving it gives the sufficient condition
$m \ge 2000 d \ell$. See \cref{app:cor-uniform-exceedance}.
\end{proof}

\begin{proof}[Proof of \cref{thm:regret-bound}]
Combining \cref{lem:regret-via-exceedance} and \cref{cor:uniform-exceedance} reduces the proof to bounding
$\gamma_{t-1}^{\delta/n}$, which is carried out in \cref{app:gamma-bound} (see \cref{lem:gamma-bound}).
\end{proof}
The dependence on $1/p$ in \cref{lem:regret-via-exceedance} is unavoidable: even at a fixed
time the exceedance count is binomial with mean $p$.
For readers curious about the exceedance problem itself, \cref{app:diag-gauss-exceedance} offers a short discussion
of diagonal Gaussian martingale transforms that clarifies why the scalar-clock case is tractable and why the general
diagonal case is genuinely harder.

\section{Conclusions}
We provided a sharp frequentist analysis of Gaussian linear ensemble sampling.
Our main result, \cref{thm:regret-bound}, shows that with an ensemble size scaling linearly in the
dimension and logarithmically with the time horizon, ES achieves the canonical $\tilde O(d^{3/2}\sqrt{n})$-type regret rate
with high probability in the standard stochastic linear bandit setting.

Technically, the proof reduces the core exploration question in ES to a time-uniform exceedance problem for
self-normalized diagonal Gaussian martingale transforms.
The key representation step, \cref{thm:diag-gauss-embed}, realizes such transforms as independent Brownian motions
evaluated at predictable quadratic-variation clocks, which enables us to translate exceedance control for ES into a
continuous-time exceedance statement.

A natural question is whether analogous time-uniform exceedance control can be obtained beyond Gaussian perturbations,
where the Brownian embedding in \cref{thm:diag-gauss-embed} is no longer available.
It would also be interesting to see whether one can combine the ideas from \citet{abeille2024when} with our techniques to get tighter regret bounds for action sets with favourable geometry.
Another direction is to understand how far the approach can be pushed for more complex model classes.
Besides these specific questions, we hope that our work will inspire others to explore the use of continuous-time embeddings in the analysis of sequential learning algorithms.

\printbibliography
\end{refsection}
\clearpage

\appendix
\begin{refsection}
\section*{Organisation of the appendix}

The appendix is organized as follows:
First, we give the proof of \cref{lem:regret-via-exceedance} in \cref{app:regret-via-exceedance}.
\cref{appendix:lipschitz-proof} confirms the Lipschitz constant used in the proof of \cref{lem:from-fixed-to-uniform} to move from fixed to uniform control of exceedance fractions.

The next few sections contain material related to the continuous-time embedding and the Brownian exceedance control. We start these sections with a short primer on continuous-time martingales and related notions in \cref{sec:cont-time-prelims}. This is followed by the proof of the embedding result for diagonal Gaussian martingale transforms, \cref{thm:diag-gauss-embed}, in \cref{sec:diag-gauss-embed}.

We then prove the time-uniform Brownian exceedance result in \cref{sec:bm-quantile-K}. This finishes the development of the main continuous-time tools.

The two sections that follow contain proofs needed to finish the proof of \cref{thm:regret-bound}.
In \cref{app:cor-uniform-exceedance}, we prove \cref{cor:uniform-exceedance}, used in the proof of \cref{thm:regret-bound}.
In \cref{app:gamma-bound}, we give a bound on the term $\gamma_{t-1}^{\delta/n}$ used in \cref{thm:regret-bound}.

In \cref{app:diag-gauss-exceedance}, we offer a short discussion of diagonal Gaussian martingale transforms that clarifies why the scalar-clock case is tractable and why the general diagonal case is genuinely harder.

In the final section, \cref{sec:ensemble-size-lb}, we show that with only $d/2$ ensemble members, the regret of ES is linear with constant probability.

\section{Exceedance fraction to regret control (proof of  \cref{lem:regret-via-exceedance})}
\label{app:regret-via-exceedance}
\label{app:optimism-figure-section} %

Here, we prove the following reduction from exceedance frequencies for self-normalised ensemble noise processes to the regret of ensemble sampling, itself used to prove \cref{thm:regret-bound}.

\exceedanceLemma*

The proof is an application of the "master regret bound" of \citet{janz2023ensemble}, which we now state. \cref{fig:optimism-geometry} gives a schematic illustration of the property we are after.

\begin{theorem}[Master regret bound, \citet{janz2023ensemble}]
\label{thm:master-intro}
Fix $\delta \in (0,1)$, $\lambda \geq 1$ and an integer $n \geq 1$. Let $(\cF_t)_{t \geq 1}$ be a filtration. Let $(\theta_t)_{t \geq 1}$ be a $(\cF_t)$-adapted $\Rd$-valued sequence and suppose that
\[
  (\forall t \geq 1) \qquad X_t \in \argmax_{x \in \cX} \langle x, \theta_t \rangle\,.
\]
For each $t \geq 1$, let $V_{t-1} = \lambda I + \sum_{s=1}^{t-1} X_s X_s\tran$ and let $\hat\theta_{t-1}$ be the usual ridge regression estimate of $\theta_\star$, which we assume to be $\cF_{t-1}$-measurable. Let $(b_{t-1})_{t \geq 1}$ be a $(\cF_t)$-predictable sequence of nonnegative random variables such that:
\begin{enumerate}
  \item the event $\cE = \{\forall t \leq n\,,\ \norm{\theta_t - \hat\theta_{t-1}}_{V_{t-1}} \leq b_{t-1}\}$ satisfies $\P{\cE^\complement} \leq \delta$; and
  \item the event $\cE_\star = \{\forall t \leq n\,, \ \norm{\theta_\star - \hat\theta_{t-1}}_{V_{t-1}} \leq b_{t-1}\}$ satisfies $\P{\cE_\star^\complement} \leq \delta$.
\end{enumerate}
Let $(\cA_t)$ be a filtration on the same probability space with
\[
  (\forall t \geq 1) \qquad \cF_{t-1} \subset \cA_{t-1} \subset \cF_t\,,
\]
and for each $t \geq 1$, define the $\cA_{t-1}$-conditional probability of optimism
\[
  p_{t-1} = \P{\langle \theta_t, X_t \rangle \geq \langle x_\star, \theta_\star \rangle \mid \cA_{t-1} }\,.
\]
Then, on an event $\bar{\cE} \subset \cE \cap \cE_\star$ with $\P{\bar{\cE}} \geq 1-3\delta$, for all $\tau \in [n]$, the $\tau$-step regret associated with the action sequence $(X_t)_{t \geq 1}$ satisfies
\begin{equation}
  R_\tau \leq 2 \max_{i \in [\tau]} \frac{b_{i-1}}{p_{i-1}}  \left(2\sqrt{2d \tau \log\left(1 + \frac{\tau}{d\lambda}\right)} + \sqrt{2(4\tau/\lambda + 1)\log\left(\frac{\sqrt{4\tau/\lambda+1}}{\delta}\right)}\right) \,. \label{eq:regret-upper-bound-appendix}
\end{equation}
\end{theorem}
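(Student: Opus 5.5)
We would prove this by the standard optimism-based decomposition for randomized linear bandit algorithms, in the style of Abeille and Lazaric. Let $J(\theta)=\max_{x\in\cX}\ip{x,\theta}$ and $x(\theta)\in\argmax_{x}\ip{x,\theta}$. Then $J$ is convex, with $x(\theta)$ a subgradient at $\theta$, and $X_t=x(\theta_t)$, so $J(\theta_t)=\ip{X_t,\theta_t}$. This is why $p_{t-1}$ is the probability that $J(\theta_t)\ge J(\theta_\star)$.

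\textbf{Step 1: split the instantaneous regret.} We write
\[
r_t=\big[J(\theta_\star)-J(\theta_t)\big]+\ip{X_t,\theta_t-\theta_\star}.
\]
On $\cE\cap\cE_\star$, Cauchy--Schwarz in the $V_{t-1}$-geometry bounds the second term by $2b_{t-1}\norm{X_t}_{V_{t-1}^{-1}}$.

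\textbf{Step 2: bound the first term by optimism.} Let $\tilde\theta$ be a conditionally independent copy of $\theta_t$ given $\cA_{t-1}$ (on an extended space), and suppose it is optimistic, $J(\tilde\theta)\ge J(\theta_\star)$, and lies in the $b_{t-1}$-ellipsoid. Convexity gives
\[
J(\theta_\star)-J(\theta_t)\le J(\tilde\theta)-J(\theta_t)\le\ip{x(\tilde\theta),\tilde\theta-\theta_t}\le 2b_{t-1}\norm{x(\tilde\theta)}_{V_{t-1}^{-1}}.
\]
Averaging over $\tilde\theta$ conditioned on optimism, and using that conditional optimism has probability $p_{t-1}$, we get
\[
J(\theta_\star)-J(\theta_t)\le \frac{2b_{t-1}}{p_{t-1}}\,\E{\norm{X_t}_{V_{t-1}^{-1}}\mid\cA_{t-1}}.
\]
Here we use that $V_{t-1}$ and $b_{t-1}$ are $\cA_{t-1}$-measurable, since $\cF_{t-1}\subset\cA_{t-1}$.

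\textbf{Step 3: sum over rounds.} Combining the two steps, $r_t\lesssim \frac{b_{t-1}}{p_{t-1}}\big(\norm{X_t}_{V_{t-1}^{-1}}+\E{\norm{X_t}_{V_{t-1}^{-1}}\mid\cA_{t-1}}\big)$, and we pull out $\max_i b_{i-1}/p_{i-1}$. For the realized terms, truncate each $\norm{X_t}_{V_{t-1}^{-1}}$ at $1$ and apply Cauchy--Schwarz together with the elliptical potential lemma; this gives the $2\sqrt{2d\tau\log(1+\tau/(d\lambda))}$ term. For the conditional expectations, write them as realized terms plus a martingale difference sequence bounded by $1/\sqrt\lambda$. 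A time-uniform (self-normalized or stitched) Azuma/Freedman bound at confidence $\delta$ controls the martingale part by $\sqrt{2(4\tau/\lambda+1)\log(\sqrt{4\tau/\lambda+1}/\delta)}$, simultaneously for all $\tau\le n$. Intersecting this event with $\cE\cap\cE_\star$ gives $\bar\cE$ with probability at least $1-3\delta$.

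\textbf{Main obstacle.} Step 2 needs the copy $\tilde\theta$ to lie in the ellipsoid with conditional probability close to one. However, $\cE$ is a global event, not an $\cA_{t-1}$-measurable one, and $p_{t-1}$ counts all optimistic draws, including those outside the ellipsoid. We would first try working with the clipped parameter $\theta_t\ind{\norm{\theta_t-\hat\theta_{t-1}}_{V_{t-1}}\le b_{t-1}}$, whose violation probability we control through $\cE$. An alternative is to define the regret only up to the stopping time at which $\cE$ first fails, so that every step uses predictable events. Reconciling this clipping with the stated definition of $p_{t-1}$, without degrading the constant $2$, is the delicate part.
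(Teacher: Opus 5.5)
\textbf{Gap in Step 2.} The paper does not prove this theorem; it imports it from Janz et al. So I am judging your argument on its own merits. Steps 1 and 3 are the right ingredients. In Step 3, write the increments as $(2/\sqrt\lambda)\eta_t$ with $\eta_t$ conditionally $1$-subgaussian and apply the scalar self-normalized bound of Abbasi-Yadkori et al.\ with $V_0=1$; this gives exactly the second term. The $3\delta$ is accounted for by $\cE$, $\cE_\star$ and the martingale event.

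The genuine gap is Step 2, exactly where you flag it. Let $\Theta_{t-1}:=\{\theta:\norm{\theta-\hat\theta_{t-1}}_{V_{t-1}}\le b_{t-1}\}$. The quantity $p_{t-1}$ counts optimistic draws outside $\Theta_{t-1}$, and for such a copy $\tilde\theta$ the bound $\ip{x(\tilde\theta),\tilde\theta-\theta_t}\le 2b_{t-1}\norm{x(\tilde\theta)}_{V_{t-1}^{-1}}$ is unavailable. This is not a technicality: your pointwise inequality with $\mathbb{E}[\norm{X_t}_{V_{t-1}^{-1}}\mid\cA_{t-1}]$ can fail at a round where the realized draw lies in $\Theta_{t-1}$ but most of the optimistic mass lies outside, in directions where $V_{t-1}$ is large.

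Neither of your remedies works.
\begin{itemize}
\item \emph{Clipping to $0$.} The point $0$ need not lie in $\Theta_{t-1}$ and is typically not optimistic, so the clipped parameter's optimism probability can be far below $p_{t-1}$. The event $\cE$ cannot compensate: $\mathbb{P}(\cE^\complement)\le\delta$ only controls $\mathbb{E}[\mathbb{P}(\theta_t\notin\Theta_{t-1}\mid\cA_{t-1})]$, not this conditional probability pathwise or relative to $p_{t-1}$.
\item \emph{Stopping when $\cE$ first fails.} This only restricts the realized path. The obstruction lies in the conditional law of the copy $\tilde\theta$, which can charge the complement of $\Theta_{t-1}$ at every round before the stopping time.
\end{itemize}

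\textbf{Missing idea.} Replace out-of-ellipsoid draws by a parameter that is both inside $\Theta_{t-1}$ and optimistic, so the modification can only increase the optimism probability. Set $(\check\theta_t,\check X_t)=(\theta_t,X_t)$ if $\theta_t\in\Theta_{t-1}$, and $(\check\theta_t,\check X_t)=(\theta_\star,x_\star)$ otherwise. An $\cF_{t-1}$-measurable maximiser of $J$ over $\Theta_{t-1}$ works equally well.

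Consider the $\cF_{t-1}$-measurable event $\{\theta_\star\in\Theta_{t-1}\}\supseteq\cE_\star$. On it, $\check\theta_t\in\Theta_{t-1}$ surely. Moreover $\check p_{t-1}:=\mathbb{P}(J(\check\theta_t)\ge J(\theta_\star)\mid\cA_{t-1})\ge p_{t-1}$, because every removed draw is replaced by an optimistic one.

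Your Step 2 then goes through verbatim with a conditionally independent copy of the pair $(\check\theta_t,\check X_t)$. It gives $J(\theta_\star)-J(\check\theta_t)\le (2b_{t-1}/p_{t-1})\,\mathbb{E}[\norm{\check X_t}_{V_{t-1}^{-1}}\mid\cA_{t-1}]$. In Step 3, use the differences $\mathbb{E}[\norm{\check X_t}_{V_{t-1}^{-1}}\mid\cA_{t-1}]-\norm{\check X_t}_{V_{t-1}^{-1}}$. These are still $(\cA_t)$-adapted, since $\check X_t$ is $\cF_t$-measurable and $\cF_t\subset\cA_t$, and they are bounded by $1/\sqrt\lambda$. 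On $\cE$ we have $\check\theta_t=\theta_t$ and $\check X_t=X_t$ for all $t\le n$. Hence the realized sums, the elliptical-potential term and $J(\theta_\star)-J(\theta_t)$ are those of the actual algorithm, and no constant is lost.
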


\begin{proof}[Proof of \cref{lem:regret-via-exceedance}]
For each $t \geq 1$, we identify $\theta_t = \theta_{t-1}^{J_t}$ and take $b_{t-1} = \bar\gamma \gamma_{t-1}^{\delta/n} \beta_{t-1}^\delta$. The result will hold on the $1-4\delta$ event given by the intersection of the event of the master theorem and the event where the uniform lower bound on the exceedance fraction holds. We will lower bound each $p_{t-1}$ by $p$ and use that $(b_{t-1})$ is nondecreasing to bound $\max_{i \in [t]} b_{i-1} \leq b_{t-1}$.

We define the filtrations $(\cF_t)$ and $(\cA_t)$ by taking
\[
  \cF_t = \sigma(\zeta^1, \dots, \zeta^m) \vee \sigma((X_s, Y_s, J_s, \xi^1_s, \dots, \xi^m_s) \colon s \leq t)\,,
\]
and $\cA_t = \sigma(\xi_{t+1}^1, \dots, \xi_{t+1}^m) \vee \cF_t$, where $\vee$ denotes the join of $\sigma$-algebras. The nesting property and the measurability requirements on the processes $(\theta_t)$, $(b_{t-1})$, $(\hat\theta_{t-1})$, are met by construction.

First, we confirm that our choice of $(b_{t-1})_{t \geq 1}$ leads to the needed properties for $\cE^\complement$ and $\cE_\star^\complement$. Recall that
\[
  \theta_t = \theta_{t-1}^{J_t}
  = \hat\theta_{t-1} + \bar\gamma \beta_{t-1}^\delta V_{t-1}^{-1} S_{t-1}^{J_t}\,.
\]
Hence
\[
  \norm{\hat\theta_{t-1}-\theta_t}_{V_{t-1}}
  =
  \bar\gamma\beta_{t-1}^\delta
  \norm{S_{t-1}^{J_t}}_{V_{t-1}^{-1}}.
\]
For $t\in[n]$, let
\[
a_n := \sqrt d+\sqrt{2\log(4n/\delta)}
\spaced{and}
r_t := \sqrt{2\log(4n/\delta)+d\log(1+(t-1)/(\lambda d))}.
\]
Define
\[
\cE_t := \left\{
\norm{S_{t-1}^{J_t}}_{V_{t-1}^{-1}}
\le \gamma_{t-1}^{\delta/n}
\right\}
\]
and, for $j\in[m]$,
\[
A_t^j := \{\norm{\zeta^j}>a_n\}
\spaced{and}
B_t^j :=
\left\{
\norm{\sum_{s=1}^{t-1}\xi_s^jX_s}_{V_{t-1}^{-1}}
> r_t
\right\}.
\]
By the definition of $b_{t-1}$, we have
$\bigcap_{t\in[n]}\cE_t \subseteq\cE$.

It remains to bound $\P{(\bigcap_{t\in[n]}\cE_t)^\complement}$. For any $t\in[n]$, using
$V_{t-1}\succeq \lambda I$ and the triangle inequality,
\[
\norm{S_{t-1}^{J_t}}_{V_{t-1}^{-1}}
\le
\norm{\zeta^{J_t}}
+
\norm{\sum_{s=1}^{t-1}\xi_s^{J_t}X_s}_{V_{t-1}^{-1}}.
\]
Since $r_t\ge \sqrt{2\log(4n/\delta)}$, this gives
\[
(\cE_t)^\complement \subseteq A_t^{J_t}\cup B_t^{J_t}.
\]
For every fixed $j\in[m]$, Gaussian concentration gives $\P{A_t^j}\le \delta/(4n)$, while the self-normalised concentration inequality of \citet{abbasi2011improved} gives $\P{B_t^j}\le \delta/(4n)$. Moreover, $A_t^j$ and $B_t^j$ are $\cF_{t-1}$-measurable, and $J_t$ is uniform on $[m]$ conditionally on $\cF_{t-1}$. Hence
\[
\P{A_t^{J_t}}
=
\E\left[\frac1m\sum_{j=1}^m \ind{A_t^j}\right]
=
\frac1m\sum_{j=1}^m \P{A_t^j}
\le \frac{\delta}{4n}
\]
and similarly
\[
\P{B_t^{J_t}}
=
\E\left[\frac1m\sum_{j=1}^m \ind{B_t^j}\right]
=
\frac1m\sum_{j=1}^m \P{B_t^j}
\le \frac{\delta}{4n}.
\]
Therefore $\P{(\cE_t)^\complement}\le \delta/(2n)$ for every $t\in[n]$. Taking a union bound over $t\in[n]$ yields
\[
\P{(\bigcap_{t\in[n]}\cE_t)^\complement}\le \frac{\delta}{2}.
\]
Since $\bigcap_{t\in[n]}\cE_t \subseteq\cE$, this implies $\P{\cE^\complement}\le \delta/2\le\delta$.

Moreover, since
  \[
    \bar\gamma \gamma_{t-1}^{\delta/n} \geq \bar\gamma\sqrt{d} \geq 1\,,
  \]
  we have that $b_{t-1} \geq \beta_{t-1}^\delta$, and therefore $\P{\cE_\star^\complement} \leq \delta$ by \cref{lem:conf_ellip}.

  To conclude, we will establish that on $\cE_\star$, for all $t \geq 1$,
  \[
    p_{t-1} \geq \min_{t\in [n]}\inf_{u\in\Sd} E_{t,m}(u,1/\bar\gamma)\,,
  \]
  with the result then following by intersecting $\bar\cE$ with the event where the right-hand side is lower-bounded by $p$. For this, fix $t \in [n]$. Observe that by the choice of $X_t$, $\langle \theta_t, X_t \rangle \geq \langle \theta_t, x_\star \rangle$. On $\cE_\star$,
  \[
    \langle \theta_\star - \hat\theta_{t-1}, x_\star \rangle \leq \norm{\theta_\star - \hat \theta_{t-1}}_{V_{t-1}} \norm{x_\star}_{V_{t-1}^{-1}} \leq \beta_{t-1}^\delta \norm{x_\star}_{V_{t-1}^{-1}}\,.
  \]
  Therefore, on $\cE_\star$,
  \[
    \langle \theta_t - \hat\theta_{t-1}, x_\star \rangle \geq \beta_{t-1}^\delta \norm{x_\star}_{V_{t-1}^{-1}}
    \implies \langle \theta_t, X_t \rangle \geq \langle \theta_\star, x_\star \rangle\,.
  \]
If $x_\star = 0$, then $0 \in \cX$, and therefore $\langle \theta_t, X_t \rangle = \max_{x \in \cX} \langle x, \theta_t \rangle \geq 0 = \langle \theta_\star, x_\star \rangle$, so $p_{t-1} = 1$ and the result holds (indeed, without the need for the $1/p$ factor). Assume henceforth that $x_\star \neq 0$ and define $u_\star = V_{t-1}^{-1} x_\star$. Then $\norm{u_\star}_{V_{t-1}} = \norm{x_\star}_{V_{t-1}^{-1}}$, and using $\theta_t = \theta_{t-1}^{J_t}$ we obtain
  \[
    \langle \theta_t - \hat\theta_{t-1}, x_\star \rangle
    = \bar\gamma\beta_{t-1}^\delta\langle V_{t-1}^{-1} S^{J_t}_{t-1}, x_\star \rangle
    = \bar\gamma\beta_{t-1}^\delta \langle S^{J_t}_{t-1}, u_\star \rangle\,.
  \]
  Hence, if
  \[
    \frac{\langle S_{t-1}^{J_t}, u_\star \rangle}{\norm{u_\star}_{V_{t-1}}} \geq \frac{1}{\bar\gamma}\,,
  \]
  then
  \[
    \langle \theta_t - \hat\theta_{t-1}, x_\star \rangle
    \geq \beta_{t-1}^\delta \norm{x_\star}_{V_{t-1}^{-1}},
  \]
  which, by the previous implication on $\cE_\star$, yields
  \[
    \langle \theta_t, X_t \rangle \geq \langle \theta_\star, x_\star \rangle\,.
  \]
  Now note that $\theta_{t-1}^1, \dots, \theta^m_{t-1}$ and $u_\star$ are $\cA_{t-1}$-measurable and that conditionally on $\cA_{t-1}$, $J_t$ is uniform on $[m]$. Thus,
  \begin{align*}
        p_{t-1}
        &= \P{\langle \theta_t, X_t \rangle \geq \langle x_\star, \theta_\star \rangle \mid \cA_{t-1} } \\
        &\geq  \P{\langle S^{J_t}_{t-1}, u_\star \rangle /\norm{u_\star}_{V_{t-1}} \geq 1/\bar{\gamma} \mid \cA_{t-1} } \\
        &= E_{t,m}(u_\star, 1/\bar\gamma) \\
        &\geq \min_{t \in [n]} \inf_{u \in \Rd\setminus \{0\}} E_{t,m}(u, 1/\bar\gamma) \\
        &= \min_{t \in [n]} \inf_{u \in \Sd} E_{t,m}(u, 1/\bar\gamma)\,,
  \end{align*}
where the final equality uses that $E_{t,m}(u,c)$ is invariant under rescaling of $u$.
\end{proof}

\begin{figure}[p]
\centering
\includegraphics[width=0.75\linewidth]{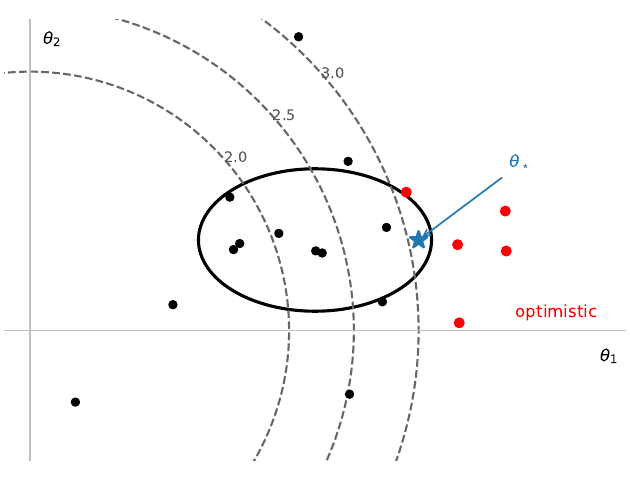}
\caption{Schematic geometry for $\cX = \Bd$:
the confidence ellipsoid (solid) is off-center and intersects level sets of the optimal reward $J(\theta) = \max_{x\in \cX} \ip{x,\theta} = \norm{\theta}$ (dashed).
For illustration purposes,
a sample of $m$ perturbed model parameters is drawn from a Gaussian
with covariance inflated by about $1.2\times$ relative to the ellipsoid axes;
those whose optimal reward exceeds that of the true parameter are highlighted as optimistic (red coloured dots).
The master regret bound states that if we can keep the exceedance frequency of optimistic samples above a constant, then the regret will be under control.
The difficulty with ensemble sampling is that, unlike in Thompson sampling, the model parameters are not freshly drawn in each time step but evolve in a correlated fashion over time. As such, controlling the exceedance frequencies is more challenging, and this is the main technical contribution of the paper.
}
\label{fig:optimism-geometry}
\end{figure}

\section{Lipschitzness of self-normalised martingale on unit sphere (proof of \cref{lem:from-fixed-to-uniform})}
\label{appendix:lipschitz-proof}
\fromFixedToUniform*

\begin{proof}[Proof of \cref{lem:from-fixed-to-uniform}]
    To argue for Lipschitzness for a fixed $u\in \Sd$,
  we separate out the self-normalized $V_{t-1}^{-1/2} S_{t-1}^j$ term from the rest:
  \[
  \frac{\ip{u,S_{t-1}^j}}{\norm{u}_{V_{t-1}}}
  = \ip*{ \frac{V_{t-1}^{1/2}u}{\norm{V_{t-1}^{1/2}u}}, V_{t-1}^{-1/2} S_{t-1}^j}\,.
  \]
  By a Cauchy--Schwarz argument, the Lipschitz constant of the map under investigation is at most
  the product of $\norm{V_{t-1}^{-1/2} S_{t-1}^j}$
  and the Lipschitz constant of the map $u \mapsto V_{t-1}^{1/2}u/\norm{V_{t-1}^{1/2}u}$.
  By the choice of $\gamma_{t-1}^{\delta/m}$, a union bound, with probability at least $1-\delta/2$,
  for all $t\in [n]$ and $j\in [m]$, $\norm{V_{t-1}^{-1/2} S_{t-1}^j} \le \gamma_{n-1}^{\delta/m}$.
  As to the Lipschitzness of the second map,
  the inequality
  \[
  \norm*{\frac{a}{\norm{a}} - \frac{b}{\norm{b}}} \le \frac{2\norm{a-b}}{\min\{\norm{a},\norm{b}\}}
  \]
  that holds for any $a,b\in \Rd$ nonzero vectors gives that this map
  is $2\norm{V_{t-1}^{1/2}}/\sqrt{\lambda}$-Lipschitz on $\Sd$.
  Now, since $V_{t-1}$ is positive definite,
  $\norm{V_{t-1}^{1/2}} = \sqrt{\norm{V_{t-1}}}$ and a standard calculation gives that
  $\norm{V_{t-1}} \le \lambda + \sum_{s=1}^{t-1} \norm{X_s}^2 \le \lambda + n$.
  Putting things together and a union bound gives the result.
\end{proof}

\clearpage

\section{Continuous-time preliminaries}
\label{sec:cont-time-prelims}
We will use a few standard notions from continuous-time martingale theory, and this short note fixes notation. First, $a\wedge b$ denotes the minimum of $a$ and $b$.
Many continuous-time texts index time with subscripts; we follow that convention in this preliminaries section and in quoted results.
In our own proofs we often write the continuous-time variable as an argument (e.g., $W(t)$) to avoid clashes with discrete-time subscripts; we mix these notations and hope the context makes the time variable clear (see, e.g., the OU lemma).

For this subsection only, we write $(\mathcal F_t)_{t\ge 0}$ for the continuous-time filtration; it is unrelated to the discrete-time $\cF_t$ used elsewhere.
For the definitions below we only need that $(\mathcal F_t)$ is nondecreasing; if desired, one can assume the usual conditions (right-continuity and completeness).
An adapted process $M=(M_t)_{t\ge 0}$ is one with $M_t$ being $\mathcal F_t$-measurable for each $t$.
A (continuous-time) martingale is an adapted process with $\E|M_t|<\infty$ for all $t$ and $\E[M_t\mid\mathcal F_s]=M_s$ for $s\le t$.
A continuous martingale is a martingale with continuous paths.
A local martingale is an adapted process for which there exists an increasing sequence of stopping times $\tau_n\uparrow\infty$ such that, for each $n$, the stopped process $(M_{t\wedge \tau_n})_{t\ge 0}$ is a martingale with respect to the same filtration; this is weaker than being a martingale and captures processes that behave like martingales up to larger and larger random horizons. Local martingales are the natural class in continuous-time analysis.
A continuous local martingale is a local martingale with continuous paths.
A standard Brownian motion $W=(W_t)_{t\ge 0}$ has $W_0=0$, continuous paths, independent, Gaussian increments with $W_t-W_s\sim \mathcal N(0,t-s)$ for $0\le s\le t$; in particular, it is a martingale (hence a local martingale).
For Gaussian processes, the law is determined by the covariance function; in particular, a centered Gaussian process with covariance $\E[B_sB_t]=s\wedge t$ is a standard Brownian motion.
For a continuous local martingale $M$, its quadratic variation $\langle M\rangle_t$ can be defined as the limit in probability of sums of squared increments along partitions of $[0,t]$; it is an increasing process and the continuous-time analogue of accumulated conditional variance (indeed, $M_t^2-\langle M\rangle_t$ is a local martingale).
For Brownian motion, $\langle W\rangle_t=t$.
For two continuous local martingales $M,N$, the covariation $\langle M,N\rangle_t$ is defined similarly via products of increments, and we write $\langle M,N\rangle$ for the process $(\langle M,N\rangle_t)_{t\ge 0}$ and $MN$ for the pointwise product process $(M_tN_t)_{t\ge 0}$; strong orthogonality means $\langle M,N\rangle$ is identically zero as a process.
For background and proofs, see \citet[Ch.~I]{RevuzYor1999} for Brownian motion and filtrations, \citet[Ch.~II]{RevuzYor1999} for martingales, \citet[Ch.~IV]{RevuzYor1999} for quadratic variation and covariation, and \citet[Ch.~V]{RevuzYor1999} for time changes and the DDS/Knight results.

\section{A Representation Result for Diagonal Gaussian Martingale Transforms}
\label{sec:diag-gauss-embed}
\theoremDiagGaussEmbedding*

We start with some auxiliary results.

\begin{lemma}[Pinned Brownian segment]\label{lem:pinned-bm-seg}
Fix $\Delta>0$ and let $Z\sim \mathcal N(0,\Delta)$. On an extension of the probability space,
there exists a standard Brownian motion $(B_s)_{0\le s\le \Delta}$ such that $B_\Delta = Z$.
\end{lemma}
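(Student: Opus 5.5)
The plan is the Brownian bridge construction. On an extension of the probability space, we take a standard Brownian motion $(\beta_s)_{0\le s\le\Delta}$ that is independent of $Z$. Such an extension always exists: form the product of the original space with a space carrying a Brownian motion, and lift $Z$ to the product. From these two independent ingredients we define
\[
B_s \;:=\; \beta_s - \frac{s}{\Delta}\beta_\Delta + \frac{s}{\Delta} Z, \qquad 0\le s\le \Delta.
\]
The first two terms form a Brownian bridge on $[0,\Delta]$ pinned at zero at both ends. The last term linearly interpolates from $0$ to $Z$. By construction $B_\Delta = \beta_\Delta - \beta_\Delta + Z = Z$, and $B$ has continuous paths with $B_0=0$.

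It remains to show that $B$ is a standard Brownian motion on $[0,\Delta]$. The vector $(\beta,Z)$ is jointly Gaussian and centred, since its two components are independent centred Gaussians. $B$ is a linear image of $(\beta,Z)$, so $B$ is a centred Gaussian process. As recalled in \cref{sec:cont-time-prelims}, it therefore suffices to check that $\E[B_sB_t]=s\wedge t$ for $s,t\in[0,\Delta]$. By independence the covariance splits into a bridge part plus the $Z$ part. The bridge part is
\[
s\wedge t - \frac{st}{\Delta} - \frac{st}{\Delta} + \frac{st}{\Delta^2}\Delta \;=\; s\wedge t - \frac{st}{\Delta},
\]
and the $Z$ part is $\frac{st}{\Delta^2}\Var(Z) = \frac{st}{\Delta}$. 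The two sum to $s\wedge t$. Continuity of paths together with this covariance identifies the law of $B$ as that of standard Brownian motion on $[0,\Delta]$.

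No step here is a real obstacle; the covariance computation is routine. The one point that needs care, with an eye to how the lemma is used in \cref{thm:diag-gauss-embed}, is the extension itself. The auxiliary motion $\beta$ should be chosen independent of everything already on the space, not merely of $Z$. Then the bridge $\beta_s-\frac{s}{\Delta}\beta_\Delta$ is independent of the original $\sigma$-algebra, and the segment can be glued onto previously constructed segments with independent increments. I would record this stronger independence explicitly, since it is exactly what later lets the concatenated pinned segments form a continuous martingale in a suitable filtration.
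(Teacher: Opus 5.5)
Your proposal is correct and is essentially the paper's proof: the same process $\beta_s - \frac{s}{\Delta}\beta_\Delta + \frac{s}{\Delta}Z$ with $\beta$ independent of $Z$, identified as standard Brownian motion from Gaussianity, continuity and the covariance $s\wedge t$. Your bridge-part computation $s\wedge t - st/\Delta$ is spelled out more explicitly than the paper's, and your remark that $\beta$ should be independent of the entire original $\sigma$-algebra is exactly the assumption the paper imposes when it uses this lemma in the proof of the diagonal Gaussian embedding theorem.
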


\begin{proof}
Let $\widetilde B$ be a standard Brownian motion on $[0,\Delta]$, independent of $Z$, and define
\[
B(t)\;:=\; \widetilde B(t) - \frac{t}{\Delta}\widetilde B(\Delta) + \frac{t}{\Delta}Z\,, \qquad t\in[0,\Delta]\,.
\]
Then $B(0)=0$ and $B(\Delta)=Z$. The process $B$ is continuous and Gaussian because it is an affine transform of the
jointly Gaussian family $(Z, (\widetilde B(t))_{t\in[0,\Delta]})$. Moreover, for $0\le s\le t\le\Delta$,
\begin{align*}
  \Cov(B(s),B(t)) &= \frac{st}{\Delta^2}\Var Z
  + \Cov\!\Big(\widetilde B(s) - \frac{s}{\Delta} \widetilde B(\Delta),\, \widetilde B(t) - \frac{t}{\Delta} \widetilde B(\Delta)\Big) \\
  &= \Cov(\widetilde B(s), \widetilde B(t)) = s \wedge t \,.
\end{align*}
Since $B$ has a centered Gaussian law with covariance $s\wedge t$, it is a standard Brownian motion.
\end{proof}

\begin{remark}
\cref{lem:pinned-bm-seg} may be viewed as sampling $Z$ and then sampling (conditionally on $Z$)
a Brownian bridge from $0$ to $Z$ over $[0,\Delta]$ (i.e., a pinned Brownian segment).
\end{remark}

\begin{theorem}[Dambis--Dubins--Schwarz {\cite[Ch.~V, Thm.~1.6--1.7]{RevuzYor1999}}]\label{thm:DDS}
  Let $(M_t,\mathcal F_t)_{t\ge 0}$ be a continuous local martingale with $M_0=0$ and quadratic variation
  $\langle M\rangle_t$. Define the right-continuous inverse
  \[
  \tau(s) \ :=\ \inf\{t\ge 0:\ \langle M\rangle_t \ge s\},\qquad s\ge 0\,,
  \]
  and set $\tau(\infty):=\sup_{s\ge 0}\tau(s)$.
  Then, there exists an enlargement of the filtered
  probability space $(\Omega,\mathcal F,(\mathcal F_{\tau(s)})_{s\ge 0})$ and a standard Brownian motion
  $(W_t)_{t\ge 0}$ on the enlarged space such that for any $s\ge 0$,
  \[
  W_s \ =\ M_{\tau(s)} \text{ on the event } \{0\le s <\langle M\rangle_\infty\}
  \cup\{ s = \langle M\rangle_\infty < \infty\}\,
  \,,
  \]
  Consequently, for any $t\ge 0$,
  \begin{align}
    M_t \ =\ W_{\langle M\rangle_t}
    \text{ on the event } \{ 0\le t<\tau(\langle M\rangle_\infty)\}\cup \{
      t=\tau(\langle M\rangle_\infty),\, \langle M\rangle_\infty<\infty
     \},
    \label{eq:revdds}
  \end{align}
  where for the last display
  we let $M_\infty:=\lim_{t\to\infty}M_t$ on the set $\{ \langle M\rangle_\infty<\infty\}$ and define it arbitrarily otherwise.
\end{theorem}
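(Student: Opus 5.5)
The plan is the classical time-change argument of \citet[Ch.~V]{RevuzYor1999}: define $W$ by reading $M$ along the inverse clock $\tau$, show that $W$ is a continuous local martingale with quadratic variation $s$, and conclude by Lévy's characterization. The case $\langle M\rangle_\infty<\infty$ is handled by splicing on an independent Brownian motion. Throughout, set $\mathcal G_s:=\mathcal F_{\tau(s)}$. Since $\langle M\rangle$ is continuous and nondecreasing, each $\tau(s)$ is an $(\mathcal F_t)$-stopping time, $s\mapsto\tau(s)$ is right-continuous and nondecreasing, $\langle M\rangle_{\tau(s)}=s$ for $s<\langle M\rangle_\infty$, and $(\mathcal G_s)$ is a filtration.

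\textbf{Step 1 (flat intervals; main obstacle).} I would first prove that, almost surely, $M$ is constant on every interval on which $\langle M\rangle$ is constant. It is enough to show, for each rational $q$, that $M$ is constant on $[q,\sigma_q]$, where $\sigma_q:=\inf\{t>q:\langle M\rangle_t>\langle M\rangle_q\}$. To do this, localize so that the stopped processes are bounded. The process $N_t:=M_{(q+t)\wedge\sigma_q}-M_q$ is then a continuous martingale with $\langle N\rangle\equiv 0$. Since $N^2-\langle N\rangle$ is a martingale, $\E N_t^2=0$ for every $t$, and continuity of paths gives $N\equiv0$. A countable union over $q$ then covers every flat interval.

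This step makes $W_s:=M_{\tau(s)}$ continuous even at the jumps of $\tau$: at such a jump $\tau(s-)<\tau(s)$, the clock $\langle M\rangle$ is flat on $[\tau(s-),\tau(s)]$, so $M$ takes the same value at both endpoints. I expect this to be the delicate step, because the ``for all intervals simultaneously'' quantifier has to be obtained from countably many stopping-time statements.

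\textbf{Step 2 (the case $\langle M\rangle_\infty=\infty$).} Localize with $T_k:=\inf\{t:|M_t|\ge k\}$. For $s'\le s$, the process $M^{\tau(s)}$ has bounded quadratic variation $\langle M\rangle_{\tau(s)}=s$, so it is an $L^2$-bounded martingale. Optional stopping applied to $M^{\tau(s)}$ and to $(M^{\tau(s)})^2-\langle M\rangle^{\tau(s)}$ therefore gives
\[
\E[W_s\mid\mathcal G_{s'}]=W_{s'}\qquad\text{and}\qquad \E[W_s^2-s\mid\mathcal G_{s'}]=W_{s'}^2-s'.
\]
Hence $W$ is a continuous $(\mathcal G_s)$-martingale with $\langle W\rangle_s=s$, and Lévy's characterization shows that $W$ is a standard Brownian motion.

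\textbf{Step 3 (the case $\langle M\rangle_\infty<\infty$).} On this event, $L^2$ convergence of $M^{\tau(s)}$ (equivalently, the usual convergence theorem for continuous local martingales with finite quadratic variation) shows that $M_\infty$ exists. Enlarge the space with a Brownian motion $\beta$ independent of $\mathcal F_\infty$, and set
\[
W_s:=M_{\tau(s)}\ \text{ for } s<\langle M\rangle_\infty,\qquad W_s:=M_\infty+\beta_{s-\langle M\rangle_\infty}\ \text{ for } s\ge\langle M\rangle_\infty.
\]
Equivalently, $W_s=M_{\tau(s)}+\beta_{(s-\langle M\rangle_\infty)\vee0}$, and this representation is continuous in $s$. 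Repeat Step 2 in the filtration generated by $\mathcal G_s$ and $\beta$: the two pieces are orthogonal martingales, so $\langle W\rangle_s=s$, and Lévy's characterization again applies. The identity $W_s=M_{\tau(s)}$ on the stated events, including $s=\langle M\rangle_\infty$, holds by construction.

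\textbf{Step 4 (the consequence \eqref{eq:revdds}).} Fix $t$ in the indicated event. By definition $\tau(\langle M\rangle_t)\ge t$, and $\langle M\rangle$ is constant on $[t,\tau(\langle M\rangle_t)]$. Step 1 then gives $M_t=M_{\tau(\langle M\rangle_t)}=W_{\langle M\rangle_t}$, where the last equality is the defining identity applied at $s=\langle M\rangle_t$; the two events in the statement are exactly those on which this $s$ is admissible.
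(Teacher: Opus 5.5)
\textbf{Gap in Step 3.} The paper does not prove this theorem. It cites Revuz--Yor, and your route is their argument: $M$ is constant on flat stretches of $\langle M\rangle$; optional stopping for the $L^2$-bounded martingales $M^{\tau(s)}$; Lévy's characterization; an independent Brownian motion spliced in after $\langle M\rangle_\infty$. Steps 1, 2 and 4 are sound. Step 3 has a real gap.

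Your splice is $W_s=M_{\tau(s)}+\beta_{(s-\langle M\rangle_\infty)\vee0}$. Its second piece is \emph{not} a martingale in the filtration generated by $\mathcal G_s$ and $\beta$, i.e.\ $\mathcal G_s\vee\sigma(\beta_u:u\le s)$. The reason is that $\beta$ is revealed on its own clock but enters $W$ with a random delay.

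Here is a concrete failure. Take $M_t=B_{t\wedge1}$, so $\langle M\rangle_\infty=1$ and $W_s=B_{s\wedge1}+\beta_{(s-1)\vee0}$. At time $0.9$ this filtration already contains $\beta_{0.5}$, hence $\mathbb E[W_{1.5}-W_{0.9}\mid\mathcal G_{0.9}\vee\sigma(\beta_u:u\le0.9)]=\beta_{0.5}\neq0$. So ``repeat Step 2'' and ``the two pieces are orthogonal martingales'' both fail in that filtration, and Lévy's characterization cannot be invoked there.

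There are two ways to repair this.
\begin{itemize}
\item Use Revuz--Yor's splice $W_s=M_{\tau(s)}+\int_0^s\mathbf 1\{u>\langle M\rangle_\infty\}\,d\beta_u=M_{\tau(s)}+\beta_s-\beta_{s\wedge\langle M\rangle_\infty}$.
  \begin{itemize}
  \item The integrand is predictable for $\mathcal G_s\vee\sigma(\beta_u:u\le s)$, because $\langle M\rangle_{\tau(u)}=\langle M\rangle_\infty\wedge u$ is $\mathcal G_u$-measurable.
  \item Independence of $\beta$ from $\mathcal F_\infty$ keeps $M_{\tau(\cdot)}$ a martingale in the enlarged filtration and kills the cross-variation.
  \item Hence $\langle W\rangle_s=(s\wedge\langle M\rangle_\infty)+(s-\langle M\rangle_\infty)^+=s$.
  \item After $\langle M\rangle_\infty$ this equals $M_\infty+\tilde\beta_{s-\langle M\rangle_\infty}$ with $\tilde\beta_u=\beta_{\langle M\rangle_\infty+u}-\beta_{\langle M\rangle_\infty}$, so the form you wanted is kept.
  \end{itemize}
\item Keep your splice, but work in $\mathcal H_s:=\mathcal G_s\vee\sigma\big(\beta_{(u-\langle M\rangle_\infty)\vee0}:u\le s\big)$.
  \begin{itemize}
  \item Verify the martingale and covariation identities by first conditioning on $\mathcal F_\infty\vee\sigma(\beta_v:v\le(s-\langle M\rangle_\infty)\vee0)$.
  \item This works because $\mathcal H_s\subset\mathcal G_s\vee\sigma(\beta)$: for $u\le s$, $(u-\langle M\rangle_\infty)\vee0$ depends only on $\langle M\rangle_\infty\wedge s$.
  \end{itemize}
\end{itemize}

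\textbf{Smaller points.} The statement defines $\tau(s)=\inf\{t:\langle M\rangle_t\ge s\}$. Despite its label, this is the \emph{left}-continuous inverse. You use the right-continuous one, $\inf\{t:\langle M\rangle_t>s\}$, when you say $\tau$ is right-continuous and that $\tau(\langle M\rangle_t)\ge t$. With the literal definition, $\tau(\langle M\rangle_t)\le t$ and the flat stretch is $[\tau(\langle M\rangle_t),t]$. By Step 1, $M_{\tau(s)}$ is the same for either inverse, so nothing breaks.

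Also, $s=\langle M\rangle_t$ is always admissible, since $\langle M\rangle_t\le\langle M\rangle_\infty$. So Step 4 actually yields $M_t=W_{\langle M\rangle_t}$ for every $t$. The two events in the statement are therefore not ``exactly'' the admissible ones; the conclusion you get is stronger than required.
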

Above, $M_\infty$ is well-defined by Proposition~IV.1.26 of \citet{RevuzYor1999}. The identity \eqref{eq:revdds}
follows from elementary arguments.
The ``enlargement'' is only needed to handle the case where $\langle M\rangle_\infty<\infty$ with positive probability.
Intuitively, the local martingale determines the Brownian motion up to the time $\langle M\rangle_\infty$, after which we just extend the Brownian motion independently.
In our case, $\langle M\rangle_\infty<c_\infty$ for some finite constant $c_\infty>0$ with probability one
and we will rely on accessing the Brownian motion beyond the time $\langle M\rangle_\infty$ up to time $c_\infty$.

Note that the above theorem does not define the Brownian motion $W$ uniquely, since it only specifies its values up to time $\langle M\rangle_\infty$.
As such, in what follows we will call any process $W$ that satisfies the statement of the theorem a \emph{Dambis--Dubins--Schwarz (DDS) Brownian motion} associated with $M$.

\begin{theorem}[Knight's theorem (orthogonal martingales) {\cite[Ch.~V, Thms.~1.9--1.10]{RevuzYor1999}}]\label{thm:knight}
  Let $M^1,\dots,M^m$ each be continuous local martingales as in \cref{thm:DDS}.
  Assume that the $M^j$ are pairwise strongly orthogonal: $\langle M^j,M^k\rangle\equiv 0$ for all $j\neq k$.
  Then, the associated DDS Brownian motions $W^1,\dots,W^m$ can be chosen to be a standard $m$-dimensional Brownian motion (i.e., with independent components).
\end{theorem}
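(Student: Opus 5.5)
The plan is the classical characteristic-function argument via complex exponential martingales, as in \citet[Ch.~V]{RevuzYor1999}. For each $j$, let $\tau_j(s)=\inf\{t\ge0:\langle M^j\rangle_t> s\}$ and define the DDS candidate
\[
W^j_s \;:=\; M^j_{\tau_j(s)} \ \text{ for } s<\langle M^j\rangle_\infty,
\qquad
W^j_s \;:=\; M^j_\infty+\beta^j_{s-\langle M^j\rangle_\infty} \ \text{ otherwise}.
\]
Here $\beta^1,\dots,\beta^m$ are independent standard Brownian motions on an enlargement of the space, independent of $\mathcal F_\infty$. By \cref{thm:DDS} applied coordinatewise, each $W^j$ is a DDS Brownian motion for $M^j$. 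It therefore remains only to show that the vector $(W^1,\dots,W^m)$ has the law of an $m$-dimensional standard Brownian motion, that is, that its components are jointly independent.

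To identify the joint law, fix deterministic step functions $f_j=\sum_k a_{jk}\ind{(s_{k-1},s_k]}$ with compact support. It suffices to show
\[
\E\Big[\exp\Big(i\sum_j\int_0^\infty f_j(s)\,dW^j_s\Big)\Big]=\exp\Big(-\tfrac12\sum_j\int_0^\infty f_j(s)^2ds\Big),
\]
since these characteristic functionals determine all finite-dimensional distributions of the increments. Split each stochastic integral into the part before $\langle M^j\rangle_\infty$ and the part after it. The part before equals $\int_0^\infty f_j(\langle M^j\rangle_t)\,dM^j_t$, a time-change identity valid for step integrands because $W^j$ is constant-in-law across flat stretches of $\langle M^j\rangle$. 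Now set $N_t=\sum_j\int_0^t f_j(\langle M^j\rangle_u)\,dM^j_u$. By strong orthogonality, $\langle N\rangle_t=\sum_j\int_0^t f_j(\langle M^j\rangle_u)^2\,d\langle M^j\rangle_u$, because all cross-covariations vanish. Hence $\mathcal E_t=\exp(iN_t+\tfrac12\langle N\rangle_t)$ is a continuous local martingale. It is bounded, since $\langle N\rangle_\infty\le\sum_j\int f_j^2\,ds<\infty$ deterministically, so it is a true uniformly integrable martingale and $\E[\mathcal E_\infty]=1$.

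The post-$\langle M^j\rangle_\infty$ parts involve only the $\beta^j$, which are independent of $\mathcal F_\infty$. Conditioning on $\mathcal F_\infty$ therefore yields the factor $\exp(-\tfrac12\sum_j\int_{\langle M^j\rangle_\infty}^\infty f_j^2\,ds)$. Combining this with $\E[\mathcal E_\infty]=1$, and noting that $\langle N\rangle_\infty=\sum_j\int_0^{\langle M^j\rangle_\infty}f_j^2\,ds$ by change of variables, gives exactly the displayed Gaussian formula.

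The main obstacle is the careful time-change step. One must justify that $\int f_j\,dW^j$ over $[0,\langle M^j\rangle_\infty)$ coincides with $\int f_j(\langle M^j\rangle_t)\,dM^j_t$, including on flat stretches of $\langle M^j\rangle$, where $M^j$ is constant; this uses that $M^j$ is constant wherever $\langle M^j\rangle$ is. Because the $f_j$ are step functions, this reduces to the pathwise identity $M^j_{\tau_j(s_k)}-M^j_{\tau_j(s_{k-1})}=W^j_{s_k}-W^j_{s_{k-1}}$, so no general stochastic-calculus time-change theorem is needed. Orthogonality enters only through the vanishing of the cross terms in $\langle N\rangle$, and this is precisely what forces the limit law to factorize over $j$.
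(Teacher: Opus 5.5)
Your proposal is correct and is essentially the standard exponential-martingale/characteristic-functional argument of Revuz--Yor (Ch.~V, Thms.~1.9--1.10), which the paper cites rather than reproving: orthogonality removes the cross terms in $\langle N\rangle$, boundedness of $\exp(iN+\tfrac12\langle N\rangle)$ gives expectation one, and the independent extensions after $\langle M^j\rangle_\infty$ supply the remaining Gaussian factor. One small refinement: with your right-continuous inverse, $\{t: s_{k-1}<\langle M^j\rangle_t\le s_k\}=(\tau_j(s_{k-1}),\tau_j(s_k)]$, so the step-integrand time-change identity holds directly by definition; the constancy of $M^j$ on flat stretches of $\langle M^j\rangle$ is needed only for path continuity of $W^j$.
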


\begin{proof}[Proof of \cref{thm:diag-gauss-embed}]
Fix $n\in\N$. Work on an extension on which we can construct, for each $t\in\{0,1,\dots,n-1\}$ and each $j\in[m]$,
a standard Brownian motion segment
\[
B_{t}^{(j)}:\ [0,1]\to\R
\qquad\text{such that}\qquad
B_{t}^{(j)}(1)=\xi_t^{(j)},
\]
using \cref{lem:pinned-bm-seg} with $\Delta=1$.
Assume these segments are mutually independent across pairs $(t,j)$ \emph{conditional on the endpoints}
$\{B_t^{(j)}(1)\}_{t,j}=\{\xi_t^{(j)}\}_{t,j}$, and that the additional randomness used to sample the paths
given their endpoints (equivalently, the associated Brownian bridge innovations) is independent of the
original $\sigma$-field.

Fix $j\in[m]$ and define a continuous-time process $M^{(j)}(r)$ on $[0,n]$ by stitching scaled segments:
set $M^{(j)}(0):=0$ and for $r\in[t,t+1]$ define
\[
M^{(j)}(r)
\;:=\;
M^{(j)}(t) + D_{t,j}\, B_{t}^{(j)}\!\big(r-t\big).
\]
Then, for each integer $t\in\{0,1,\dots,n\}$,
\[
M^{(j)}(t)
=
\sum_{s=0}^{t-1} D_{s,j}\, \xi_s^{(j)},
\]
with the empty sum equal to $0$. In particular, for each $t\in\{0,1,\dots,n-1\}$,
\[
M^{(j)}(t+1)=\sum_{s=0}^{t} D_{s,j}\, \xi_s^{(j)} = M_t^{(j)}.
\]

Let $(\mathcal G_r)_{r\in[0,n]}$ be the filtration generated by $(\mathcal F_{\lfloor r\rfloor-1})$ (with $\mathcal F_{-1}$ trivial) together with
the Brownian segments revealed progressively in $r$. That is,
for $r\in[0,n]$,
\[
\mathcal G_r
\;:=\;
\sigma\!\Big(
\mathcal F_{\lfloor r\rfloor-1}
\;\cup\;
\big\{B_{t}^{(k)}(u):\, k\in[m],\ t\in\{0,\dots,\lceil r\rceil-1\},\ 0\le u\le (r-t)_+\wedge 1\big\}
\Big),
\]
where $(x)_+:=\max\{x,0\}$. Since $D_{t,j}$ is $\mathcal F_{t-1}$-measurable and
$B_t^{(j)}(\cdot)-B_t^{(j)}(0)$ has independent increments and is independent of $\mathcal F_{t-1}$,
the process $M^{(j)}$ is a continuous local martingale w.r.t.\ $(\mathcal G_r)$ with quadratic variation
\[
\langle M^{(j)}\rangle_r
=
\sum_{s=0}^{t-1} D_{s,j}^2 \;+\; D_{t,j}^2\,(r-t),
\qquad r\in[t,t+1].
\]
In particular, at integer times $t\in\{0,1,\dots,n-1\}$,
\(
\langle M^{(j)}\rangle_{t+1} = A_{t,j}^2.
\)

Extend $(M^{(j)}(r),\mathcal G_r)$ to $r\ge n$ by setting $M^{(j)}(r):=M^{(j)}(n)$ and
$\mathcal G_r:=\mathcal G_n$ for $r\ge n$. Then $M^{(j)}$ is a continuous local martingale on $[0,\infty)$
with $\langle M^{(j)}\rangle_\infty=\langle M^{(j)}\rangle_n=A_{n-1,j}^2$, so \cref{thm:DDS} applies.

We will now apply Knight's theorem (\cref{thm:knight}) to obtain the $m$ independent Brownian motions $W^1,\dots,W^m$ from the $m$ continuous local martingales $M^{(1)},\dots,M^{(m)}$ that start at zero.
The strong orthogonality condition is immediate from the construction:
for $j\neq k$ the driving Brownian segments $\{B_t^{(j)}\}_{t\in\{0,\dots,n-1\}}$ and $\{B_t^{(k)}\}_{t\in\{0,\dots,n-1\}}$
are independent, so $\langle M^{(j)},M^{(k)}\rangle\equiv 0$.
Thus, the theorem applies and we obtain independent standard Brownian motions $W^1,\dots,W^m$ such that for all $r\in [0,n]$,
\[
M^{(j)}(r) = W^j\!\big(\langle M^{(j)}\rangle_r\big),
\]
hence in particular $M_t^{(j)} = M^{(j)}(t+1) = W^j(A_{t,j}^2)$ for all $t\in\{0,1,\dots,n-1\}$.
This verifies the hypotheses of Knight's theorem for orthogonal continuous local martingales (\cref{thm:knight}), so the DDS Brownian motions $(W^1,\dots,W^m)$ are independent standard Brownian motions. This completes the proof.
\end{proof}

\section{Time-uniform exceedance count for Brownian motions}
\label{sec:bm-quantile-K}
\theoremExcBrownian*

The proof follows a more or less standard path that is usually used to get the law of iterated logarithm for Brownian motion,
\citet[e.g.,][Ch. II, Theorem 9.23]{karatzasshreve1991brownian}. See also \citet{darlingerdos1956} for an early reference where the embedding technique is also used.

We will use a specific scaling/time-change of Brownian motion---often called the Lamperti transform---to obtain an Ornstein--Uhlenbeck (OU) process.
In particular, for a standard Brownian motion $W=(W(t))_{t\ge 0}$, we define the (standard) OU process by
$U(s)=e^{-s/2}W(e^s)$ for $s\in\R$. This is a centered Gaussian process with covariance $\E[U(s)U(t)]=e^{-|s-t|/2}$ (so $\Var(U(s))=1$).
We also extend the class of OU processes by allowing them to be started from an arbitrary point $x\in\R$, by which we mean the process that
takes the form
$U_x(u)=e^{-u/2}x+e^{-u/2}W(e^u-1)$ for $u\ge 0$ (so $U_x(0)=x$),
where $W$ is a standard Brownian motion.
It is not hard to see that both an OU and an OU started from $x$ are Gaussian processes and thus are uniquely determined by their means and covariances.
We will need the following result:

\begin{lemma}[OU transition via Lamperti transform]\label{lem:ou-transition}
Let $(W(t))_{t\ge 0}$ be a standard Brownian motion and consider the OU process
\[
U(s) := e^{-s/2}W(e^s),\qquad s\in\R.
\]
Fix $s_0\in\R$ and set $\mathcal F_{s_0}:=\sigma\{W(t):0\le t\le e^{s_0}\}$. Define for $t\ge 0$,
\begin{align}
\widetilde W_{s_0}(t)
:= e^{-s_0/2}\bigl(W(e^{s_0}(1+t)) - W(e^{s_0})\bigr),
\end{align}
and for $u\ge 0$,
\begin{align}
\widetilde U_{s_0}(u) := e^{-u/2}\,\widetilde W_{s_0}(e^u-1). \label{eq:ou-tilde}
\end{align}
Then $\widetilde W_{s_0}$ is a standard Brownian motion independent of $\mathcal F_{s_0}$ and, for every $u\ge 0$,
\[
U(s_0+u)=e^{-u/2}U(s_0)+\widetilde U_{s_0}(u).
\]
Moreover, $(\widetilde U_{s_0}(u))_{u\ge 0}$ is an Ornstein--Uhlenbeck process started at $0$ and independent of $U(s_0)$.
\end{lemma}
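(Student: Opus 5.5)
The plan is to verify the three claims of \cref{lem:ou-transition} in order: the Brownian property and independence of $\widetilde W_{s_0}$, the decomposition identity, and the OU/independence statement for $\widetilde U_{s_0}$. Everything reduces to Brownian scaling and the independent-increments property.

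\emph{Step 1: $\widetilde W_{s_0}$ is a standard Brownian motion independent of $\mathcal F_{s_0}$.} Write $a:=e^{s_0}>0$. Then $\widetilde W_{s_0}(t)=a^{-1/2}\bigl(W(a+at)-W(a)\bigr)$, and I would argue in two moves.
\begin{itemize}
\item By the simple Markov property (independent increments), the shifted process $r\mapsto W(a+r)-W(a)$ is a standard Brownian motion independent of $\sigma\{W(t):t\le a\}=\mathcal F_{s_0}$.
\item Brownian scaling $r\mapsto a^{-1/2}B(ar)$ preserves standard Brownian motion; one checks continuity, $\widetilde W_{s_0}(0)=0$, centred Gaussianity, and covariance $a^{-1}(at\wedge as)=t\wedge s$.
\end{itemize}
Independence from $\mathcal F_{s_0}$ is preserved because the rescaled process is a measurable functional of the shifted one.

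\emph{Step 2: the decomposition.} Fix $u\ge0$ and note $e^{s_0+u}=a(1+(e^u-1))$. Hence
\[
W(e^{s_0+u})=W(a)+a^{1/2}\widetilde W_{s_0}(e^u-1).
\]
Multiplying by $e^{-(s_0+u)/2}=a^{-1/2}e^{-u/2}$ gives
\[
U(s_0+u)=e^{-u/2}a^{-1/2}W(a)+e^{-u/2}\widetilde W_{s_0}(e^u-1)=e^{-u/2}U(s_0)+\widetilde U_{s_0}(u),
\]
using $U(s_0)=e^{-s_0/2}W(e^{s_0})$ and \eqref{eq:ou-tilde}. This is pure algebra.

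\emph{Step 3: $\widetilde U_{s_0}$ is an OU process started at $0$, independent of $U(s_0)$.} By Step 1 and the definition preceding the lemma, $\widetilde U_{s_0}(u)=e^{-u/2}\cdot 0+e^{-u/2}\widetilde W_{s_0}(e^u-1)$ is exactly $U_0$ driven by the Brownian motion $\widetilde W_{s_0}$. So it is an OU process started from $x=0$. Its covariance can be recorded for later use: for $u\le v$ it is $e^{-(u+v)/2}(e^u-1)$. Independence holds because $U(s_0)$ is $\mathcal F_{s_0}$-measurable while $\widetilde U_{s_0}$ is a functional of $\widetilde W_{s_0}$, which is independent of $\mathcal F_{s_0}$. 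In fact the stronger statement that $\widetilde U_{s_0}$ is independent of the whole past $\mathcal F_{s_0}$ holds; this is the Markov property needed later in the grid argument for \cref{thm:bm-quantile-K}.

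The only point requiring care is Step 1's independence claim. I would justify it either by citing the simple Markov property of Brownian motion, or directly by checking that finite-dimensional vectors of increments after time $a$ are jointly Gaussian with, and uncorrelated from, $(W(t_1),\dots,W(t_k))$ for $t_i\le a$. Joint Gaussianity with zero cross-covariance gives independence of finite-dimensional distributions, and a $\pi$-$\lambda$ argument extends this to the generated $\sigma$-algebras. No step is a genuine obstacle.
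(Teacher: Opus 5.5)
Your proposal is correct and follows essentially the same route as the paper. Both show that $\widetilde W_{s_0}$ is a standard Brownian motion independent of $\mathcal F_{s_0}$ via independent increments (you split this into shift plus scaling, while the paper checks the increments of $\widetilde W_{s_0}$ directly), derive the identity algebraically from $e^{s_0+u}=e^{s_0}(1+(e^u-1))$, and obtain independence of $\widetilde U_{s_0}$ from $U(s_0)$ because $U(s_0)$ is $\mathcal F_{s_0}$-measurable.
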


\begin{proof}
The process $\widetilde W_{s_0}$ is continuous with $\widetilde W_{s_0}(0)=0$, and for $0\le t_1<t_2$,
\[
\widetilde W_{s_0}(t_2)-\widetilde W_{s_0}(t_1)
=
e^{-s_0/2}\bigl(W(e^{s_0}(1+t_2))-W(e^{s_0}(1+t_1))\bigr).
\]
By independent increments of Brownian motion, the increments above are independent and Gaussian with mean $0$ and
variance $t_2-t_1$, so $\widetilde W_{s_0}$ is a standard Brownian motion. It is measurable with respect to increments
of $W$ after time $e^{s_0}$, hence independent of $\mathcal F_{s_0}$.

Using $e^{s_0+u}=e^{s_0}(1+(e^u-1))$, we compute
\[
U(s_0+u)
=
e^{-(s_0+u)/2}W(e^{s_0+u})
=
e^{-u/2}U(s_0)+e^{-u/2}\widetilde W_{s_0}(e^u-1)
=
e^{-u/2}U(s_0)+\widetilde U_{s_0}(u).
\]
Since we have verified that $\widetilde W_{s_0}$ is a standard Brownian motion independent of $\mathcal F_{s_0}$,
by its definition, $\widetilde U_{s_0}$ is an OU process started at $x=0$ and is independent of $U(s_0)$.
Indeed, $U(s_0)$ is $\mathcal F_{s_0}$-measurable while $\widetilde U_{s_0}$ is measurable with respect to $\widetilde W_{s_0}$,
which is independent of $\mathcal F_{s_0}$.
\end{proof}
We record another standard result.
\begin{lemma}[Brownian maximum tail bound]\label{lem:bm-maximum-tail}
Let $(W(t))_{t\ge 0}$ be a standard Brownian motion. Then, for any $T>0$ and any $a>0$,
\[
\mathbb{P}\Big(\sup_{t\in[0,T]} |W(t)| \ge a\Big)
\le
4\exp\Big(-\frac{a^2}{2T}\Big).
\]
\end{lemma}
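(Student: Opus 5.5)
The plan is to combine the reflection principle with a union bound over the sign. For $a>0$, the event $\{\sup_{t\le T}|W(t)|\ge a\}$ is contained in $\{\sup_{t\le T}W(t)\ge a\}\cup\{\sup_{t\le T}(-W(t))\ge a\}$. Since $-W$ is again a standard Brownian motion, both events have the same probability, so
\[
\mathbb{P}\Big(\sup_{t\in[0,T]}|W(t)|\ge a\Big)\le 2\,\mathbb{P}\Big(\sup_{t\in[0,T]}W(t)\ge a\Big).
\]

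For the one-sided supremum I would use the reflection principle, in the standard form found in \citet{RevuzYor1999} or \citet{karatzasshreve1991brownian}. It states that $\sup_{t\le T}W(t)$ has the same law as $|W(T)|$. This gives
\[
\mathbb{P}\Big(\sup_{t\le T}W(t)\ge a\Big)=2\,\mathbb{P}(W(T)\ge a)=2\bigl(1-\Phi(a/\sqrt T)\bigr).
\]
Then I would apply the Chernoff bound for the Gaussian tail, $1-\Phi(x)\le e^{-x^2/2}$ for $x\ge0$, which follows from Markov's inequality applied to $e^{\lambda W(T)}$ with $\lambda=a/T$. Combining the three factors gives $2\cdot2\cdot e^{-a^2/(2T)}=4e^{-a^2/(2T)}$, which is the claimed bound.

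If one prefers to avoid the reflection principle, the one-sided step can instead go through Doob's maximal inequality applied to the nonnegative submartingale $e^{\lambda W(t)}$. This gives $\mathbb{P}(\sup_{t\le T}W(t)\ge a)\le e^{-\lambda a+\lambda^2T/2}$, and optimizing at $\lambda=a/T$ yields $e^{-a^2/(2T)}$. That route even gives the constant $2$ in place of $4$. Neither step is a real obstacle; the only point needing care is citing the reflection principle (or Doob's inequality) for the continuous-time supremum, which is standard.
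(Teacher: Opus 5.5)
Your proof is correct and follows the paper's argument essentially verbatim: sign symmetry to reduce to the one-sided supremum, the reflection principle giving $2\bigl(1-\Phi(a/\sqrt{T})\bigr)$, and the Chernoff tail $1-\Phi(x)\le e^{-x^2/2}$. Your alternative route via Doob's maximal inequality for the submartingale $e^{\lambda W(t)}$ is also valid and yields the sharper constant $2$ in place of $4$.
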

\begin{proof}
By the reflection principle for Brownian motion (see, e.g., \citet[Ch.~III, Prop.~3.7]{RevuzYor1999}),
\[
\mathbb{P}\Big(\sup_{t\in[0,T]} W(t) \ge a\Big)
= 2\mathbb{P}(W(T)\ge a) \;=\; 2\big(1-\Phi(a/\sqrt{T})\big).
\]
By symmetry of Brownian motion,
\[
\mathbb{P}\Big(\sup_{t\in[0,T]} |W(t)| \ge a\Big)
\leq 2\,\mathbb{P}\Big(\sup_{t\in[0,T]} W(t) \ge a\Big)
= 4\big(1-\Phi(a/\sqrt{T})\big).
\]
The standard Gaussian tail bound $1-\Phi(x)\le e^{-x^2/2}$ (e.g., Chernoff) for $x>0$ gives the result.
\end{proof}

\begin{proof}[Proof of \cref{thm:bm-quantile-K}]
We present the proof in two parts: first we prove the main claim up to the furthermore clause,
then we prove the furthermore clause.
\paragraph{Proof of the main claim.}
Fix $0<\tau < \tau'<\infty$, $c>0$, and $0<p<\frac14(1-\Phi(c))$.
Define
\[
\varepsilon \;:=\; \frac{\Phi^{-1}(1-4p)-c}{3} \;>\;0,
\]
so that $1-\Phi(c+3\varepsilon)=4p$.
Since $\varepsilon>0$, we can choose a discretization $h\in(0,1]$ small enough so that
\begin{equation}\label{eq:h-constraints}
e^{-h/2}(c+3\varepsilon)\ge c+2\varepsilon
\qquad\text{and}\qquad
4\exp\!\Big\{-\frac{2\varepsilon^2}{e^h-1}\Big\}\le \frac12.
\end{equation}
Let $\sigma:=\log\tau$, $\sigma':=\log\tau'$, and
\[
K \;:=\; \Big\lceil \frac{\sigma'-\sigma}{h}\Big\rceil
 \;=\; \Big\lceil \frac{\log(\tau'/\tau)}{h}\Big\rceil,
\qquad
s_k := \sigma + kh\quad (k=0,1,\dots,K),
\]
so that $[\sigma,\sigma']\subseteq \bigcup_{k=0}^{K-1}[s_k,s_{k+1}]$.

For each $j\in[m]$, define the time-changed process
\[
U_j(s)\;:=\; e^{-s/2}W^j(e^s), \qquad s\in\mathbb{R}.
\]
Then $(U_j)_{j=1}^m$ are independent standard Ornstein--Uhlenbeck processes (see \cref{lem:ou-transition}), and for $t\in[\tau,\tau']$
(with $s=\log t\in[\sigma,\sigma']$),
\[
W^j(t)\ge c\sqrt{t}
\;\Longleftrightarrow\;
U_j(\log t)\ge c.
\]
Thus it suffices to prove that with probability at least $1-\delta$,
\[
\forall s\in[\sigma,\sigma']:\quad \#\{j\in[m]: U_j(s)\ge c\}\ge pm.
\]

For each $k\in\{0,\dots,K-1\}$ define
\[
S_k^j := \mathbf{1}\Big\{\forall s\in[s_k,s_{k+1}]: U_j(s)\ge c\Big\},
\qquad
S_k := \sum_{j=1}^m S_k^j.
\]
If we show that $S_k\ge pm$ for all $k$, then the desired bound holds for all $s\in[\sigma,\sigma']$,
hence for all $t\in[\tau,\tau']$.

Fix $k\in\{0,\dots,K-1\}$. By \cref{lem:ou-transition}, for any $u\in[0,h]$,
\[
U_j(s_k+u) \;=\; e^{-u/2}U_j(s_k) + \widetilde U_{j,k}(u),
\]
where $\widetilde U_{j,k}$ is an OU process started at $0$ at time $0$, independent of $U_j(s_k)$.
Define the event indicators
\[
A_k^j := \mathbf{1}\{U_j(s_k)\ge c+3\varepsilon\},
\qquad
E_k^j := \mathbf{1}\Big\{\forall u\in[0,h]: \widetilde U_{j,k}(u)\ge -2\varepsilon\Big\}.
\]
By \eqref{eq:h-constraints}, for all $u\in[0,h]$,
\[
e^{-u/2}(c+3\varepsilon)-2\varepsilon \;\ge\; e^{-h/2}(c+3\varepsilon)-2\varepsilon \;\ge\; c,
\]
so $A_k^j E_k^j=1$ implies $S_k^j=1$. Hence, writing $Z_k^j:=A_k^jE_k^j$ and $Z_k:=\sum_{j=1}^m Z_k^j$,
we have $S_k\ge Z_k$.

Now $\mathbb{P}(A_k^j=1)=\mathbb{P}(N(0,1)\ge c+3\varepsilon)=1-\Phi(c+3\varepsilon)=4p$.
Let $q:=\mathbb{P}(E_k^j=0)$ (this does not depend on $j,k$). Since $A_k^j$ and $E_k^j$ are independent
(and independent across $j$), $Z_k\sim \mathrm{Binomial}(m,r)$ with
\[
r = (4p)(1-q).
\]

We upper bound $q$.
By \cref{lem:ou-transition}, we may represent an OU process started at $0$ as
$\widetilde U_{j,k}(u)=e^{-u/2}\widetilde W_{j,k}(e^u-1)$ for a standard Brownian motion $\widetilde W_{j,k}$.
Therefore,
\begin{align*}
q
&= \mathbb{P}\Big(\inf_{0\le u\le h}\widetilde U_{j,k}(u)<-2\varepsilon\Big)
 \;\le\; \mathbb{P}\Big(\sup_{0\le u\le h}|\widetilde U_{j,k}(u)|\ge 2\varepsilon\Big) \\
&\le \mathbb{P}\Big(\sup_{0\le t\le e^h-1}|\widetilde W_{j,k}(t)|\ge 2\varepsilon\Big)
 \;\le\; 4\exp\!\Big\{-\frac{(2\varepsilon)^2}{2(e^h-1)}\Big\}
 \;=\; 4\exp\!\Big\{-\frac{2\varepsilon^2}{e^h-1}\Big\}\,,
\end{align*}
where we used \cref{lem:bm-maximum-tail} for the last inequality.
By \eqref{eq:h-constraints}, $q\le \tfrac12$, hence $r\ge (4p)\cdot \tfrac12 = 2p$.

By the multiplicative Chernoff bound for binomials \citep[e.g.,][Theorem~1.1]{dubhashi_panconesi_2009},
\[
\mathbb{P}\Big(Z_k \le \frac{rm}{2}\Big) \;\le\; \exp\!\Big\{-\frac{rm}{8}\Big\}
\;\le\; \exp\!\Big\{-\frac{pm}{4}\Big\},
\]
where we used $r\ge 2p$. On the event $\{Z_k \ge rm/2\}$ we have $Z_k\ge pm$, hence also $S_k\ge pm$.

Finally, by a union bound over $k=0,\dots,K-1$,
\[
\mathbb{P}\Big(\exists k\in\{0,\dots,K-1\}: S_k<pm\Big)
\;\le\;
K \exp\!\Big\{-\frac{pm}{4}\Big\}.
\]
Under the assumption $m\ge \frac{4}{p}\log\!\big(\frac{K}{\delta}\big)$, the right-hand side is at most $\delta$.
Thus, with probability at least $1-\delta$, for every $k$ we have $S_k\ge pm$, and consequently for all
$s\in[\sigma,\sigma']$,
$\#\{j\in[m]:U_j(s)\ge c\}\ge pm$.
Translating back to $t=e^s$ gives the claimed statement for $(W^j(t))_{j\le m}$ on $[\tau,\tau']$.

\paragraph{Proof of the furthermore clause}
Inspecting the argument above, we see that the only place where the choice of $h$ matters is through
the two constraints in \cref{eq:h-constraints}. We now make these constraints explicit.

Recall $\varepsilon := (\Phi^{-1}(1-4p)-c)/3>0$, so that $1-\Phi(c+3\varepsilon)=4p$.

\smallskip
\noindent\textbf{(i) The drift/contractivity constraint}\ \ \
The condition $e^{-h/2}(c+3\varepsilon)\ge c+2\varepsilon$ is equivalent to
\[
e^{-h/2} \;\ge\; \frac{c+2\varepsilon}{c+3\varepsilon}
\qquad\Longleftrightarrow\qquad
h \;\le\; 2\log\!\Big(\frac{c+3\varepsilon}{c+2\varepsilon}\Big).
\]

\smallskip
\noindent\textbf{(ii) The short-interval fluctuation constraint}\ \ \
The condition $4\exp\{-2\varepsilon^2/(e^h-1)\}\le \tfrac12$ is equivalent to
\[
\exp\!\Big\{-\frac{2\varepsilon^2}{e^h-1}\Big\}\le \frac18
\qquad\Longleftrightarrow\qquad
\frac{2\varepsilon^2}{e^h-1}\ge \log 8
\qquad\Longleftrightarrow\qquad
h \;\le\; \log\!\Big(1+\frac{2\varepsilon^2}{\log 8}\Big).
\]

Therefore, defining
\[
h_{\star} \;:=\; \min\!\Big\{1,\; 2\log\!\Big(\frac{c+3\varepsilon}{c+2\varepsilon}\Big),\;
\log\!\Big(1+\frac{2\varepsilon^2}{\log 8}\Big)\Big\},
\]
we have that any $h\in(0,h_\star]$ satisfies \cref{eq:h-constraints}, and hence the conclusion of
\cref{thm:bm-quantile-K} holds for any such $h$.

\smallskip
\noindent\textbf{Claim: $c\le 1/20$, $p\le 1/10$ implies $h=1/250$ is admissible.}
Assume $c\le 1/20$ and $p\le 1/10$. Then $1-4p\ge 0.6$, so
$\Phi^{-1}(1-4p)\ge \Phi^{-1}(0.6)\ge 1/4$, and hence
\[
\varepsilon \;=\; \frac{\Phi^{-1}(1-4p)-c}{3}
\;\ge\; \frac{1/4-1/20}{3}
\;=\; \frac{1}{15}.
\]
For the drift/contractivity bound, note that $(c+3\varepsilon)/(c+2\varepsilon)=1+\varepsilon/(c+2\varepsilon)$
is decreasing in $c$ and increasing in $\varepsilon$ for $c,\varepsilon>0$, so using $c\le 1/20$ and
$\varepsilon\ge 1/15$ we compute
\[
2\log\!\Big(\frac{c+3\varepsilon}{c+2\varepsilon}\Big)
\;\ge\;
2\log\!\Big(\frac{1/20 + 3\cdot(1/15)}{1/20 + 2\cdot(1/15)}\Big)
=
2\log\!\Big(\frac{1/4}{11/60}\Big)
=
2\log\!\Big(\frac{15}{11}\Big)
>\; 0.6
>\; \frac{1}{250}.
\]
For the fluctuation bound, using $\varepsilon\ge 1/15$,
\[
\log\!\Big(1+\frac{2\varepsilon^2}{\log 8}\Big)
\;\ge\;
\log\!\Big(1+\frac{2}{225\log 8}\Big).
\]
Since $\log 8 < 2.1$, we have $\frac{2}{225\log 8} > \frac{2}{225\cdot 2.1} > 0.0042$ and hence,
using $\log(1+x)\ge x/(1+x)$ for $x>0$,
\[
\log\!\Big(1+\frac{2\varepsilon^2}{\log 8}\Big)
\;\ge\;
\frac{0.0042}{1+0.0042}
>\; 0.004
=\; \frac{1}{250}.
\]
Thus $h_\star \ge 1/250$, so the choice $h=1/250$ is admissible.
\end{proof}

\section{Proof of \cref{cor:uniform-exceedance}}
\label{app:cor-uniform-exceedance}
\corUniformExceedance*
Before the proof, we record a simple log-inequality and then complete the proof.
\begin{lemma}[Solving a log-inequality]\label{lem:solve-log-ineq}
  Let $A>0$ and $B\ge 0$. If $m$ is positive and
  \[
  m \;\ge\; 2A\log A \;+\; 2B,
  \]
  then $m \ge A\log m + B$.
  \end{lemma}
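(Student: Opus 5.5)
The plan is a single concavity estimate that never uses the sign of $\log A$, so it covers every $A>0$ at once, including $A<1$ where $2A\log A$ is negative. The key tool is the elementary inequality $\log x\le x-1$ for $x>0$. We apply it at the rescaled point $x=m/(2A)$, which is positive because $m>0$ and $A>0$.

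\emph{Step 1 (bound $\log m$).} Write $\log m=\log(2A)+\log\bigl(m/(2A)\bigr)$. Applying $\log x\le x-1$ to the second term gives
\[
\log m\;\le\;\log(2A)+\frac{m}{2A}-1 .
\]
Multiplying by $A>0$ yields
\[
A\log m\;\le\;A\log A+A(\log 2-1)+\frac m2\;\le\;A\log A+\frac m2 .
\]
The last inequality holds because $\log 2<1$ and $A>0$.

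\emph{Step 2 (use the hypothesis).} By Step 1 it suffices to show $m\ge A\log A+\tfrac m2+B$. This is equivalent to $\tfrac m2\ge A\log A+B$, that is, to $m\ge 2A\log A+2B$, which is exactly the assumption. Every step is a linear rearrangement, so the argument also goes through when $2A\log A+2B$ is negative; in that case only $m>0$ is used to make the logarithms well defined. Chaining Steps 1 and 2 gives $m\ge A\log m+B$.

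There is no real obstacle. The only point needing care is the choice of the comparison point $2A$ inside the logarithm, so that the linear term has coefficient exactly $\tfrac12$ and the leftover constant $A(\log 2-1)$ is nonpositive for all $A>0$. This choice is what removes any need for a restriction such as $A\ge e$.
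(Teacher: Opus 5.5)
Your proof is correct, and it takes a different route from the paper. The paper gives no argument of its own: it rewrites the target as $m/A - B/A \ge \log m$ and invokes Proposition~4 of the reference it cites. That proposition is a ready-made sufficient condition for inequalities of the form $ax+b\ge\log x$, applied here with $a=1/A$ and $b=-B/A$.

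You instead prove the claim from scratch. Applying the tangent-line bound $\log x\le x-1$ at $x=m/(2A)$ gives $A\log m\le A\log A+\tfrac m2-(1-\log 2)A$, and the hypothesis then finishes the argument in one line.

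What your route buys:
\begin{itemize}
\item It is self-contained and needs no case analysis on whether $A<1$.
\item It shows where the factor $2$ comes from: the tangent point $2A$ leaves exactly half of $m$ to absorb $A\log A+B$.
\item It gives a slightly stronger statement. You never use $B\ge 0$, so the lemma holds for every real $B$. Keeping the slack term $-(1-\log 2)A$ would also weaken the requirement to $m\ge 2A\log A+2B-2(1-\log 2)A$.
\end{itemize}

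The paper's citation is shorter on the page. However, the reader has to trust the external result and check that its hypotheses line up after the substitution $a=1/A$, $b=-B/A\le 0$.
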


  \begin{proof}
  This is an immediate corollary of Proposition~4 in \citep{AllocationTCS10}:
  apply it to the inequality $m/A - B/A \ge \log m$.
  \end{proof}

\begin{proof}[Proof of \cref{cor:uniform-exceedance}]
Set $p:=1/10$ and $c:=2/\bar\gamma=1/20$.
Since $p_0(1/20)=\tfrac14(1-\Phi(1/20))>1/10$, the condition
$p<p_0(2/\bar\gamma)$ in \cref{lem:from-fixed-to-uniform} is satisfied.
Define
\[
L := 2\,\gamma_{n-1}^{\delta/m}\sqrt{1+n/\lambda},
\qquad
\epsilon := \frac{1}{\bar\gamma L},
\qquad
\delta' := \frac{\delta/2}{(3/\epsilon)^d}.
\]
By \cref{lem:from-fixed-to-uniform}, it suffices to show $m \ge m_0(c,\delta',p)$.
With $K:=\lceil 250\log((\lambda+n)/\lambda)\rceil$ and $p=1/10$,
\[
m_0(c,\delta',p)
=40\Bigg[\log\Big(\frac{2K}{\delta}\Big)+d\log\Big(\frac{3}{\epsilon}\Big)\Bigg].
\]

\paragraph{Step 1: bound the $K$-term.}
Since $\lambda\ge 1$ and $n\ge 2$, $\log((\lambda+n)/\lambda)\le \log(1+n)\le 2\log n\le 2\ell$,
hence $K\le 1+500\ell\le 501\ell$ and thus
\[
\log\Big(\frac{2K}{\delta}\Big)\le \log\Big(\frac{1002\,\ell}{\delta}\Big)
\le 10\ell,
\]
using $\ell\ge 1$, $\log\ell\le \ell$, and $\log(1/\delta)\le \ell$.

\paragraph{Step 2: bound the $\epsilon$-term by $\ell+\tfrac12\log m$.}
Since $\bar\gamma=40$, we have $3/\epsilon = 120L$, so
\[
\log\Big(\frac{3}{\epsilon}\Big)=\log(120)+\log L
\le \log(240)+\log\gamma_{n-1}^{\delta/m}+\tfrac12\log(1+n/\lambda).
\]
Using $\lambda\ge 1$ and $n\ge 2$, $\tfrac12\log(1+n/\lambda)\le \tfrac12\log(1+n)\le \ell$.
Let $x:=\log(4m/\delta)$. By the definition of $\gamma_t^\alpha$ and $n\ge d$,
\[
 \gamma_{n-1}^{\delta/m}
=\sqrt d+2\sqrt{2x+d\log\Big(1+\frac{n-1}{\lambda d}\Big)}
\le \sqrt d+2\sqrt{2x+2d\ell} \leq 4\sqrt{d\ell+x}\,.
\]
So,
\[
\log\gamma_{n-1}^{\delta/m} \le \log 4 + \tfrac12\log(d\ell+x).
\]
Assume $m\ge 3\ell$ (this will be implied by $m\ge 2000\,d\,\ell$).
Then $\log(4m)\le m$ and $\log(1/\delta)\le \ell$, so $x=\log(4m/\delta)\le m+\ell\le 2m$.
Also $d\ell\le m$ (since $m\ge 2000\,d\,\ell$), hence $d\ell+x\le 3m$ and thus
\[
\log\gamma_{n-1}^{\delta/m} \le \log 4 + \tfrac12\log(3m) \le 3 + \tfrac12\log m.
\]
Therefore
\[
\log\Big(\frac{3}{\epsilon}\Big)\le \ell + 9 + \tfrac12\log m.
\]

\paragraph{Step 3: reduce to a log-inequality and solve it.}
Combining Steps 1--2,
\[
m_0(c,\delta',p)
\le 40\Big(10\ell + d(\ell+9+\tfrac12\log m)\Big)
\le 20d\log m + 800\,d\,\ell,
\]
where we used $d\ge 1$ and $\ell\ge 1$ to absorb constants.
Thus it suffices to ensure
\[
m \ge 20d\log m + 800\,d\,\ell.
\]
Apply \cref{lem:solve-log-ineq} with $A:=20d$ and $B:=800\,d\,\ell$ to get the sufficient condition
\[
m \ge 40d\log(20d) + 1600\,d\,\ell.
\]
Since $n\ge d$ and $\delta\le 1$, we have $\log(20d)\le \log(20n)\le 4\ell$, hence
$40d\log(20d)\le 160\,d\,\ell$.
Therefore $m\ge 2000\,d\,\ell$ implies $m\ge m_0(c,\delta',p)$, and the result follows from
\cref{lem:from-fixed-to-uniform}.
\end{proof}

\section{Upper bound on $\gamma_t^{\delta/n}$}
\label{app:gamma-bound}
\begin{lemma}[Upper bound on $\gamma_t^{\delta/n}$]\label{lem:gamma-bound}
Assume $\lambda\ge 1$, $n\ge \max(2,d)$, and $\delta\in(0,1]$. Let
$\ell:=\max\{1,\log(n/\delta)\}$. Then for all $t\in[n]$,
\[
\gamma_{t}^{\delta/n} \le 10\sqrt{d\,\ell}.
\]
\end{lemma}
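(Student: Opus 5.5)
The plan is to bound the two terms in the definition of $\gamma_t^{\alpha}$ separately, with $\alpha=\delta/n$. Recall
\[
\gamma_t^{\delta/n} = \sqrt d + 2\sqrt{2\log(4n/\delta) + d\log\big(1+t/(\lambda d)\big)}\,.
\]
For the first term, since $\ell\ge 1$, we simply have $\sqrt d\le\sqrt{d\ell}$.

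For the logarithmic term inside the square root, we write $\log(4n/\delta)=\log 4+\log(n/\delta)\le \log 4+\ell\le 3\ell$, using $\ell\ge 1$ and $\log 4<2$. This gives $2\log(4n/\delta)\le 6\ell\le 6d\ell$.

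For the design term, we use $t\le n$, $\lambda\ge 1$ and $d\ge 1$ to get $\log(1+t/(\lambda d))\le \log(1+n)$. Since $n\ge 2$, we have $1+n\le n^2$, so $\log(1+n)\le 2\log n$. Because $\delta\le 1$, $\log n\le \log(n/\delta)\le \ell$. Hence $d\log(1+t/(\lambda d))\le 2d\ell$.

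Combining the two pieces, the quantity under the square root is at most $8d\ell$. Therefore
\[
\gamma_t^{\delta/n}\le \sqrt{d\ell}+2\sqrt{8}\,\sqrt{d\ell}\le (1+5.66)\sqrt{d\ell}\le 10\sqrt{d\ell}\,.
\]

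There is no real obstacle in this argument. The only care needed is in handling the edge cases: $\delta$ close to $1$ is covered by taking the maximum with $1$ in the definition of $\ell$, and small $n$ is covered by the assumption $n\ge2$, which justifies $\log(1+n)\le 2\log n$. The hypothesis $n\ge d$ is not needed here.
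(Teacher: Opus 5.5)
Your proposal is correct and follows essentially the same route as the paper: bound $\log(4n/\delta)\le 3\ell$ and $d\log(1+t/(\lambda d))\le d\log(1+n)\le 2d\ell$, then absorb the constants. Your remark that $n\ge d$ is unused also applies to the paper's proof, which likewise only needs $\lambda d\ge 1$.
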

\begin{proof}
By definition,
\[
\gamma_{t}^{\delta/n}
=
\sqrt d
+
2\sqrt{2\log(4n/\delta) + d\log\Big(1+\frac{t}{\lambda d}\Big)}.
\]
Since $t\le n$ and $\lambda\ge 1$, $\log(1+t/(\lambda d))\le \log(1+n)\le 2\log n\le 2\ell$.
Also,
\[
\log(4n/\delta)\le \log 4+\log(n/\delta)\le 3\ell.
\]
Therefore,
\[
\gamma_{t}^{\delta/n}
\le
\sqrt d+2\sqrt{6\ell+2d\ell}
\le 10\sqrt{d\,\ell},
\]
since $d\ge 1$ and $\ell\ge 1$.
\end{proof}

\section{Exceedance control for diagonal Gaussian martingale transforms}
\label{app:diag-gauss-exceedance}
\begin{corollary}[Scalar transforms with a common clock via Brownian embedding]\label{cor:diag-gauss-exceedance-bm}
Let $M_t=\sum_{s=0}^t D_s \xi_s$ be a diagonal martingale transform of a standard $m$-dimensional Gaussian noise,
where each $D_s$ is $\mathcal F_{s-1}$-measurable and scalar (the same across coordinates), and
$(\xi_s)_{s\ge 0}$ are i.i.d.\ $\mathcal N(0,I_m)$. Define the common clock
\[
A_t^2:=\sum_{s=0}^t D_s^2,\qquad t\ge 0.
\]
Fix $0<\tau < \tau'<\infty$, $c>0$, $0<p<p_0(c)$ and $\delta\in(0,1)$. If $A_t^2\in[\tau,\tau']$ almost surely
for all $t\in\{0,1,\dots,n-1\}$ and
\[
m \ge \frac{4}{p}\log\!\Big(\frac{K}{\delta}\Big),
\]
where $K$ is as in \cref{thm:bm-quantile-K} (choose any $h\in(0,h_\star]$ and set
$K=\lceil \log(\tau'/\tau)/h\rceil$), then with probability at least $1-\delta$,
\[
\min_{t\in\{0,1,\dots,n-1\}} \frac{1}{m}\sum_{j=1}^m
\ind*{\frac{M_{t,j}}{A_t}\ge c}
\;\ge\; p.
\]
\end{corollary}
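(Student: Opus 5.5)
This is essentially a composition of \cref{thm:diag-gauss-embed} and \cref{thm:bm-quantile-K}. The plan has three steps.

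\textbf{Step 1: embedding.} Apply \cref{thm:diag-gauss-embed} with $D_{s,j}=D_s$ for every $j$, i.e.\ $\diag(D_s)=D_sI_m$. The hypotheses are met: each $D_s$ is $\mathcal F_{s-1}$-measurable, and the $\xi_s$ are i.i.d.\ $\mathcal N(0,I_m)$ with $\xi_s$ independent of $\mathcal F_{s-1}$. All coordinate clocks then coincide, $A_{t,j}^2=A_t^2$. The theorem gives, on an extension of the probability space, independent standard Brownian motions $W^1,\dots,W^m$ such that
\[
M_{t,j}=W^j(A_t^2)\qquad\text{for all } t\in\{0,\dots,n-1\},\ j\in[m].
\]
The extension leaves the law of $(M_t)$ unchanged, so probabilities of events involving $M$ can be computed there.

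\textbf{Step 2: reduction to a deterministic window.} Since $A_t^2\in[\tau,\tau']$ almost surely, we have $A_t>0$. On the full-probability event where this holds for all $t$,
\[
\frac1m\sum_{j=1}^m \ind*{M_{t,j}/A_t\ge c}
=\frac1m\sum_{j=1}^m \ind*{W^j(A_t^2)/\sqrt{A_t^2}\ge c}
\ge \inf_{s\in[\tau,\tau']}\frac1m\sum_{j=1}^m \ind*{W^j(s)/\sqrt s\ge c}.
\]
The random times are thus absorbed by passing to the infimum over the deterministic window $[\tau,\tau']$. Because the right-hand side does not depend on $t$, this inequality holds simultaneously for every $t$, and hence for the minimum over $t\in\{0,\dots,n-1\}$.

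\textbf{Step 3: exceedance bound.} Apply \cref{thm:bm-quantile-K}, the ``furthermore'' part, with the chosen $h\in(0,h_\star]$ and $K=\lceil\log(\tau'/\tau)/h\rceil$. Under the stated lower bound on $m$, the infimum on the right-hand side is at least $p$ with probability at least $1-\delta$. Combining this with Step 2 gives the claim.

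\textbf{Expected obstacle.} The only delicate point is measurability of the infimum over the continuum $[\tau,\tau']$. The proof of \cref{thm:bm-quantile-K} already handles this, since it controls events defined via continuous paths. The remaining care is the almost-sure qualifier on $A_t^2\in[\tau,\tau']$, which is dealt with by intersecting with a null-complement event.
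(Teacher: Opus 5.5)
Your proposal is correct and follows the paper's proof essentially verbatim: embed via \cref{thm:diag-gauss-embed} with the common clock $A_{t,j}^2=A_t^2$, then bound the exceedance fraction at the random times $A_t^2\in[\tau,\tau']$ below by the infimum over the deterministic window, and finally invoke \cref{thm:bm-quantile-K}. Your explicit infimum step, which the paper leaves implicit, is the right justification for why the adaptivity of the clock (its possible dependence on the $W^j$) causes no difficulty.
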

\begin{proof}
By \cref{thm:diag-gauss-embed}, on an extension there exist independent standard Brownian motions
$W^1,\dots,W^m$ such that for all $t\in\{0,1,\dots,n-1\}$ and all $j\in[m]$,
$M_{t,j}=W^j(A_t^2)$. Hence
\[
\frac{1}{m}\sum_{j=1}^m
\ind*{\frac{M_{t,j}}{A_t}\ge c}
=
\frac{1}{m}\sum_{j=1}^m
\ind*{\frac{W^j(A_t^2)}{\sqrt{A_t^2}}\ge c}.
\]
Since $A_t^2\in[\tau,\tau']$ for all $t$, the time-uniform bound of \cref{thm:bm-quantile-K}
applies on $[\tau,\tau']$ and yields the claim. This scalar case is essentially the situation that arises
in our main proof.
\end{proof}

What if the diagonal transforms are not scalar multiples of the identity? In the general diagonal case,
the embedding gives $M_{t,j}=W^j(A_{t,j}^2)$ with potentially different clocks across coordinates.
Attempting to lower bound
\[
\frac{1}{m}\sum_{j=1}^m \ind*{\frac{W^j(A_{t,j}^2)}{\sqrt{A_{t,j}^2}}\ge c}
\]
uniformly in $t$ is much harder, because the times $A_{t,j}^2$ can vary with $j$ and with the history.
In the worst case one is led to control
\[
\inf_{\tau\le t_1,\dots,t_m\le \tau'} \frac{1}{m}\sum_{j=1}^m \ind*{W^j(t_j)\ge c\sqrt{t_j}},
\]
which is a substantially weaker object: even for independent Brownian motions, choosing different stopping
times for each coordinate can drive the average exceedance down as $\tau'$ grows.

For comparison, we can still obtain a direct bound when the diagonal coefficients are independent of the
Gaussian noise; this avoids the adaptive time-change issue but only yields a $\log n$ dependence.
Due to the independence assumption this result is not useful for our main proof and is included mainly for comparison purposes.

\begin{proposition}[Diagonal transforms with independent coefficients]\label{prop:diag-gauss-exceedance}
  Let $M_t=\sum_{s=0}^t D_s \xi_s$ be a diagonal martingale transform of a standard $m$-dimensional Gaussian noise,
where each $D_s=\diag(D_{s,1},\dots,D_{s,m})$ and the coefficient sequence $(D_s)_{s\ge 0}$ is independent of
$(\xi_s)_{s\ge 0}$, with $(\xi_s)_{s\ge 0}$ i.i.d.\ $\mathcal N(0,I_m)$.
Define the coordinate clocks
\[
A_{t,j}^2:=\sum_{s=0}^t D_{s,j}^2,\qquad t\ge 0,\ j\in[m],
\]
and assume $A_{t,j}>0$ for all $t\in\{0,1,\dots,n-1\}$ and $j\in[m]$. Fix
$c>0$, $\delta\in(0,1)$ and $0<p\le \frac12(1-\Phi(c))$. If
\[
m \ge \frac{4}{p}\log\!\Big(\frac{n}{\delta}\Big),
\]
then with probability at least $1-\delta$,
\[
\min_{t\in\{0,1,\dots,n-1\}} \frac{1}{m}\sum_{j=1}^m
\ind*{\frac{M_{t,j}}{A_{t,j}}\ge c}
\;\ge\; p.
\]
\end{proposition}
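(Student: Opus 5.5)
We condition on the coefficient sequence and then argue at each fixed time. Since $(D_s)_{s\ge 0}$ is independent of $(\xi_s)_{s\ge 0}$, conditionally on $\mathcal D:=\sigma((D_s)_{s\ge 0})$ the noise $(\xi_s)$ is still i.i.d.\ $\mathcal N(0,I_m)$ and the coefficients are deterministic. Hence, for each fixed $t\in\{0,\dots,n-1\}$ and $j\in[m]$, the variable $M_{t,j}=\sum_{s=0}^t D_{s,j}\xi_s^{(j)}$ is, conditionally on $\mathcal D$, centered Gaussian with variance $A_{t,j}^2>0$. Thus $M_{t,j}/A_{t,j}\sim\mathcal N(0,1)$ conditionally. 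Moreover, for fixed $t$ the coordinates $j=1,\dots,m$ depend on disjoint families $\{\xi_s^{(j)}\}_s$, so they are conditionally independent across $j$. No embedding is needed here: the point is that the clocks cannot react to the noise.

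Fix $t$ and set $I_{t,j}:=\ind{M_{t,j}/A_{t,j}\ge c}$. Conditionally on $\mathcal D$, the $I_{t,j}$ are i.i.d.\ Bernoulli with parameter $r:=1-\Phi(c)\ge 2p$. Therefore $\sum_j I_{t,j}\sim\mathrm{Binomial}(m,r)$. The multiplicative Chernoff bound, used exactly as in the proof of \cref{thm:bm-quantile-K}, gives
\[
\mathbb P\Big(\sum_j I_{t,j}< pm\,\Big|\,\mathcal D\Big)\le \mathbb P\Big(\sum_j I_{t,j}\le \tfrac{rm}{2}\,\Big|\,\mathcal D\Big)\le \exp\{-rm/8\}\le \exp\{-pm/4\}.
\]
Here the first inequality uses $pm\le rm/2$. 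Taking expectations removes the conditioning.

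Finally, we take a union bound over the $n$ times $t\in\{0,\dots,n-1\}$. The failure probability is at most $n\exp\{-pm/4\}$, and this is at most $\delta$ exactly when $m\ge \frac4p\log(n/\delta)$. This yields the claim.

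The only step requiring care is the conditional Gaussianity and independence across $j$. It relies entirely on the independence of $(D_s)$ from $(\xi_s)$, which prevents $D_{s,j}$ from depending on past $\xi$'s. I would state it cleanly via the regular conditional distribution given $\mathcal D$, or equivalently by first proving the bound for deterministic coefficients and then integrating over the law of $(D_s)$ using Fubini.
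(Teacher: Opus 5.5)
Your proposal is correct and follows essentially the same route as the paper's proof. Both condition on the coefficient sequence and note that $M_{t,j}/A_{t,j}$ are conditionally i.i.d.\ standard normal across $j$. Both then apply the multiplicative Chernoff bound using $1-\Phi(c)\ge 2p$, take expectations, and union bound over the $n$ times. Your remark about justifying the conditioning via a regular conditional distribution or Fubini is slightly more careful than the paper's treatment.
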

\begin{proof}
Fix $t\in\{0,1,\dots,n-1\}$. Let $D= (D_s)_{s\ge 0}$ denote the entire coefficient sequence.
Conditional on $D$, the variables
$M_{t,j}/A_{t,j}$ are independent standard normal across $j$, so each indicator has mean
$q:=1-\Phi(c)$.
By the multiplicative Chernoff bound,
\[
\mathbb P\Big(\frac{1}{m}\sum_{j=1}^m \ind*{\frac{M_{t,j}}{A_{t,j}}\ge c}<p\;\Big|\;D\Big)
\le \exp\!\Big\{-\frac{q m}{8}\Big\}
\le \exp\!\Big\{-\frac{p m}{4}\Big\},
\]
where we used $p\le q/2$.
Taking expectation and then a union bound over $t=0,\dots,n-1$ gives the claim under the stated condition on $m$.
\end{proof}

\paragraph{Comparison.}
If the clocks are identical across coordinates and we only know that the maximal clock value is
almost surely $O(n)$, then the proposition yields $m\gtrsim \log n$ via a union bound over $t$.
The Brownian corollary improves this to $m\gtrsim \log\log n$ once the clock is also bounded away from $0$
(so that $\log(\tau'/\tau)=O(\log n)$), without even using independence between the coefficients and the noise.

\section{Lower bound on ensemble size (proof of \cref{thm:lower-bound})}
\label{sec:ensemble-size-lb}

We prove that the size of the ensemble needs to scale at least linearly with the dimension $d$.

\lowerBoundThm*

The idea is that for certain arms, such as the unit ball, ensemble sampling will always play in the span of its initialisation---if this span does not capture $\theta_\star$, regret will be linear.

\begin{lemma}\label{lemma:es-lower-bound}
  Let $U =\spn\{\zeta^1, \dots, \zeta^m\}$ be the span of the initialisation of the ensemble. The actions $X_1, X_2, \dots$ chosen by ES satisfy
  \[
    (\forall t \geq 1)\qquad X_t \in U \quad \text{almost surely.}
  \]
\end{lemma}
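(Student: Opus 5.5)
We argue by strong induction on $t$, proving simultaneously that every model parameter $\theta^j_{t-1}$, $j\in[m]$, lies in $U$, and that $X_t\in U$. The action-selection step is the easy part. For $\cX=\Bd$ we have $\argmax_{x\in\Bd}\ip{x,\theta}=\{\theta/\norm{\theta}\}$ when $\theta\neq 0$. When $\theta_t=0$, the tie-breaking rule assumed in \cref{thm:lower-bound} forces $X_t=0$. In either case $X_t\in\spn\{\theta_t\}$. Hence it suffices to show that $\theta_t=\theta^{J_t}_{t-1}\in U$ almost surely.

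For the parameters, we combine the closed forms of $\htheta_{t-1}$ and $\theta^j_{t-1}$:
\[
\theta^j_{t-1} = V_{t-1}^{-1}\Big(\sum_{s=1}^{t-1} X_s Y_s + \bar\gamma\beta^\delta_{t-1}\Big(\sqrt\lambda\zeta^j+\sum_{s=1}^{t-1}\xi^j_s X_s\Big)\Big),
\qquad V_{t-1}=\lambda I+\sum_{s=1}^{t-1}X_sX_s\tran .
\]
By the induction hypothesis $X_1,\dots,X_{t-1}\in U$, so the vector in parentheses lies in $U$, since $\zeta^j\in U$ by definition. It remains to check that $V_{t-1}^{-1}$ maps $U$ into $U$. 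Each $X_sX_s\tran$ with $X_s\in U$ maps $U$ into $U$, and it also annihilates $U^\perp$ and has range in $U$. Therefore $V_{t-1}$ is block diagonal with respect to $\Rd=U\oplus U^\perp$: it acts as $\lambda I$ on $U^\perp$ and leaves $U$ invariant. Since $V_{t-1}$ is invertible and $U$ is finite-dimensional, $V_{t-1}(U)=U$, and so $V_{t-1}^{-1}(U)=U$. This gives $\theta^j_{t-1}\in U$ for every $j$, and by the first paragraph $X_t\in U$.

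The base case $t=1$ is the same computation with empty sums: $\theta^j_0=\bar\gamma\beta^\delta_0\lambda^{-1/2}\zeta^j\in U$. The argument holds pathwise for every realisation, so "almost surely" covers only the measure-zero subtlety in the choice of argmax. The only delicate point I anticipate is this tie-breaking at $\theta_t=0$. There the maximiser over $\Bd$ is all of $\Bd$, and the hypothesis $\theta_t=0\Rightarrow X_t=0$ is exactly what keeps $X_t$ inside $U$.
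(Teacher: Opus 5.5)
Your proof is correct and follows the paper's argument essentially step by step. Both argue by induction that every $\theta^j_{t-1}\in U$: $V_{t-1}$ leaves $U$ invariant and is injective, so $V_{t-1}^{-1}$ also preserves $U$. Both then use that on $\Bd$ the greedy action is $\theta_t/\norm{\theta_t}$, or $0$ by the tie-breaking rule. Your block-diagonal decomposition is just a more explicit version of the paper's remark that the restriction of $V_{t-1}$ to $U$ is invertible.
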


\begin{proof}[Proof of \cref{lemma:es-lower-bound}]
  At initialisation, for all $j \in [m]$, $\theta^j_0$ is proportional to $\zeta^j$, and hence $\theta^j_0 \in U$.

  Now fix $t > 1$ and suppose that $X_1, \dots, X_{t-1} \in U$.
  Then both $\sum_{s=1}^{t-1} X_s Y_s \in U$ and $\sum_{s=1}^{t-1} X_s \xi_s^j \in U$.
  Moreover, $V_{t-1} = \lambda I + \sum_{s=1}^{t-1} X_s X_s\tran$ maps $U \mapsto U$.
  Hence, its inverse $V_{t-1}^{-1}$ also maps $U \mapsto U$ because the restriction of $V_{t-1}$ to $U$ is invertible.

  It follows that $\hat\theta_{t-1}\in U$ and for each $j\in [m]$, $V_{t-1}^{-1} S_{t-1}^j \in U$. Hence,
  \[
    \theta^j_{t-1} = \hat\theta_{t-1} + \bar\gamma \beta_{t-1}^\delta V_{t-1}^{-1} S_{t-1}^j \in U
  \]
  also holds.
  Since the action set is $\cX = \Bd$, $X_t = \theta_{t-1}^{J_t}/\norm{\theta_{t-1}^{J_t}}$ or $X_t = 0$ almost surely, and so
  \[
     X_t \in \spn\{\theta_{t-1}^1, \dots, \theta^m_{t-1}\} \subset U\quad \text{almost surely.}
  \]
  The claim follows by induction.
\end{proof}

Next, we will need the upcoming corollary to this claim to move from spans to regret.

\begin{lemma}\label{lem:deterministic-subspace}
  Let $U \subset \Rd$ be a deterministic linear subspace with $\dim(U) = k \leq d/2$ and let $V$ be a random vector distributed uniformly on $\Sd$. Then
  \[
    \P{\norm{\Pi_U V}^2 \leq 1/2} \geq 1/2\,.
  \]
\end{lemma}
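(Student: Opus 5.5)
Our plan is to use the complementary subspace and symmetry. Let $U^\perp$ be the orthogonal complement of $U$, so $\dim(U^\perp)=d-k\ge d/2\ge k$. Since $V$ is a unit vector,
\[
\norm{\Pi_U V}^2+\norm{\Pi_{U^\perp}V}^2=1.
\]
Hence the event $\{\norm{\Pi_U V}^2\le 1/2\}$ is the same as the event $\{\norm{\Pi_U V}^2\le \norm{\Pi_{U^\perp}V}^2\}$. So it suffices to show that the projection onto a $k$-dimensional subspace is stochastically dominated by the projection onto a $(d-k)$-dimensional subspace, at least in the weak sense needed here.

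To make this concrete, represent $V=G/\norm{G}$ with $G\sim\mathcal N(0,I_d)$. By rotation invariance we may take $U=\spn\{e_1,\dots,e_k\}$. Then
\[
\norm{\Pi_UV}^2\le\tfrac12 \iff A:=\sum_{i\le k}G_i^2\le B:=\sum_{i>k}G_i^2 .
\]
Here $A\sim\chi^2_k$ and $B\sim\chi^2_{d-k}$ are independent.

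Split $B=B_1+B_2$, where $B_1$ is the sum of the first $k$ of the coordinates $G_{k+1},\dots,G_d$; this is possible because $d-k\ge k$. Then $B_1\sim\chi^2_k$ is independent of $A$ and identically distributed with it, and $B_2\ge0$. By exchangeability of the pair $(A,B_1)$ we have $\P{A\le B_1}=\P{B_1\le A}$. Since these two events cover the whole space, $\P{A\le B_1}\ge 1/2$. Therefore
\[
\P{A\le B}\ge\P{A\le B_1}\ge 1/2,
\]
which is exactly the claim.

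The only delicate point is the reduction from the uniform distribution on the sphere to Gaussians and the rotation to coordinate axes. Both are standard: $G/\norm{G}$ is uniform on $\Sd$, and the law of $G$ is invariant under orthogonal maps. No real obstacle is expected. The case $k=0$ is trivial.
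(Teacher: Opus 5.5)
Your proof is correct and follows the paper's argument: the Gaussian representation $V=G/\|G\|$, rotation to a coordinate subspace, reduction to $A\le B$ with independent $A\sim\chi^2_k$ and $B\sim\chi^2_{d-k}$, and splitting off a $\chi^2_k$ piece $B_1$ of $B$. Your exchangeability step giving $\Pr(A\le B_1)\ge 1/2$ is slightly more careful than the paper's stated equality $\Pr(A\le B_1)=1/2$, which needs the continuity of $\chi^2_k$ and fails (harmlessly, since the probability is then $1$) when $k=0$.
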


\begin{proof}
  Without loss of generality assume that $U=\spn\{e_1, \dots, e_k\}$. Write $V = G/\norm{G}$ for a standard normal vector $G$ on $\Rd$. Then
  \[
    \norm{\Pi_U V}^2 = \frac{\norm{\Pi_U G}^2}{\norm{G}^2} = \frac{\sum_{i=1}^k G_i^2}{\sum_{i=1}^k G_i^2 + \sum_{j=k+1}^d G_j^2} =: \frac{A}{A+B}\,,
  \]
  where $A \sim \chi^2_k$, $B \sim \chi^2_{d-k}$ and $A$ and $B$ are independent. Write $B = B_1 + B_2$ where $B_1 \sim \chi^2_k$ and $B_2 \sim \chi^2_{d-2k}$ are independent. Then
  \[
    \P[\bigg]{\frac{A}{A+B} \leq \frac{1}{2}} = \P{A \leq B} \geq \P{A \leq B_1} = 1/2\,,
  \]
  where the final equality uses that $A$ and $B_1$ are independent and have the same law.
\end{proof}

\begin{corollary}\label{lem:worst-direction-exists}
  Let $U \subset \Rd$ be any random linear subspace of $\Rd$ with $\dim(U) \leq d/2$. Then there exists a deterministic $\theta_\star \in \Sd$ such that
  \[
    \P{\norm{\Pi_U \theta_\star}^2 \leq 1/2} \geq 1/2\,,
  \]
  where $\Pi_U$ denotes the orthogonal projection onto $U$.
\end{corollary}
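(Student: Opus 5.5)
The plan is a probabilistic-method argument: draw $\theta_\star$ at random, independently of everything else, and then fix a good realisation. Let $V$ be uniform on $\Sd$, independent of the random subspace $U$; we may build it on a product extension of the space, which does not change the law of $U$. The idea is to average the conclusion of \cref{lem:deterministic-subspace} over $U$ and then extract a deterministic $\theta_\star$ from the averaged statement.

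First, condition on $U$. Since $V$ is independent of $U$, conditionally on $U=u$ the vector $V$ is still uniform on $\Sd$, and $u$ is a fixed subspace with $\dim(u)\le d/2$. \cref{lem:deterministic-subspace} then gives
\[
\P{\norm{\Pi_U V}^2\le 1/2 \mid U}\ge 1/2
\]
almost surely. If $\dim(u)<d/2$ this still applies, because the lemma only needs $k\le d/2$. Taking expectations yields $\P{\norm{\Pi_U V}^2\le 1/2}\ge 1/2$ for the joint law.

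Next, apply Fubini in the other order. Define $f(v):=\P{\norm{\Pi_U v}^2\le 1/2}$ for deterministic $v\in\Sd$. By independence, the joint probability equals $\E[f(V)]$, so $\E[f(V)]\ge 1/2$. A random variable cannot lie strictly below its mean with probability one, so there is some $v$ in the support with $f(v)\ge 1/2$. Setting $\theta_\star:=v$ gives the claim.

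The main obstacle is measure-theoretic. We need the map $(u,v)\mapsto \ind{\norm{\Pi_u v}^2\le 1/2}$ to be jointly measurable, so that Fubini and the regular conditional distribution of $V$ given $U$ are justified. To handle this, I would treat $U$ as a random variable in the Grassmannian, or equivalently work with the random projection matrix $\Pi_U$, which takes values in a closed subset of $\R^{d\times d}$. The map $(P,v)\mapsto \norm{Pv}^2$ is continuous, hence Borel, and this gives the needed measurability. Requiring that $\Pi_U$ be a random variable is the natural reading of ``random subspace''. In the application in \cref{thm:lower-bound}, $U=\spn\{\zeta^1,\dots,\zeta^m\}$, and $\Pi_U$ is a measurable function of the $\zeta^j$ via a pseudo-inverse formula. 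Since the law of $U$ depends only on $P_0$, the resulting $\theta_\star$ depends only on $P_0$, as required there.
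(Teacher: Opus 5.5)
Your proposal is correct and follows the paper's proof essentially step for step. You draw $V$ uniform on the sphere independently of $U$, apply the deterministic-subspace lemma conditionally on $U$ and average, then swap the order of integration to extract a deterministic $\theta_\star$. Your measurability remark, treating $\Pi_U$ as a random projection matrix so that Fubini applies, is a useful detail the paper leaves implicit.
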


\begin{proof}
  Let $V$ be a uniform random vector on $\Sd$ independent of $U$. Conditional on $U$, apply \cref{lem:deterministic-subspace} with $k = \dim(U) \leq d/2$ and take expectation to get that
  \[
    \P{\norm{\Pi_U V}^2 \leq 1/2} = \E_U[\P{\norm{\Pi_U V}^2 \leq 1/2 \mid U}] \geq 1/2\,.
  \]
  On the other hand, since we also have that
  \[
    \P{\norm{\Pi_U V}^2 \leq 1/2} = \E_V[\P{\norm{\Pi_U V}^2 \leq 1/2 \mid V}]\,.
  \]
  Therefore, there exists a $\theta_\star \in \Sd$ such that $\P{\norm{\Pi_U \theta_\star}^2 \leq 1/2} \geq 1/2$.
\end{proof}

\begin{proof}[Proof of \cref{thm:lower-bound}]
  Let $U =\spn\{\zeta^1, \dots, \zeta^m\}$ and let $\Pi_U$ denote the orthogonal projection onto $U$. Since $\dim U \leq m \leq d/2$, by \cref{lem:worst-direction-exists}, there exists a $\theta_\star \in \Sd$ such that $\P{\norm{\Pi_U \theta_\star}^2 \leq 1/2} \geq 1/2$. Take this $\theta_\star$ to be the instance parameter. On the aforementioned event, since $X_1, X_2, \dots \in U \cap \Bd$ almost surely (\cref{lemma:es-lower-bound}),
  \[
    R_n = n - \sum_{t=1}^n \langle X_t, \theta_\star \rangle \geq n(1 - \sup_{u \in U \cap \Bd} \langle u, \theta_\star \rangle) = n(1 - \norm{\Pi_U \theta_\star}) \geq (1-\sqrt{2}/2)n \geq n/4\,.\qedhere
  \]
\end{proof}

\printbibliography
\end{refsection}

\end{document}